\newcommand{\new}[1]{{\color{blue}}}
\newcommand{\bbP}{\mathbb{P}}
\newcommand{\bbR}{\mathbb{R}}
\newcommand{\bbN}{\mathbb{N}}
\newcommand{\bbE}{\mathbb{E}}
\newcommand{\bbC}{\mathbb{C}}
\newcommand{\calX}{\mathcal{X}}
\newcommand{\calY}{\mathcal{Y}}
\newcommand{\calQ}{\mathcal{Q}}
\newcommand{\KL}{\operatorname{KL}}
\renewcommand{\Pr}{\operatorname{Proj}}
\DeclareMathOperator*{\argmin}{arg\,min}
\newcommand{\myitem}[1]{%
\item[#1]\protected@edef\@currentlabel{#1}%
}
\definecolor{blue_plot}{RGB}{55,  126, 184}
\definecolor{orange_plot}{RGB}{255, 127, 0}
\definecolor{green_plot}{RGB}{77,  175, 74}
\definecolor{pink_plot}{RGB}{247, 129, 191}
\author{\name Veit D. Wild \email veit.wild@stats.ox.ac.uk \\
       \addr Department of Statistics, University of Oxford, UK \\
\name James Wu \email jswu18@gmail.com \\        
        \addr Department of Computer Science, University College London, UK;\\
        \addr Gridmatic Inc., Cupertino, CA 95014, USA\\
\name Dino Sejdinovic \email dino.sejdinovic@adelaide.edu.au  \\
       \addr School of Computer and Mathematical Sciences, University of Adelaide, AUS \\
\name Jeremias Knoblauch \email j.knoblauch@ucl.ac.uk  \\
       \addr Department of Statistical Sciences, University College London, UK
       }
\begin{document}

\title{Near-Optimal Approximations for \\ Bayesian Inference in Function Space}

       
\editor{}

\maketitle

\begin{abstract}

We propose a scalable inference algorithm for Bayes posteriors defined on a reproducing kernel Hilbert space (RKHS).
Given a likelihood function and a Gaussian random element representing the prior, the corresponding Bayes posterior measure $\Pi_{\text{B}}$ can be obtained as the stationary distribution of an RKHS-valued Langevin diffusion.
We approximate the infinite-dimensional Langevin diffusion via a projection onto the first $M$ components of the Kosambi–Karhunen–Loève expansion.
Exploiting the thus obtained approximate posterior for these $M$ components, we perform inference for $\Pi_{\text{B}}$ by relying on the law of total probability and a sufficiency assumption.
The resulting method scales as $O(M^3+JM^2)$, where $J$ is the number of samples produced from the posterior measure $\Pi_{\text{B}}$.
Interestingly, the algorithm  recovers the posterior arising from the sparse variational Gaussian process (SVGP) \citep{titsias2009variational} as a special case---owed to the fact that the sufficiency assumption underlies both methods.
However, whereas the SVGP is parametrically constrained to be a Gaussian process, our method is based on a non-parametric variational family $\mathcal{P}(\bbR^M)$ consisting of all  probability measures on $\bbR^M$.
As a result, our method is provably close to the optimal $M$-dimensional variational approximation of the Bayes posterior $\Pi_{\text{B}}$ in $\mathcal{P}(\bbR^M)$ for convex and Lipschitz continuous negative log likelihoods, and coincides with SVGP for the special case of a Gaussian error likelihood.
\end{abstract}

\section{Introduction}


Bayesian inference in function spaces crucially relies on the construction of a prior distributions on an infinite-dimensional function space. 
Gaussian processes (GPs) are the standard tool deployed for this purpose, and have proven a valuable framework for principled uncertainty quantification over the past few decades \citep{rasmussen2003gaussian}. 
A notable shortcoming of GPs is their cubic computational cost with respect to data size, and their intractability for the case of non-Gaussian likelihoods.
Numerous approaches have been proposed to overcome these challenges \citep[see e.g.][]{gneiting2002compactly,quinonero2005unifying,chalupka2013framework,wilson2015kernel,gardner2018gpytorch,wang2019exact,liu2020gaussian}.
Amongst them, sparse variational Gaussian processes (SVGPs) \citep{titsias2009variational} arguably are considered to be the gold standard for scalable GP approximations. 
SVGPs rely on the introduction of inducing features that are assumed to follow a Gaussian distribution with learnable variational parameters, and which can be interpreted as an approximate low-dimensional summary of the posterior GP. 
To learn these variational parameters, one then typically performs gradient based maximisation of the Evidence Lower Bound (ELBO) \citep[see e.g.][]{hensman2013gaussian,hensman2015scalable}. 

While SVGPs often perform surprisingly well in practice, their approximation quality is  inadequate whenever the Gaussianity imposed upon the inducing points is too restrictive.
In the hope of overcoming this shortcoming, this paper explores the possibility of performing Bayesian inference by using a version of gradient descent \textit{directly} in the space of probability measures---rather than on variational parameters indexing the family of approximating measures \citep[see][]{wild2023rigorous}.
We achieve this in three steps:
\begin{itemize}
    \item[1.] In \Cref{sec:WGF-for-FSI}, we derive the required Wasserstein Gradient Flow (WGF) \citep{otto2001geometry,ambrosio2005gradient} as a natural analogue for gradient descent over the space of probability measures;
    \item[2.] We augment the resulting infinite-dimensional diffusion via an $M$-dimensional projection operation in \Cref{sec:projected-langevin} to make it computationally tractable;
    \item[3.] We study the consequences of this projection in \Cref{sec:theory}, and find that relative to the theoretically optimal approximation (see \Cref{thm:characterisation-optimal-approximation}), the proposed algorithm is close in Kullback-Leibler divergence (\Cref{thm:PLS-optimal-close}), with a discrepancy that vanishes at least as $\mathcal{O}(M^{-1})$, and even as $\mathcal{O}(\exp(-M))$ for certain special cases of interest (\Cref{lemma:decay-eigenvalues}). 
\end{itemize}   
Before we state the results in full detail, \Cref{sec:setting} elaborates on the setting studied in this paper, and \Cref{sec:road_map} provides a high-level summary of our results.

\section{Bayesian Inference in Function Space}
\label{sec:setting}

Throughout, we will be concerned with Bayesian inference in a function space.
To this end, we have to assign a prior distribution over an  unknown function $f$.
This function $f$ acts as the infinite-dimensional parameter of the likelihood function $p(y_{1:N}|f)$ which we use to describe the distribution of $N$ observations $y_{1:N}=(y_1,\hdots,y_N) \in \bbR^N$.

\subsection{Computation for GP Regression}

Generally speaking, Bayesian inference in function spaces is computationally challenging.
As a result, the most widely studied special case of this setting is Gaussian process (GP) regression \citep{rasmussen2003gaussian}, which leads to comparatively tractable inference algorithms.
As indicated by its name, GP regression is characterised by placing a GP prior over the function $f$.
Most commonly, the likelihood in GP regression is also Gaussian, resulting in the log likelihood
%
%
\begin{align}
    \log p(y_{1:N}|f) =  \sum_{n=1}^N \log \mathcal{N}\big(y_n ;  f(x_n), \sigma^2\big). 
    \nonumber
\end{align}
Here, $\sigma^2 >0$ denotes the variance in the observation noise, $x_{1:N}=(x_1,\hdots,x_N) \in \calX^N$  are known input features, and $\mathcal{N}(y; \mu, \sigma^2)$ denotes a Gaussian density with mean $\mu$ and variance $\sigma^2$ evaluated at $y$. 
Given the GP prior on $f$ and a likelihood such as the above, GP regression finds the Bayes posterior over $f$ given  $y_{1:N}$. 

For the special case of Gaussian likelihoods, the Bayes posterior is available in closed form, and computable via basic linear algebra operations.
While everything would seem to be fine in this setting, there is an important complication: even in this best case scenario, the computational burden of obtaining the exact GP posterior is  of order $\mathcal{O}(N^3)$, and thus prohibitive for even moderately large sample sizes.
As a result, the Bayes posterior will almost always have to be approximated in practice---even if closed forms are available.
Arguably the most common way of doing this is via sparse variational GPs (SVGPs) first proposed by \citet{titsias2009variational}.
SVGPs provide a low-level stochastic summary of the GP using so-called \textit{inducing features}.
These features are assumed to follow a Gaussian distribution with finite-dimensional variational parameters that are typically learnt via gradient-based variational inference techniques \citep[see e.g.][]{hensman2013gaussian,hensman2015scalable}. 
While SVGPs can  perform  well in practice, they also often lead to subpar performance.
One of the main reason is that the Gaussianity assumed for the inducing features is too restrictive, and leads to poor approximations.
More generally speaking,
SVGPs do not provide universal guarantees for their approximation quality, and are really only appropriate if the exact posterior is itself a Gaussian process or close to it.

\subsection{Beyond GPs and SVGPs}

GP regression and its sparse variational approximations are computationally viable but restrictive.
In this paper, we aim at building a more general framework for function space inference.
To this end, we apply the Wasserstein Gradient Flow (WGF) for a variational reformulation of Bayesian inference (see \Cref{sec:opt-cen-perspec}), resulting in an infinite-dimensional version of gradient descent over the space of probability measures on the desired function space $H$.
Like SVGP, this is a variational method. But unlike SVGP, it is \textit{not} parametrically constrained to be a Gaussian process. 
%
As a result, one may hope that the resulting algorithms can provide superior approximation quality---a suspicion we will verify in \Cref{sec:theory}.
Applying the WGF will enforce an important distinction to a GP-centred view on function space inference: to construct it, we require explicit knowledge of the function space $H$ over which the prior and posterior measures are defined.\footnote{
    The necessary calculations exploit various concepts from infinite-dimensional analysis such as Fréchet derivatives or the integration by parts for Gaussian measures on Hilbert spaces (cf. Appendix \ref{ap:sec:WGF}), and thus necessitate precise understanding of the underlying function space $H$. 
}
This is in stark contrast to Bayesian inference schemes with GPs, which only rely on evaluating $f$ at a finite set of points \citep{rasmussen2003gaussian}, and therefore are agnostic to the exact function space $H$ which draws from the GP realise in. 
Consequently, the family of distributions we consider throughout will be Gaussian Random Elements \citep[see][]{van2008stochastic}: unlike for GPs,
the function space on which draws from a Gaussian Random Element (GRE) realise are explicitly part of their definition.

While GREs and GPs are different families of 
measures over function spaces, they have important connections---and we use these to link a sub-class of GREs to GPs in \Cref{sec:cov-op-gp-interpretation}. 
Importantly, our inference algorithm does \textit{not} rely on this connection: it can be employed for Bayesian computation of a class of GP posteriors, but is applicable to more general GRE posteriors  \textit{beyond} GPs.
Thus, we mainly provide the  connection to GPs to make prior specification easier and posterior inferences more interpretable.
Additionally, this will
allow us a natural way for comparing our (more general) approach with existing (more narrowly defined) function space inference algorithms such as SVGPs.
%
%
This special case is particularly interesting  
%
because it leads to a set of optimality results for our method in \Cref{sec:theory}.
In the language of SVGPs, the optimality results show that amongst all possible posteriors that condition on the chosen inducing features, our procedure targets the one that is closest to the exact posterior in Kullback-Leibler divergence up to a vanishingly small error.

\subsection{Contributions}
\label{sec:road_map}

The paper is organised in three  sections, each of which relates to three interrelated contributions: \Cref{sec:WGF-for-FSI} studies the WGF, which is used in \Cref{sec:projected-langevin} to derive a new tractable algorithm for function space inference that we study in \Cref{sec:theory}.
We provide a high-level overview of each of these below.
\begin{itemize}
    \item
    First, we derive the \textbf{WGF} for the optimisation-centric reformulation of Bayesian \textbf{function space inference}, which is discussed in \Cref{sec:WGF-for-FSI}.
    To implement the WGF numerically and allow for a practicable implementation, we  require certain Fr\'echet derivatives to exist, which---perhaps surprisingly---implies that the function space $H$ in which draws from the posterior are realised is a reproducing kernel Hilbert space (RKHS) $H_k$ associated to some kernel $k$.
    At first glance, this may seem to suggest that it would be natural to specify priors and posteriors to be GPs associated to the kernel $k$, but this is a misconception:  Driscoll's Theorem \citep{Driscoll1973} shows that samples from GPs associated to $k$ will realise \textit{outside} of $H_k$ with probability one, whenever $H_k$ is infinite-dimensional \citep[see also][]{steinwart2024does}.
    Therefore, we consider GREs in $H_k$ instead---a class of priors in function spaces that have traditionally been underexplored for Bayesian inference; albeit with notable exceptions \citep{Stuart2010,wild2022variational,wild2023rigorous}.
    With this in place, we finally obtain a diffusion on $H_k$ whose stationary distribution is the targeted Bayes posterior $\Pi_{\text{B}}$.
    \item
    Second, we perform \textbf{Bayesian computation by projection} based on the infinite-dimensional Langevin stochastic differential equation (SDE) that implements the dynamics arising from the WGF.
    In particular, as elements of $H_k$ can generally not be represented numerically without approximation, \Cref{sec:projected-langevin} provides a projected $M$-dimensional representation of this infinite-dimensional diffusion and uses it to construct an inference algorithm that scales as $\mathcal{O}(M^3)$.
    Specifically, we orthogonally project onto the first $M$ coefficients of the Kosambi-Karhunen-Loève expansion of our function, which corresponds to representing the infinite-dimensional object through an $M$-dimensional subspace.
    While various other subspaces might be considered, 
    orthogonal projection is not only a natural choice, but also turns out to result in near-optimal posterior approximations (see \Cref{sec:theory}).
    %
    %
    We call the resulting inference algorithm Projected Langevin Sampling (PLS). PLS consists of first simulating from this projected finite-dimensional diffusion on $\mathbb{R}^M$, and then probabilistically mapping back 
    into the infinite-dimensional $H_k$ via the law of total probability.
    \item
    Third, we derive \textbf{theoretical guarantees} of PLS, \text{and} shed light on its \textbf{relationship with SVGPs} in  \Cref{sec:theory}.
    To start with, we can show that PLS provides near-optimal variational approximations.
    Next---and despite being a sampling algorithm---PLS is also a variational inference scheme.
    Notably, PLS approximately samples from a nonparametric variational posterior that is strictly more expressive than the corresponding parametrically constrained Gaussian variational posterior of SVGPs.
    This allows PLS to target a strictly better posterior approximation than SVGPs, except for the special case of Gaussian likelihoods, for which PLS and SVGPs target the same (Gaussian) approximation.
    Unlike SVGPs, PLS can also be used for arbitrary likelihoods without requiring that their integrals with respect to variationally parameterised Gaussian measures are tractable or easy to approximate.
    %
\end{itemize}

\section{Wasserstein Gradient Flow (WGF) for Functional Inference}\label{sec:WGF-for-FSI}

The first step in building our new inference algorithm is the application of the Wasserstein Gradient Flow (WGF) to an optimisation problem that fully characterises a functional Bayes posterior.
To this end, we first define prerequisite notations and concepts in \Cref{sec:WGF:prelim}, and the optimisation problem to which the WGF is applied in \Cref{sec:opt-cen-perspec}.
Next, we draw instructive parallels between the WGF and gradient descent on measure spaces in \Cref{sec:WGF:GD-in-P2}, and state the SDE obtained as a limiting case of this gradient descent scheme in \Cref{sec:WGF:SDE}.
While all developments to this point apply for function space inference on arbitrary Hilbert spaces $H$, \Cref{sec:BKR-BKC} explains why computational feasibility demands that one should choose an RKHS $H=H_k$ based on a kernel $k$, and explains how to specify GRE priors on such spaces.

\subsection{Preliminaries \& Notations}
\label{sec:WGF:prelim}

On a technical level, a GRE $F$ is simply a random variable with realisations on a function space $H$.
What makes GREs attractive is  their Gaussianity:  they are fully specified
through a mean function $m \in H$ and a covariance operator $C: H \to H$ (a positive, self-adjoint, trace-class operator). 
Similarly to commonly employed notation for GPs, we thus write $F \sim \mathcal{N}(m,C)$ to indicate that $F$ is a GRE with mean function $m$ and covariance operator $C: H \to H$ (cf. Appendix \ref{sec:ap:technical-background} for a brief introduction to GREs in Hilbert spaces).

By definition, it is clear that GREs have a finite second moment. Further, and as we prove in Appendix \ref{ap:sec:moments-posterior}, Bayes posteriors depending on GRE priors are also guaranteed to have finite second moment.
To rigorously define the larger set of all measures with finite second moment on $H$, we let $\mathcal{B}(H)$ be the Borel $\sigma$-algebra on $H$, and then define the set of Borel probability measures on $H$ with finite second moment as
$$
\mathcal{P}_2(H) := \left\{ \mu : \mathcal{B}(H) \to [0, 1] \, \big| \; \mu(H)=1,\,\int_H || u ||^2 \, d\mu(u) < \infty  \right\}.
$$ 
For the application of the WGF, this set of measures is crucial: we can metrize it with the 2-Wasserstein metric distance, which for $\Pi,Q \in \mathcal{P}_2(H)$ is defined as
\begin{IEEEeqnarray}{rCl}
    W_2(\Pi,Q)^2 := \inf_{\gamma \in \mathcal{C}(\Pi,Q)} \int_{H
    \times H} ||h - h'||^2 \, d \gamma(h, h'),
    \nonumber
\end{IEEEeqnarray}
where $\mathcal{C}(\Pi,Q) \subset \mathcal{P}_2(H \times H)$ is the set of all probability measures on $\mathcal{B}(H \times H)$ whose marginals are $\Pi$ and $Q$, so that $\gamma(A \times H) = \Pi(A)$ and $\gamma(H \times B) = Q(B)$ for all $A, B \in \mathcal{B}(H)$ \citep[cf.][ Chapter 6]{ambrosio2005gradient}.

\subsection{Optimisation-centric perspectives on Bayesian inference}
\label{sec:opt-cen-perspec}

It is well-known that the Bayes posterior is the solution to a particular optimisation problem, both for the parametric  and the non-parametric case \citep[e.g., Theorem 1 in ][Appendix A.1]{knoblauch2019generalized, wild2022generalized}. 
Our WGF-based algorithmic perspective builds on this.
In particular, the WGF implements a form of gradient descent on the objective underlying this optimisation problem.
To state the objective under study, let $\Pi:=\mathcal{N}(0,C)$ be a GRE prior on the function space $H$, and $\ell(f) := -\log p(y_{1:N}|f)$ an arbitrary negative log likelihood function.\footnote{Here, $p(y_{1:N}|f)$ is called a likelihood function if $(y_{1:N},f) \in \bbR^N \times H \to p(y_{1:N}|f) \in [0, \infty)$ is product-measurable, and if  $ \int_{\bbR^N} p(y_{1:N}|f) \, d\mu(y_{1:N}) =1 $ for all fixed $f \in H$, and for $\mu$ a base measure on $\bbR^N$---such as the counting measure or Lebesgue measure.}
%
%
With this, one obtains the Bayes posterior $\Pi_{\text{B}}$ as 
\begin{IEEEeqnarray}{rCl}
    \Pi_{\text{B}} = \argmin_{Q \in \mathcal{P}_2(H)} \underbrace{\int \ell(f) \, dQ(f) + \KL(Q,\Pi)}_{=:L(Q)},
    \label{eq:vi-loss}
\end{IEEEeqnarray}
where 
$\KL(Q,\Pi)$ denotes the Kullback-Leibler divergence between the measures $Q$ and $\Pi$.\footnote{
 This optimisation problem is more commonly stated over the space of \textit{all} Borel probability measures on $H$, often denoted $\mathcal{P}(H)$. 
 As we show in \Cref{ap:sec:moments-posterior} however, the GRE prior guarantees that even over this larger space, the  solution always lies in $\mathcal{P}_2(H)$.
} 

This optimisation-centric perspective has many methodological and theoretical uses.
For example, it shows that conventional variational inference relates to full Bayesian inference like constrained to unconstrained optimisation \citep[see Theorem 2 in ][]{knoblauch2019generalized}.
To see this, simply note that conventional variational inference consists in choosing a family of measures $\calQ \subset \mathcal{P}(H)$ that has a finite-dimensional Euclidean parameterisation, and then computing the variational Bayes posterior  as $\widehat{Q} = \argmin_{Q \in \mathcal{Q}} L(Q)$ through stochastic gradient descent \citep[e.g.][]{titsias2014doubly} or other optimisation techniques.
In the specific context of Gaussian processes (GPs), such variational Bayesian approaches were pioneered by \citet{titsias2009variational}, further developed in \citet{matthews2016sparse}, and  adapted to Gaussian Random Elements (GREs) in Banach spaces by \citet{wild2022variational}.

The current paper also develops variationally Bayesian methodology, but refrains from the conventional approach: instead of choosing a parameterised family of distributions $\calQ$ and performing gradient descent on its finite-dimensional parameters,  gradient descent is performed \textit{directly} on the infinite-dimensional space $\mathcal{P}_2(H)$.
Unlike parameterised families $\calQ$, this allows for nonparametric variational posteriors, and results in much greater flexibility.
%
%
On a technical level, our methodology amounts to a computationally efficient implementation of the WGF for $L(Q)$, and transfers the insights of \citet{wild2023rigorous} to inference on measures over functions $f \in H$.
While we do not explore the case where the Kullback-Leibler divergence in \eqref{eq:vi-loss} is replaced with other regularisers, this setting was explored for finite-dimensional parameter spaces in \citet{wild2023rigorous}, and the results therein could be extended to function spaces.

\subsection[Gradient descent in the 2-Wasserstein space]{Gradient descent in $\mathcal{P}_2(H)$} 
\label{sec:WGF:GD-in-P2}

%
The intuition behind our application of the WGF is to  use the functional $Q \mapsto L(Q)$  in \eqref{eq:vi-loss} to find the minimiser $\Pi_{\text{B}}$ by performing a type of gradient descent over  $\mathcal{P}_2(H)$.
In particular and as outlined in Chapter 2 of \citet{ambrosio2005gradient}, we can pick a starting point $Q_0 \in \mathcal{P}_2(H)$, and then iteratively update according to 
\begin{IEEEeqnarray}{rCl}
    Q_{k+1} := \argmin_{Q \in \mathcal{P}_2(H)} \big\{ L(Q) + \frac{1}{2 \eta} W_2(Q, Q_k)^2 \big\},
    \nonumber
\end{IEEEeqnarray}
for $k \in \bbN$ and $\eta >0$ a sufficiently small step-size.\footnote{This is similar to gradient descent in Euclidean spaces: for a fixed and sufficiently small $\eta > 0$ and a function $\ell$ to be minimised, we can rewrite the $(k+1)$-th iterate of ordinary gradient descent as $\theta_{k+1} = \theta_{k} - \eta \nabla \ell(\theta_{k}) = \argmin_{\theta \in \Theta}\left\{ \ell(\theta) + \frac{1}{2\eta} \|\theta - \theta_{k}\|_2^2 \right\}$.  } 
If $L$ is sufficiently regular and we let $\eta \to 0$, the continuously indexed family $\big( Q_{\lfloor t/\eta \rfloor} \big)_{t \ge 0}$ converges to the limit $\big( Q(t) \big)_{t \ge0 } \subset \mathcal{P}_2(H)$ with $Q(0)=Q_0$ \citep[Chapter 11.1.3]{ambrosio2005gradient}.
This limiting evolution of measures  $\big( Q(t) \big)_{t \ge 0}$ is called the Wasserstein gradient flow (WGF) for $L$ starting at $Q_0$. 
It has obvious parallels with gradient descent, and is directly interpretable as its infinite-dimensional analogue on $\mathcal{P}_2(H)$: at every infinitesimally small step forward in time $t$,  $Q(t)$ moves in the direction amounting to the biggest decrease in the value for $Q \mapsto L(Q)$, and we would hope that the optimum is recovered in the limit so that $\Pi_{\text{B}} = \lim_{t\to\infty}Q(t)$.

\subsection{Following the WGF via the Langevin SDE in Hilbert space}
\label{sec:WGF:SDE}
Following Theorem 8.3.1 in \citet[]{ambrosio2005gradient},  the WGF $\big( Q(t) \big)_{t > 0}$ for functional $Q \mapsto L(Q)$ in \eqref{eq:vi-loss} satisfies for all test functions\footnote{More precisely, the relevant class of test functions are the smooth cylindrical functions, denoted $\text{Cyl}\big((H \times [0,T]\big)$ \citep[Definition 5.1.11]{ambrosio2005gradient}. By definition, $\varphi \in \text{Cyl}\big((H \times [0,T]\big) $ if and only if there exists $N \in \bbN$ and orthonormal vectors $v_1, \hdots, v_N \in H$ such that $\varphi(f) = \psi\big(  \langle f, v_1 \rangle, \hdots, \langle f, v_N \rangle, t\big) $ where $\psi: \bbR^N \times [0,T] \to \bbR$ is a smooth function with compact support.
} $\varphi: [0,T]\times H \to \bbR$ the equation
\begin{IEEEeqnarray}{rCl}
    \int_0^T \int_H \partial_t \varphi(t,f) -  \big\langle \nabla_W L[Q(t)](f), D \varphi(t,f) \big\rangle_{H} \, d Q(t)(f) \, dt = 0, \label{eq:wgf-characterisation}
\end{IEEEeqnarray}
where $D \varphi(t,f)$ denotes the Fréchet derivative of $\varphi$ with respect to  $f$, and $\nabla_W L[Q]: H \to H$ is the so-called \textit{Wasserstein gradient} of $L$ evaluated at $Q \in \mathcal{P}_2(H)$. 
Obtaining an explicit form for this Wasserstein gradient is essential in deriving the dynamics governing $(Q(t))_{t\geq 0}$, and we derive it in \Cref{thm:WGF} (cf. \Cref{ap:sec:WGF}) via the set of equations given by the characterisation of $(Q(t))_{t\geq 0}$ in \eqref{eq:wgf-characterisation}. 
In so doing, we find that for \eqref{eq:vi-loss}, we get
%
\begin{IEEEeqnarray}{rCl}
    \nabla_W L[Q](f) = D\ell(f) + D (\log q) (f) = D \ell(f) + \frac{D q(f)}{q(f)},
    \label{wg:kl-divergence}
\end{IEEEeqnarray}
where $ q := dQ/d\Pi : H \to \bbR$ is the Radon-Nikodym derivative of $Q$ with respect to the prior measure $\Pi$. 
Plugging \eqref{wg:kl-divergence} into \eqref{eq:wgf-characterisation}, we can identify the law of a stochastic process 
$\big(F(t)\big)_{t>0}$ with $F(t) \in H$ for which $F(t) \sim Q(t)$ for all $t \geq 0$ (cf. Theorem \ref{thm:WG-KLD} in Appendix \ref{ap:sec:WGF}).  
In particular, we find that $\big(F(t)\big)_{t >0}$ is the solution to the infinite-dimensional version of the Langevin Stochastic Differential Equation (SDE) given by
\begin{IEEEeqnarray}{rCl}
    d F(t) = -\left(  D \ell \big( F(t) \big)  + C^{-1} F(t) \right) dt + \sqrt{2} d W(t). \label{eq:langevin-infinite-dim}
\end{IEEEeqnarray}
Here, $\big( W(t) \big)_{t \ge 0}$ is a cylindrical Brownian motion in $H$ and $C^{-1}: \text{Im}(C) \subset H \to H$ is the inverse of $C$.\footnote{ The operator $C^{-1}$ is unbounded and given by $C^{-1} f = \sum_{n=1}^\infty \lambda_n^{-1} \langle f, e_n \rangle e_n $ where $\{ \lambda_n, e_n\}_{n=1}^\infty$ are the eigenvalue-eigenvector pairs obtained from the spectral decomposition of the self-adjoint trace class covariance operator $C$. }
This SDE was first introduced in \citet{hairer2005analysis}, and the existence, uniqueness and ergodic properties of its solution are discussed in \citet{hairer2007analysis}.
\citet{hairer2011signal} provides a more accessible treatment on the subject, and \citet{da2014stochastic} provides a more general discussion of SDEs in Hilbert spaces.

For our purposes, we can thankfully gloss over many of the technical aspects of dealing with this infinite-dimensional SDE.
Instead, our main reason for introducing \eqref{eq:langevin-infinite-dim} will be its use as the driving engine for our methodology. 
To this end, it is instructive to note the  parallels with the finite-dimensional case: 
if we consider \eqref{eq:vi-loss} for $H=\bbR^J$, we recover a Bayes posterior defined on a finite-dimensional parameter $\theta \in \bbR^J$. 
Writing the associated likelihood function as $p(y_{1:N}|\theta)$, and the corresponding Gaussian prior density as $p(\theta) = \mathcal{N}(\theta; 0, \Sigma)$, the resulting WGF---now defined on $\mathcal{P}_2(\bbR^J)$---yields a measure evolution that corresponds to the well-known finite-dimensional Langevin SDE 
\begin{IEEEeqnarray}{rCl}
    d \theta(t) = - \left(\nabla\ell_N\big(\theta(t)\big) +\Sigma^{-1} \theta(t) \right) dt + \sqrt{2} d\beta(t), \label{eq:langevin-sde-fidi}
\end{IEEEeqnarray}
where $\ell_N(\theta):= - \log p(y_{1:N}|\theta)$ and $\big( \beta(t) \big)_{t\ge0}$ is the standard Brownian motion in $\bbR^J$ \citep{jordan1998variational,otto2001geometry}. 
This finite-dimensional Langevin SDE is the basis of many classical algorithms for Bayesian computation  since its stationary distribution for $\theta(t)$ as $t\to\infty$ recovers the Bayes posterior  \citep[see e.g.][]{roberts1996exponential,welling2011bayesian}.
In other words, and exactly as for the infinite-dimensional case, one way of motivating \textit{why} this diffusion recovers the Bayes posterior is by interpreting it as the natural analogue of gradient descent on \eqref{eq:vi-loss} for the special case of $H = \bbR^J$.
Just as \eqref{eq:langevin-sde-fidi} is the inspiration for many seminal algorithms for computing  Bayes posteriors over finite-dimensional parameters, \eqref{eq:langevin-infinite-dim} can be used to generate samples from the posterior measure $\Pi_{\text{B}}$ that solves the optimisation problem \eqref{eq:vi-loss} with infinite-dimensional function spaces $H$.
However, the infinite dimensionality of $H$ introduces an additional complication: the Langevin SDE in \eqref{eq:langevin-infinite-dim} generally cannot be represented on a computer. 
Fortunately, we can overcome this with parsimonious numerical approximations (cf. Section \ref{sec:projected-langevin}).

\subsection[Choosing the Hilbert Space and the Prior]{Choosing Function Space $H$ and Prior $\Pi$}
\label{sec:BKR-BKC}

So far, our developments are valid for likelihoods and GREs on general Hilbert spaces.
In particular, the Langevin SDE in \eqref{eq:langevin-infinite-dim} in principle  may produce posterior inferences over general function spaces $H$ as long as they are Hilbert spaces. 
As we will explain next however, the seemingly innocuous requirement that $\ell$ be Fréchet differentiable will necessitate the assumption that $H$ is a reproducing kernel Hilbert space (RKHS) for virtually all likelihood functions of practical interest (cf. Appendix \ref{ap:sec:RKHS} for  basic properties of an RKHS).

\subsubsection{The Inevitability of the RKHS }\label{sec:loss}

Implementing the Langevin SDE in \eqref{eq:langevin-infinite-dim} requires calculation of the Wasserstein gradient in \eqref{wg:kl-divergence}, and therefore of the Fréchet derivative $D 
\ell(f)$ of $\ell :H \to \bbR$.
For simplicity, consider negative log-likelihood functions  which for a cost function $c: \calY \times \calY \to \bbR$ can be written as
\begin{IEEEeqnarray}{rCl}
    \ell(f) & =&  \sum_{n=1}^N c\big(y_n, f(x_n) \big) \label{eq:loss-pointwise} + \text{ constant},
\end{IEEEeqnarray}
where $c\big(y_n, f(x_n) \big)$ measures the discrepancy between an observation $y_n$ and the prediction $f(x_n)$ for $y_n$, and where the constant  does not depend on $f$. 
For example, the Gaussian likelihood
corresponds to  $c(y_n,f(x_n)) = \frac{1}{2 \sigma^2} \big( y_n - f(x_n) \big)^2$.
Another example is the Bernoulli likelihood for binary classification, which is
\begin{IEEEeqnarray}{rCl}
    p(y_n|f) & = &  \phi \big( f(x_n)\big)^{y_n} \cdot \Big(1- \phi\big( f(x_n) \big)\Big)^{1-y_n}.
    \label{eq:bernoulli-model}
\end{IEEEeqnarray}
In this case, $y_n \in \{0,1\}$, so that the associated negative log likelihood $\ell(f)$  is of the form \eqref{eq:loss-pointwise} with $c(y_n, f(x_n)) = -y_n \log \phi(f(x_n))  - (1-y_n)\log (1-\phi(f(x_n)))$.
Here,
$\phi: \bbR \to [0,1]$ is a  mapping that transforms latent functional outputs into probabilities such as the the logistic function $\phi_{\operatorname{logistic}}(f(x_n)) = \big(1+\exp(-f(x_n)) \big)^{-1}$. 

For any negative log likelihood of form  \eqref{eq:loss-pointwise}, we can apply the chain rule and obtain the corresponding Fréchet derivative  as 
\begin{IEEEeqnarray}{rCl}
D \ell(f) = \sum_{n=1}^N (\partial_2 c)\big( y_n, f(x_n)\big) D(s_n)(f), \label{eq:frechet-derivative-for-RKHS}
\end{IEEEeqnarray}
where we have written $s_n(f) := f(x_n)$ as the point-wise evaluation functional for $f \in H$ at $x_n$,  $\partial_2 c$ as the derivative of $c$ with respect to its second component, and $D(
s_n)$ as the Fréchet derivative of $
s_n$.
Inspecting the expression, it is now clear that we need to assume that all $s_n$ are Fréchet differentiable and therefore continuous.
However, demanding continuity of all pointwise evaluation functionals $s_n(f)$ for all $f \in H$ is in fact equivalent to $H$ being an RKHS \citep[Theorem 1]{Berlinet2004}.

In summary, even when attention is restricted to the simple sub-class of likelihood functions based   on pointwise evaluations of $f$ as in \eqref{eq:loss-pointwise},  the Wasserstein gradient in \eqref{wg:kl-divergence} needed to evolve the infinite-dimensional Langevin SDE in \eqref{eq:langevin-infinite-dim} will \textit{only} have the Fr\'echet derivatives necessary for implementing it numerically if $H$ is an RKHS.
Throughout the remainder of the paper, we therefore take $H$ to be an RKHS $H_k$ associated with the reproducing kernel function $k: \mathcal{X} \times \mathcal{X} \to \mathbb{R}$.
Thanks to the widely celebrated reproducing property of RKHS functions, this implies that $s_n(f) = \langle f, k_n \rangle$ for $k_n = k(x_n, \cdot) \in H_k$, so that the required Fréchet derivative in \eqref{eq:frechet-derivative-for-RKHS} is $D (s_n)=k_n$, and
$
    D \ell(f) = \sum_{n=1}^N (\partial_2 c)\big( y_n, f(x_n)\big) k_n .
$

\subsubsection{Covariance Operators and Gaussian processes}\label{sec:cov-op-gp-interpretation}

While choosing GRE priors may seem like an intractably difficult proposition, one can draw clear analogies and parallels to the design of Gaussian process (GP) priors.
We will leverage these connections to design our GRE priors in the current paper.
To this end, note that 
the GRE prior $\Pi :=\mathcal{N}(0,C)$ is specified by a covariance operator $C: H_k \to H_k$ that encodes prior knowledge about the unknown function. 
In this paper, we specify $C$ through a kernel function $k: \calX \times \calX \to \bbR$ and a probability measure $\nu \in \mathcal{P}(\calX)$ as 
\begin{IEEEeqnarray}{rCl}
    Cf & = & \int k(\cdot, x') \, f(x') \, d\nu(x')
    \label{eq:def-cov-op}
\end{IEEEeqnarray}
for all $f \in H_k$. 
In Appendix \ref{ap:sec:GRE_in_RKHS}, we show that constructing $C$ in this way indeed leads to a valid covariance operator on $H_k$ under mild assumptions on $\nu$ and $k$.\footnote{A GRE $F \sim \mathcal{N}(0,C)$ exists if and only if $C$ is a positive, self adjoint, trace-class operator \citep[Section 2.3.1]{da2014stochastic}. }
Every GRE $F \sim \mathcal{N}(0,C)$ on $H_k$ naturally induces a GP through the definition ${F}_{\operatorname{GP}}(x) := \langle F , k(x,\cdot) \rangle $ for all $x \in \calX$. 
${F}_{\operatorname{GP}}$ resulting from this construction is endowed with kernel $r$ (cf. Lemma \ref{lemma:GRE-GP-analogy} in Appendix \ref{ap:sec:GRE_in_RKHS}) given by
\begin{IEEEeqnarray}{rCl}
    r(x,x'):= \int k(x,\xi) k(\xi,x') \, d\nu(\xi)
    \label{eq:def-r-kernel}
\end{IEEEeqnarray}
for all $x,x' \in \calX$. 
This connection is practically useful, since it is well-understood how to specify interpretable GP function priors   \citep[see e.g.][]{rasmussen2003gaussian,duvenaud2014kernel}.
Therefore, one can specify interpretable GRE priors via the GP induced by $r$.\footnote{
Note that the GP prior $F \sim \text{GP}(0,k)$ does \textit{not} define a prior on $H_k$: by Dricoll's Theorem \citep{Driscoll1973}, the sample paths of such a GP would be almost surely not be contained in $H_k$.
}
The correspondence between $C$ and $r$ also elucidates the role of $\nu$:
the similarity between two points $x$ and $x'$ is measured by the product of their pairwise similarities with a third point $\xi$ as $k(x,\xi) k(\xi, x')$, and this product is  averaged by $\nu$ over $\xi$. 
This construction has appeared in the literature before \citep[see e.g.][]{dance2024spectral}, and can lead to poor function priors if $\nu$ is very different from the data generating process $\text{Law}[X_1]$ of $x_1,\hdots,x_N$ (see Appendix \ref{ap:sec:the-role-of-nu} for a more extensive discussion). 
To this end, we take $\nu = \text{Law}[X_1]$,\footnote{Using the data-generating distribution of $x_1, \dots x_N$ as our choice for $\nu$ has the interesting effect of making the kernel choice $r$ data-dependent. This is akin to the logic of empirical Bayes, and fine-tunes the GRE prior $F$ to the data at hand. }, which is generally unavailable to us, so that we need to estimate it.
The easiest way to do this---and the approach we use in our experiments of \Cref{section:experiments}---is through the empirical measure $\frac{1}{N}\sum_{i=1}^N \delta_{x_i}$ depending on the $N$ available data points.
In practice, this has the immediate effect of making us operate on an $N$-dimensional (rather than an infinite-dimensional) space, but neither compromises the general developments in the remainder---all of which are spelt out in largest generality and for the infinite-dimensional case---nor affects the practicality of the $M$-dimensional posterior approximations for $M \ll N$ that we develop.

\section{Projected Diffusions for Tractable Posterior Inference}\label{sec:projected-langevin}

Having laid out how the GRE prior should be designed, we are ready to deploy the infinite-dimensional Langevin SDE in \eqref{eq:langevin-infinite-dim} to produce samples from Bayes posteriors over $H_k$ that are specified via \eqref{eq:vi-loss} and rely on GRE priors.
%
But without any further numerical approximation, this cannot be done: 
$\big( F(t) \big)_{t \ge 0}$ is defined on an RKHS $H_k$, so there is no numerically exact way of  representing its evolution, even after time-discretisation.
To deal with this, we introduce a sequence of approximations to $\Pi_{\text{B}}$ that are summarised  in \Cref{tab:overview-of-measures}, and which culminate in a  tractable sampling procedure.

\begin{table*}[ht!]
\caption{Meaning of and relationships between different approximating measures. }
\centering
\begin{tabular}{p{8mm}|p{72mm}|p{15mm}|p{23mm}|p{10mm}}
 & explanation & diffusion & approximation  & space  \\
\toprule
$\Pi$ &   
    GRE prior on $H_k$ &  && $H_k$ \\
$\Pi_\text{B}$ & 
    Bayes posterior  $F \mid y_{1:N}$ & 
    $F(t)$
    &&  $H_k$ \\
${\Pi}_{\text{B}}^{1:M}$& 
    Bayes posterior ${F}^{1:M} \mid y_{1:N}$ so that  \newline 
    $\Pi_{\text{B}} =  \int \bbP( F \in \cdot \, | \,  {{F}}^{1:M}=u)d{\Pi}^{1:M}_{\text{B}}(u)$ & $F^{1:M}(t)$ &  & $\mathbb{R}^M$
    \\
$\widehat{\Pi}_{\text{B}}^{1:M}$& 
    Bayes posterior $\widehat{F}^{1:M} \mid y_{1:N}$ so that  \newline 
    $\Pi_{\text{B}} =  \int \bbP( F \in \cdot \, | \,  \widehat{{F}}^{1:M}=u)d\widehat{\Pi}^{1:M}_{\text{B}}(u)$  & && $\mathbb{R}^M$ \\
${\Pi}_{\infty}^{1:M}$ & 
    Limiting measure of projected diffusion    & 
    $\widetilde{F}^{1:M}(t)$
    & $\Pi_{\infty}^{1:M} \approx {\Pi}_{\text{B}}^{1:M}$ & $\mathbb{R}^M$ \\
$\widehat{\Pi}_{\infty}^{1:M}$ & 
    Limiting measure of projected and estimated diffusion   & 
    $\widehat{F}^{1:M}(t)$
    & $\widehat{\Pi}_{\infty}^{1:M} \approx \widehat{\Pi}_{\text{B}}^{1:M}$ & $\mathbb{R}^M$ \\
${\Pi}_{\infty}$ & 
     $\Pi_{\infty}:=  \int \bbP( F \in \cdot \, | \,  {{F}}^{1:M}=u)d{\Pi}^{1:M}_{\infty}(u)$  
     & & $\Pi_{\infty} \approx {\Pi}_{\text{B}}$ & $H_k$ \\
$\widehat{\Pi}_{\infty}$ & 
     $\widehat{\Pi}_{\infty}:=  \int \bbP( F \in \cdot \, | \,  {\widehat{F}}^{1:M}=u)d\widehat{\Pi}^{1:M}_{\infty}(u)$   &  & $\widehat{\Pi}_{\infty} \approx {\Pi}_{\text{B}}$ & $H_k$ \\
 \midrule
\end{tabular}
\label{tab:overview-of-measures}
\end{table*}

\subsection{Projected GREs}

A natural first step for approximating the intractable diffusion $\big( F(t) \big)_{t \ge 0}$ relates to the representation of $F \sim \mathcal{N}(0,C)$  via its Kosambi–Karhunen–Loève representation. 
Following Proposition 2.18 in \citet[][]{da2014stochastic}, if $C$ is defined as in \eqref{eq:def-cov-op}, this is equivalent to the well-studied Mercer decomposition of $k$, and given as 
\begin{IEEEeqnarray}{rCl}
    F(x) = \sum_{m=1}^\infty \langle F, e_m \rangle e_m(x), 
    \nonumber
\end{IEEEeqnarray}
where $ F^{m} := \langle F , e_m \rangle \sim \mathcal{N}(0, \lambda_m)$ independently with $\{\lambda_m, e_m\}_{m=1}^\infty \subset [0, \infty) \times H_k$  the eigenvalue-eigenfunction pairs with $\lambda_1 \ge \lambda_2 \ge \hdots$ obtained from the spectral decomposition of the covariance operator $C$ in \eqref{eq:def-cov-op}. 
Based on this representation, $F$ can be approximated via its orthogonal projection onto the span of the first $M$ eigenfunctions given by
\begin{IEEEeqnarray}{rCl}
    \Pr [F] := \sum_{m=1}^M 
    F^m
    e_m.
    \label{eq:projection}
\end{IEEEeqnarray}
Denoting $\text{Proj}[H_k]$ as the $M$-dimensional subspace of $H_k$ induced by this projection, and $\mathcal{P}(\text{Proj}[H_k])$ as the set of measures on this subspace, \Cref{fig:projection} illustrates our construction.
%


%
Since $C$ is a trace-class operator, this approximation through orthogonal projection is well-motivated: universally, the eigenvalues decay \textit{at least} at speed $\lambda_m = o(m^{-1})$.\footnote{
  This holds since by definition of trace-class operators with spectral decomposition, we have $\sum_{m=1}^{\infty}\lambda_m < \infty$; so since $\lambda_m \geq 0$, it must hold that $\lambda_m = o(m^{-1})$.
}
%
This bound is extremely conservative, and $\lambda_m$ typically decays at much faster rates if the functions in $H_k$ are sufficiently smooth.
As Table \ref{tab:spectral-decay} illustrates, the decay is even often exponential, which means that roughly speaking, the projection in \eqref{eq:projection} converges to $F(t)$ exponentially fast as $M\to\infty$. 
As we shall see later, this decay translates directly into the approximation quality of our posteriors (see \Cref{lemma:decay-eigenvalues}).
This further motivates choosing the $M$-dimensional subspace obtained through projection to approximately represent  randomness on $H_k$: unlike more commonly chosen subspaces---like those obtained through pointwise evaluations in the inducing point framework of \citet{titsias2009variational}---our projection leads to subspaces that are in some sense optimal.
On a technical level, this happens because they represent the subspace of $H_k$ where the prior $F$ concentrates the most.

\begin{table*}[h!]
\caption{Spectral decay for different kernels $k$ and input distributions $\nu \in \mathcal{P}(\bbR^D)$; taken from \citet{burt2019rates} and containing results from \citet{ritter1995multivariate}. }
\centering
\begin{tabular}{l|l|l}
kernel & input distribution $\nu$  & decay of $\lambda_m$  \\
\toprule
 Squared Exponential & compact support & $\mathcal{O}\big(\exp(-\alpha \frac{m}{D} \log \frac{m}{D}) \big) $  \\
 Squared Exponential & Gaussian &  $\mathcal{O}\big(\exp(-\alpha \frac{m}{D} ) \big)$ \\
 Matérn $l+1/2$ & Uniform on interval & $\mathcal{O}\big(m^{-2l-2} \log(m)^{2(D-1)(l+1)} \big)$ \\
 \midrule
\end{tabular}\label{tab:spectral-decay}
\end{table*}

\begin{figure}[b!]
\small
\centering
\includegraphics[width=\linewidth]{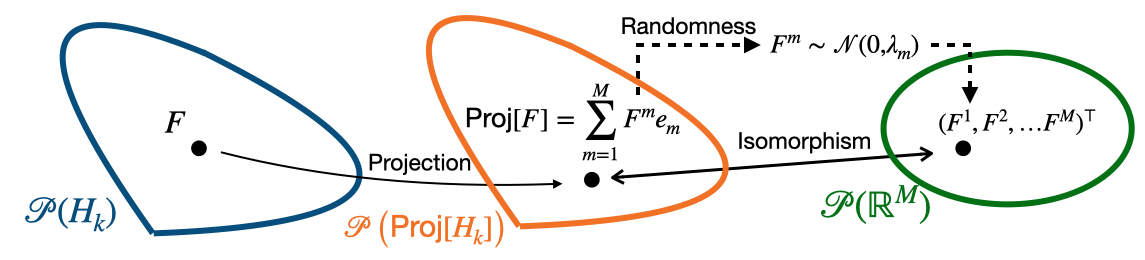} 
  \caption{\small{
  To represent a GRE $F$ on an infinite-dimensional space $H_k$, we orthogonally project it onto the first $M$ terms of its Kosambi-Karhunen-Loève representation via \eqref{eq:projection}.
  Importantly, the projected random variable $\text{Proj}[F]$ can be exactly represented using its first $M$ coefficients $F^1, F^2, \dots F^M$, which are jointly distributed as a fully factorised $M$-dimensional zero-mean  Gaussian with variances $\lambda_1^2, \lambda_2^2, \dots, \lambda_M^2$.
  }}
  \label{fig:projection}
\end{figure}

\subsection{Finite-Dimensional Projection of the WGF in Hilbert space}
\label{sec:fin-dim-projection-of-WGF}

From a practical standpoint, the projection in \eqref{eq:projection} allows us to study and simulate the $M$-dimensional SDE  $(F^{1:M}(t))_{t\geq 0}$ instead of $(F(t))_{t\geq 0}$.
As we will show in the remainder, the limiting distribution of this projected SDE allows us to exactly sample from the Bayes posterior $\Pi_{\text{B}}$ when $M=N$, or to tractably approximate it with provable guarantees when $M\ll N$ (cf. \Cref{thm:PLS-optimal-close}).
Intriguingly, the resulting posterior approximation coincides with SVGPs for Gaussian likelihoods (cf. \Cref{lemma:Gaussian-case-optimal}), but is \textit{not} constrained to a Gaussian variational family---and thus is generally non-Gaussian for non-Gaussian likelihoods.

To power this approximation algorithm, we first study the time evolution for $F^{1:M}(t) = (F^1(t), F^2(t), \dots F^M(t))^{\top}$ with $F^m(t):= \langle F(t), e_m \rangle \in \mathbb{R}$.
Using It\'{o}'s Rule, we find
\begin{IEEEeqnarray}{rCl}
    d F^m(t) = - \Big(\sum_{n=1}^N (\partial_2 c)\big(y_n, F(t)(x_{n})\big) e_{m}(x) + \frac{ F^m(t)}{\lambda_m} \Big) dt + \sqrt{2} d B^m(t),
    \label{eq:components-Langevin}
\end{IEEEeqnarray}
%
%
where $B^m(t) := \langle W(t), e_m \rangle$ are stochastically independent Brownian motions in $\bbR$. 
While this diffusion is simpler than \eqref{eq:langevin-infinite-dim}, access to the $M$ largest eigenvalue-eigenfunction pairs $\{{\lambda}_m, {e}_m \}_{m=1}^M$ would still not suffice to make it tractable due to its dependence on $F(t)$, for which we do not have an exact representation.
To rectify this, we therefore replace $F(t)$ with its \textit{exact} projection $\text{Proj}[F(t)]$. This yields the new SDE
\begin{IEEEeqnarray}{rCl}
    d \widetilde{F}^m(t) = - \Big(\sum_{n=1}^N (\partial_2 c)\big(y_n, {\text{Proj}}[F(t)](x_{n})\big) e_{m}(x) + \frac{ \widetilde{F}^m(t)}{\lambda_m} \Big) dt + \sqrt{2} d B^m(t),
    \label{eq:components-Langevin-2}
\end{IEEEeqnarray}
where ${\text{Proj}}[F(t)] = \sum_{m=1}^M \widetilde{F}^m(t) e_m$ and $\widetilde{F}_m = \langle F(t), e_m\rangle$.
Further, the limiting measure of this diffusion is defined as $\Pi_{\infty}^{1:M}$ so that $\Pi_{\infty}^{1:M} \sim \lim_{t\to\infty} \widetilde{F}^{1:M}(t)$
Even this approximation is not computationally feasible however: since the exact values of $\{\lambda_m, e_m\}_{m=1}^{M}$ are unknown in practice, $\text{Proj}[F(t)]$ cannot be computed exactly and needs to be {estimated}.
To this end, we use the Nyström method to estimate the $M$ largest eigenvalue-eigenfunction pairs $\{\widehat{\lambda}_m, \widehat{e}_m \}_{m=1}^M$  (cf. Appendix \ref{sec:ap:nystroem-method}). 
Given $\{\widehat{\lambda}_m, \widehat{e}_m \}_{m=1}^M$, a natural estimate for $F(t)(x_n)$ then follows from the \textit{estimated} projection 
\begin{IEEEeqnarray}{rCl}
    \widehat{\text{Proj}}[F(t)] &:= &  \sum_{m=1}^M  \underbrace{\langle F(t) , \widehat{e}_m \rangle}_{=: \widehat{F}^m(t)} \widehat{e}_m   
    \label{eq:approximate-projection}
\end{IEEEeqnarray} 
and its evaluations $\widehat{\text{Proj}}[F(t)](x_n)= \sum_{m=1}^M  \widehat{F}^m(t) \widehat{e}_m(x_n)$.
Substituting this into \eqref{eq:components-Langevin-2} leads to the tractable SDE $\widehat{F}^{1:M}(t)$ whose components evolve as
\begin{IEEEeqnarray}{rCl}
    d \widehat{F}^m(t) = - \left(\sum_{n=1}^N (\partial_2 c)\Big( y_n, \widehat{\text{Proj}} [F_t](x_n) \Big) \widehat{e}_{m}(x) + \frac{\widehat{F}^m(t)}{\widehat\lambda_m} \right) dt + \sqrt{2} d B^m(t). \label{eq:approx-langevin-components}
\end{IEEEeqnarray}   
This $M$-dimensional diffusion $\widehat{F}^{1:M}(t) = (\widehat{F}^{1}(t), \dots \widehat{F}^{M}(t))^{\top}$ can be forward-simulated via Euler-Maruyama discretisation.
We visually summarise the steps that take us from the infeasible diffusion $(F(t))_{t\geq 0}$ to the computationally tractable $(\widehat{F}^{1:M}(t))_{t\geq 0}$ in \Cref{fig:diffusion-illustration}.

Before moving on, it is useful to point out that while all computations and algorithmic proposals will be based on the limiting measure $\widehat{\Pi}_{\infty}^{1:M}$ of \eqref{eq:approx-langevin-components}, the theory we develop in \Cref{sec:theory} will instead rely on the limiting measure ${\Pi}_{\infty}^{1:M}$ of \eqref{eq:components-Langevin-2} for its analysis.
This means that while the computational algorithm we propose  relies on the Nystr\"om approximation, the theory we derive ignores the associated estimation error.
This decision is chiefly guided by the difficulty associated to the mathematical analysis of \eqref{eq:approx-langevin-components}, but our empirical results in \Cref{section:experiments}  additionally legitimise this, and suggest that the conclusions we draw from \eqref{eq:components-Langevin-2} are broadly applicable to \eqref{eq:approx-langevin-components}.
This is perhaps unsurprising: for the special case where $M = N$ and $\nu = \frac{1}{N}\sum_{i=1}^N \delta_{x_i}$, it can be shown that the Nystr\"om approximation is exact so that \eqref{eq:approx-langevin-components} and $\widehat{\Pi}_{\infty}^{1:M}$ are identical to \eqref{eq:components-Langevin-2} and ${\Pi}_{\infty}^{1:M}$.

\begin{figure}[t!]
\small
\centering
\includegraphics[width=\linewidth]{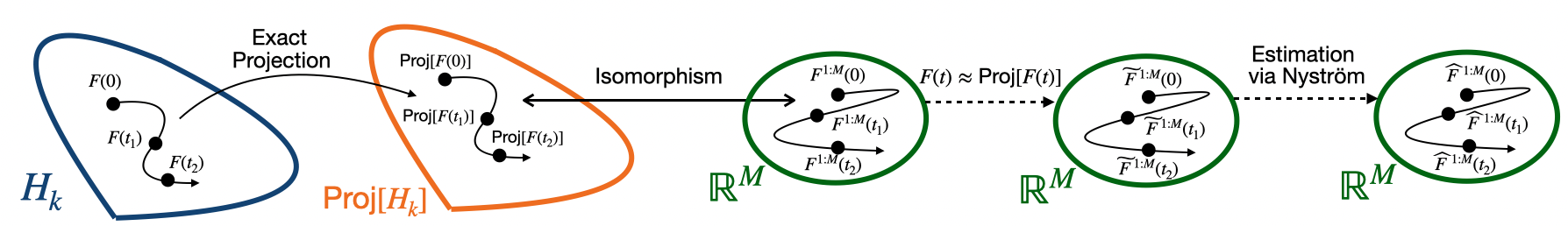} 
  \caption{\small{
 In an ideal world, we would directly evolve the SDE $(F(t)))_{t\geq 0}$ on $\mathcal{H}_k$. 
 Since this is numerically infeasible however, we instead use the projection in \eqref{eq:projection}.
 To evolve the resulting SDE, we rely on its representation via $M$ components $\widetilde{F}^{1:M}$, which we can evolve according to \eqref{eq:components-Langevin-2}. 
 In its exact form, this SDE depends on unknown quantities. We therefore have to estimate the unknown eigenvalue-eigenfunction pairs $\{{\lambda}_m, {e}_m \}_{m=1}^M$, resulting in our final target SDE $(\widehat{F}^{1:M}(t))_{t\geq 0}$ that we can approximate via \eqref{eq:approx-langevin-components} by applying Euler-Maruyama discretisation and forward simulation.
 %
  }}
  \label{fig:diffusion-illustration}
\end{figure}

\subsection{Posteriors on Infinite-Dimensional Spaces via Projection}


Having developed the tractable projected diffusion  $(\widehat{F}^{1:M}(t))_{t\geq 0}$, we now seek to use this projection for approximation of the Bayes posterior $\Pi_{\text{B}}$.
To this end, first note that we can characterise the Bayes posterior via $\Pi_{\text{B}}\in \mathcal{P}(H_k)$ given by $\Pi_{\text B} \sim \lim_{t\to\infty} F(t) $.
It might thus appear  reasonable to expect that the limiting distribution $\widehat{\Pi}^{1:M}_{\infty}\in \mathcal{P}(\mathbb{R}^M)$ given by $\widehat{\Pi}^{1:M}_{\infty} \sim \lim_{t\to\infty} \widehat{F}^{1:M}(t)$ evolved via \eqref{eq:approx-langevin-components} can approximate an $M$-dimensional summary of $\Pi_{\text{B}}$.
 Specifically, we might expect the limiting distribution of $\widehat{\text{Proj}}[F(t)]$  in \eqref{eq:projection} constructed using components $\widehat{F}^m$ sampled from $\widehat{\Pi}^{1:M}_{\infty}$ to follow  a distribution that is roughly equal to $\Pi_{\text{B}}$.
But a construction like this would restrict posterior uncertainty to an $M$-dimensional subspace of $H_k$, and lead to miscalibrated representations of uncertainty. 
In particular, if we wanted to quantify the posterior uncertainty of functions in $H_k$ that are far outside of the $M$-dimensional subspace given by $\text{span}\{\widehat{e}_m:1\leq m \leq M\}$, their uncertainty would degenerate to zero.
Clearly, this is undesirable: these are \textit{precisely} the types of functions in $H_k$ for which we have the highest degree of uncertainty.

Fortunately, we can construct a much smarter approximation that rectifies this issue and which we illustrate in \Cref{fig:PLS-illustration}.
The key insight is that to provide uncertainty on the remaining dimensions of $H_k$, 
we have to push the finite-dimensional randomness from $\widehat{\Pi}^{1:M}_{\infty}$ into the infinite-dimensional space $H_k$.
The algorithm we propose implements this pushforward approximately, using the same logic that underlies SVGPs: In particular, we consider for a particular choice of $\tau \in \mathcal{P}_2(\bbR^M)$ the measure
\begin{IEEEeqnarray}{rCl}
 \widehat{\Pi}_{\tau} := \int \bbP\big( F \in \cdot \, | \, \widehat{F}^{1:M}= u \big) \, d \tau(u), \label{eq:prob-measures-tau-widehat}
\end{IEEEeqnarray}
which can be understood as a member of the non-parametric variational family given by 
\begin{IEEEeqnarray}{rCl}
    \widehat{\mathcal{Q}}_M := \left\{\widehat{\Pi}_{\tau}: \tau \in \mathcal{P}_2(\bbR^M) \right\} \subset \mathcal{P}_2(H_k).
    \label{eq:nonparametric-variational-family-widehat}
\end{IEEEeqnarray}
While a classic variational method would seek to find an optimal value of $\tau$ through some optimisation routine, our approach is different.
Rather than attempting to solve a difficult optimisation problem like this over a non-parametric variational family, we instead \textit{hand-select} a $\tau$ which we know how to sample from relatively cheaply, and which we hypothesise provides a nearly-optimal variational approximation---a hypothesis that we prove to be true in \Cref{sec:theory}.
In particular, we advocate for choosing $$\tau = \widehat{\Pi}_{\infty}^{1:M},$$ which is the limiting measure of \eqref{eq:approx-langevin-components}, and motivate why this is sensible.
The next section provides this motivation from an intuitive and practical standpoint, while \Cref{sec:theory} does so from a  theoretical vantage point.

\subsection{Approximate Inference via Sufficiency}

Before explaining our  approximation, we have to  introduce a number of relevant objects.
First, we define
$\widehat{\Pi}^{1:M}_{\text{B}} = \widehat{\phi}^{1:M} \# \Pi_{\text{B}}$, where $\#$ denotes the pushforward operator, and $\widehat{\phi}^{1:M}(f):= \langle f, \widehat{e}^{1:M} \rangle = ( \langle f, \widehat{e}_1 \rangle, \hdots, \langle f, \widehat{e}_M \rangle)^{\top}$ denotes the function $\widehat{\phi}^{1:M}:H_k \to \bbR^M$ that maps $f \in H_k$ into the $M$ largest \textit{estimated} components $\widehat{F}^1, \widehat{F}^2, \dots \widehat{F}^M$ used  in \eqref{eq:projection}.
In other words, $\widehat{\Pi}^{1:M}_{\text{B}}$ is really the Bayes posterior  corresponding to the conditional distribution given by $\widehat{F}^{1:M} \mid y_{1:N}$, so that $\widehat{\Pi}^{1:M}_{\text{B}}(A') = \bbP( \widehat{F}^{1:M} \in A'\mid y_{1:N} )$ for all measurable subsets $A' \subset \bbR^{M}$.
Since our ultimate goal is to approximate the predictive distribution of $F(x_{1:N_*}^*)\mid y_{1:N}$ for $F\mid y_{1:N} \sim \Pi_{\text{B}}$ and new test points $x_{1:N_*}^* \in \calX^{N_*}$, we additionally define the point-wise evaluation functional  $\operatorname{eval}[x^*_{1:N_*}](f):= f(x^*_{1:{N_*}})$ for $f \in H_k$.
As the law of total probability implies that
$\Pi_{\text{B}}(B) = \int \bbP(F \in B \;|\: y_{1:N}, \widehat{F}^{1:M} = u) d\widehat{\Pi}^{1:M}_{\text{B}}(u)$ for all $B \in \mathcal{B}(H_k)$, we thus obtain for all measurable $A \subset \mathbb{R}^{N_*}$ the exact target predictive distribution as
\begin{align}
    \big(\operatorname{eval}[x^*_{1:N_*}]\# \Pi_{\text{B}}\big)(A) = \int_{\bbR^M} \underbrace{\bbP( F(x^*_{1:N_*}) \in A \mid  y_{1:N}, {\widehat{F}}^{1:M}=u)}_{\text{(I)}} \;\; d 
    \underbrace{\widehat{\Pi}^{1:M}_{\text{B}}(u)}_{\text{(II)}}, \label{eq:law-of-prob}
\end{align}
which depends on two generally intractable components (I) and (II) that we approximate.

We provide a tractable approximation via the relation $\Pi_{\text{B}} \approx \widehat{\Pi}_{\tau}$ as defined in \eqref{eq:prob-measures-tau-widehat}, so that (I) and (II) are approximated with $\bbP( F(x^*_{1:N_*}) \in A \mid  {\widehat{F}}^{1:M}=u)$ and  $\tau = \widehat{\Pi}_{\infty}^{1:M}$, respectively.
Here, the approximation of (II) relies on the reasonable assumption that
\begin{IEEEeqnarray}{rCl}
    \widehat{\Pi}^{1:M}_{\infty} \approx \widehat{\Pi}^{1:M}_{\text{B}}, \label{eq:approximation-idea}
\end{IEEEeqnarray}
which is motivated by the developments of \Cref{sec:fin-dim-projection-of-WGF}.
The approximation of (I) in turn is motivated by the idea that conditioning on $\widehat{F}^{1:M}$ is approximately \textit{sufficient}, so that the additional conditioning on $y_{1:N}$ is not required, motivating 
\begin{IEEEeqnarray}{rCl}
\bbP( F(x^*_{1:N_*}) \in A \, | \,  y_{1:N}, {\widehat{F}}^{1:M}=u) \approx \bbP( F(x^*_{1:N_*}) \in A \, | \,  {\widehat{F}}^{1:M}=u) \; \text{ for all } u \in \bbR^M.\label{eq:sufficiency-asumption}
\end{IEEEeqnarray}
This approximation is intimately related to one of the key ingredients of SVGPs, whose variational approximation similarly suppress dependence on $y_{1:N}$, and only condition on $M\ll N$ so-called inducing points \citep[see][]{titsias2009variational}.

Plugging \eqref{eq:sufficiency-asumption} into \eqref{eq:law-of-prob} now gives rise to our final approximation $\widehat{\Pi}_{\infty}\in \mathcal{P}_2(H_k)$ of $\Pi_{\text{B}}$,
 which for any $B \in \mathcal{B}(H_k)$ is defined as 
 \begin{IEEEeqnarray}{rCl}
     \widehat{\Pi}_{\infty}(B):=  \int \bbP( F  \in B \, | \,  {\widehat{F}}^{1:M}=u)d\widehat{\Pi}^{1:M}_{\infty}(u).
 \end{IEEEeqnarray}
Based on $\widehat{\Pi}_{\infty}$, we can  approximate the posterior at test points $x^*_{1:N_*}$ in \eqref{eq:law-of-prob} via
\begin{align}
    \big(\operatorname{eval}[x^*_{1:N_*}]\#\widehat{\Pi}_{\infty})(A) = \int \bbP( F(x^*_{1:N_*})  \in A \, | \,  {\widehat{F}}^{1:M}=u) \, d\widehat{\Pi}^{1:M}_{\infty}(u), \label{eq:def-pi-hat}
\end{align}
which can be sampled from by following the algorithm described in Section \ref{sec:PLS-algo}.

In summary, to move from ${\Pi_{\text{B}}}$ to $\widehat{\Pi}_{\infty}$, we used a sequence of steps illustrated in \Cref{fig:PLS-illustration}. This relied on two key ingredients: the approximation of the pushforward by estimated projection in \eqref{eq:approximation-idea}, and the sufficiency condition in  \eqref{eq:sufficiency-asumption}.
Generally, these two approximations do \textit{not} hold with equality.
The one notable exception to this is the setting where both $M=N$, and $\nu = \frac{1}{N}\sum_{n=1}^N \delta_{x_n}$.
For this special case, $\widehat{\Pi}^{1:M}_{\infty} = \widehat{\Pi}^{1:N}_{\text{B}}$ and therefore $\widehat{\Pi}_{\infty}= \Pi_{\text{B}}$, so that the method proposed in Section \ref{sec:PLS-algo} is an exact sampling algorithm for $\Pi_{\text{B}}$.\footnote{
The fact that $\widehat{\Pi}^{1:M}_{\infty} \neq \widehat{\Pi}^{1:N}_{\text{B}}$ can be understood by decomposing $\widehat{\Pi}^{1:M}_{\infty}$ and $\widehat{\Pi}^{1:N}_{\text{B}}$ into their likelihood and prior components. 
When $M < N$, the likelihoods  differ.
For $M=N$, the likelihoods match, and the only difference comes from the prior, and specifically the estimated eigenvalues.
If $M=N$ and we choose $\nu= \frac{1}{N} \sum_{n=1}^N \delta_{x_n}$ for the covariance operator $C$, this difference also vanishes (cf. Appendix \ref{sec:ap:nystroem-sufficient}).}

\begin{figure}[h]
\small
\centering
\includegraphics[width=\linewidth, angle = -90]{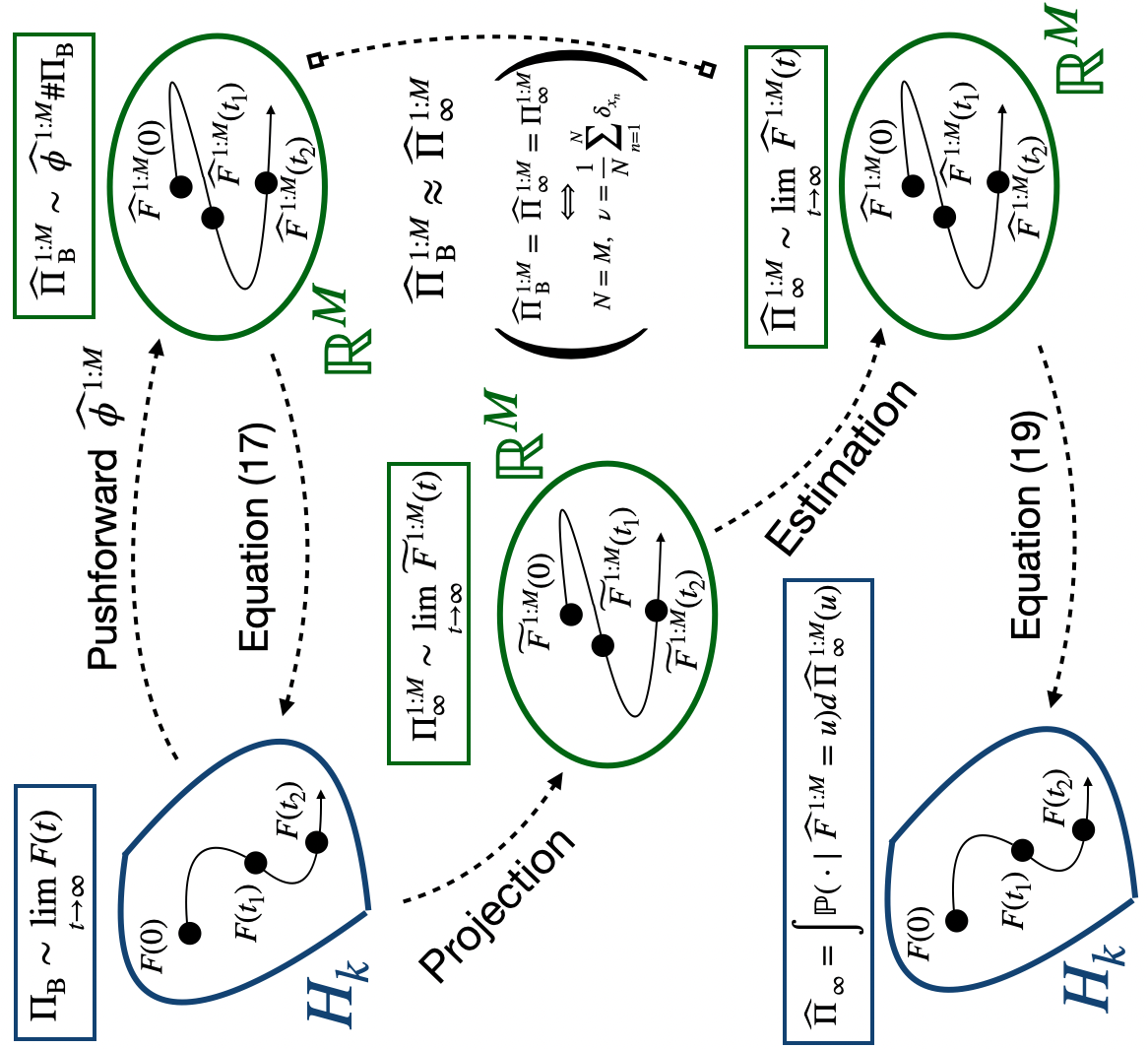} 
  \caption{\small{
  Projected Langevin Sampling (PLS) implements an approximate sampling procedure for targeting the Bayes posterior $\Pi_{\text{B}}$. 
  In particular, an $M$-dimensional Langevin SDE $(\widehat{F}^{1:M}(t))_{t\geq 0}$ is obtained by projection and estimation and approximated using Euler-Maruyama discretisation. 
  Based on the approximation $\widehat{F}^{1:M}(T) \approx \widehat{\Pi}^{1:M}_{\infty}$ for large enough $T$, this allows us to approximate $\Pi_{\text{B}}$ as $\widehat{\Pi}_{\infty}$ via Matheron's Rule, which implements the integration in \eqref{eq:def-pi-hat}.
  %
  %
  %
  }}
  \label{fig:PLS-illustration}
\end{figure}

\subsection{Projected Langevin Sampling (PLS)}\label{sec:PLS-algo}

Whether we choose to conduct inference with $M=N$ or $M\ll N$, the  algorithm we propose is the same.
Drawing 
samples 
that are distributed according to $\operatorname{eval}[x^*_{1:N_*}]\#\Pi_{\infty}$ is straightforward and proceeds in two steps: first, we evolve $J$ independent SDEs as in \eqref{eq:approx-langevin-components} to obtain draws $\widehat{F}_1^{1:M}, \widehat{F}_2^{1:M}, \dots \widehat{F}_J^{1:M}$ from the distribution $\widehat{\Pi}^{1:M}_{\infty}$. Second, we then use Matheron's Rule \citep[see e.g.][]{journel1976mining,wilson2020efficiently}
 to convert them into  samples from $\operatorname{eval}[x^*_{1:N_*}]\#\widehat{\Pi}_{\infty}$ via the conditional distributions on the righthand side of \Cref{eq:sufficiency-asumption}. 
    As $F$ is a GRE and $\widehat{F}^{1:M}$ is linear in $F$, these distributions are themselves conditionally Gaussian, so that computation scales as $\mathcal{O}(M^3+JM^2)$
 (cf. Appendix \ref{sec:ap:matherons-rule}).
In other words, we obtain samples by
\begin{itemize}
    \item[1.] 
    Sampling the initial conditions of our $J$ prequisite SDEs as $\widehat{F}^{1:M}_j(0) \sim \rho$ where $\rho \in \mathcal{P}(\bbR^M)$ is a user-chosen initial distribution and $j=1,2,\dots J$;
    \item[2.]
    Using the Euler-Maruyama discretisation to simulate SDEs with initial condition $\widehat{F}^{1:M}_j(0)$ forward until time $T$ according to \eqref{eq:approx-langevin-components}, thereby obtaining $J$ approximate\footnote{This approximation is due to the finite amount of time $T$ and the discretisation error of the SDE, and vanishes the finer the discretisation gets and the larger $T$ becomes.} samples $\widehat{F}^{1:M}_j(T) \sim \widehat{\Pi}^{1:M}_{\infty}$, for $j=1,2,\dots J$;
    \item[3.]
    Applying Matheron's rule \citep{journel1976mining,wilson2020efficiently} to implement the integration on the right of \eqref{eq:vi-loss} for any arbitrary set of inputs $x^*_{1:N_*} \in \calX^{N_*}$.
    In particular, we obtain the $J$ approximate posterior samples  $F_j(x^*_{1:N_*}) \sim \operatorname{eval}[x^*_{1:N_*}]\#\Pi_{\infty}$ for  $j=1,2, \dots J$ as
    \begin{IEEEeqnarray}{rCl}
        F_j(x^*_{1:N_*}) = G_j(x^*_{1:N_*}) + \widehat{e}^{1:M}(x^*_{1:N_*})^{\top} \Big( \widehat{F}^{1:M}_j(T) - \langle G_j, \widehat{e}^{1:M} \; \rangle \Big), 
        \nonumber
    \end{IEEEeqnarray}
    where  $\widehat{e}^{1:M}(x^*_{1:N_*}) \in \bbR^{M \times N_*}$  is the matrix whose entry at  $(m,n)$ is  $\widehat{e}_m(x^*_n)$,  $\widehat{\Lambda}_M := \text{diag}( \widehat{\lambda}_1, \hdots, \widehat{\lambda}_M) \in \bbR^{M \times M}$ is the diagonal matrix with entries $\widehat{\lambda}_1, \hdots, \widehat{\lambda}_M$,  and where
    $G_j(x^*_{1:N_*}) = (G_j(x^*_1), \dots G_j(x^*_{N_*}))^{\top}$, and $\langle G_j, \widehat{e}^{1:M} \; \rangle = \left( \langle G_j, \widehat{e}_1 \rangle, \dots \langle G_j, \widehat{e}_M \rangle \right)^{\top}$ are random variables sampled as
    $\left(G_j(x^*_{1:N_*}), \langle G_j, \widehat{e}^{1:M}\; \rangle \right)^{\top} \sim \mathcal{N}(0, R_{{N_*},M} )$
   with covariance matrix $R_{N_*, M}$  defined as
    \begin{align*}
    R_{N_*, M} :=
        \begin{bmatrix}
            r(x_{1:N_*}, x_{1:N_*}) &  \widehat{e}^{1:M}(x_{1:N_*})^T \widehat{\Lambda}_M  \\
            \widehat{\Lambda}_M \widehat{e}^{1:M}(x_{1:N_*}) &  \widehat{\Lambda}_M
        \end{bmatrix}   \in \bbR^{(N_*+M) \times (N_*+M)}.
    \end{align*}
\end{itemize}

Note that since each SDE in the first two steps above can be evolved without interaction,  the entire sampling algorithm is embarrassingly parallel over the number of samples $J$.
Thus, while a naive implementation would scale as $\mathcal{O}(M^3+ JM^2)$, parallelisation speeds things up significantly.  
For example, we found that the differences in computation time between $J=1$ and $J=100$ in our parallelised implementation were negligible---something we  demonstrate numerically in \Cref{section:experiments} and particularly in \Cref{fig:computation-PLS}. 

A more detailed version of the algorithm can be found in Appendix \ref{ap:sec:implementation-details}, which also includes further implementation details. 
Since the underlying inferential engine is the projection of the infinite-dimensional Langevin SDE in \eqref{eq:langevin-infinite-dim} into a finite-dimensional presentation as in  \eqref{eq:approx-langevin-components}, we will refer to this algorithm as \textit{projected Langevin sampling} (PLS) throughout the remainder of the paper.

\section{Theoretical Analysis 
and Connections to Previous Methods
}
\label{sec:theory}

So far, the motivation for our algorithm was guided heuristically through two approximations (I) and (II) given by  \eqref{eq:approximation-idea} and  \eqref{eq:sufficiency-asumption}.
%
%
In the remainder, we demonstrate that these approximations are sound, and that the resulting method can be rigorously justified.
%

To do so, we first study the theoretically optimal approximation $\Pi^*_{M}$ for $\Pi_{\text{B}}$ for the class of approximations mapping the first $M$ components of the orthogonal projection in \eqref{eq:projection} back to $\Pi_{\text{B}}$ (cf. Theorem \ref{thm:characterisation-optimal-approximation}).
Perhaps surprisingly, we obtain an exact expression for this theoretically optimal approximation $\Pi^*_{M}$, and use it to assess the quality of $\Pi_{\infty}$ (cf. Theorem \ref{thm:PLS-optimal-close}).
The result is an explicit bound on this difference in terms of the eigenvalues $\{\lambda_m\}_{m>M}$ corresponding to the terms that were left out in the projection of \eqref{eq:projection}, illustrating that the error depends on spectral decay of the eigenvalues $\{\lambda_m\}_{m=1}^\infty$ associated to the covariance operator $C$.
In Lemma \ref{lemma:decay-eigenvalues}, we quantify this error precisely and show that measured by the Kullback-Leibler divergence, the discrepancy between $\Pi_{\infty}$ and $\Pi^*_{M}$ decays rapidly as $M$ increases.  
Further, we show that for the special case of Gaussian likelihoods, the approximation $\Pi_{\infty}$, the measure $\Pi^*_{M}$, and SVGP posteriors all coincide (Lemma \ref{lemma:Gaussian-case-optimal}).

\subsection{Assumptions and Notations}

For the remainder, we will assume that we can rewrite $-\log p(y_{1:N}|f) =: \ell_N( f(x_{1:N}))$ for an appropriately defined  function $\ell_N: \bbR^N \to \bbR$ that is allowed to depend on $y_{1:N}$. This is strictly more general than the form assumed for the derivation of the WGF in \eqref{eq:loss-pointwise}\footnote{If $-\log p(y_{1:N}|f) = \sum_{n=1}^N c(y_n, f(x_n))$, we can always take $\ell_N( f(x_{1:N})) = \sum_{n=1}^N c(y_n, f(x_n))$.}, and more notationally convenient.

Throughout our theoretical developments, we further assume that we have access to the $M$ largest eigenvalue-eigenfunction pairs $\{\lambda_m, e_m\}_{m=1}^M$ of the covariance operator $C$.
In other words, our theory relates to 
the limiting measure of the SDE in \eqref{eq:components-Langevin-2} given by $\Pi^{1:M}_{\infty}$.
More specifically, for $F^m = \langle F, e_m \rangle$ for $m=1,\hdots,M$ and $F^{1:M} = (F^1, \dots F^M)^{\top}$, our theoretical developments study  the distribution given by
\begin{IEEEeqnarray}{rCl}
    \Pi_{\infty}(B) = \int \bbP( F \in B \, | \, F^{1:M}= u ) \, d \Pi_{\infty}^{1:M}(u),
    \label{eq:Pi-hat-for-theory}
\end{IEEEeqnarray}
for all $B \in \mathcal{B}(H_k)$.
This will allow us to ignore the estimation error of $\{\widehat{\lambda}_m, \widehat{e}_m\}_{m=1}^M$ and simplify the already challenging mathematical arguments. For certain choices of $k$ and $\nu$, the decomposition $\{\lambda_m, e_m\}_{m=1}^M$ is known exactly  \citep{zhu1997gaussian}. However, in practice, using Nyström's method to estimate it induces some estimation error.  Nevertheless, this error is generally considered negligible -- especially for the $M$ largest eigenvalue-eigenfunction pairs (see for instance Section 4.3.2 in \citet{rasmussen2003gaussian} and
\citet{koltchinskii2000random}). While the differences between $\{\widehat{\lambda}_m, \widehat{e}_m\}_{m=1}^M$  and $\{{\lambda}_m, {e}_m\}_{m=1}^M$ mean that our theory leaves the approximation $\Pi_{\infty}^{1:M} \approx \widehat{\Pi}^{1:M}_{\infty}$ unaccounted for, the resulting analysis is still meaningful for the PLS algorithm proposed in Section \ref{sec:projected-langevin}.
Notably, when $N=M$ and $\nu = \frac{1}{N}\sum_{n=1}^N\delta_{x_n}$, this estimation error vanishes completely, so that $\widehat{\Pi}_{\text{B}}^{1:M} = \Pi_{\infty}^{1:M} = \widehat{\Pi}^{1:M}_{\infty}$.


\subsection{Characterising Optimal Approximations}

In Theorem \ref{thm:PLS-optimal-close}, we will show that $\Pi_{\infty}$ is nearly optimal amongst a class of nonparametric variational posteriors.
To define this class, we re-interpret $\Pi_{\infty}^{1:M}$ in \eqref{eq:Pi-hat-for-theory} as a parameter that takes values on $\mathcal{P}_2(\bbR^M)$ and which indexes our variational approximation.
To make this logic explicit, we define for each $\tau \in \mathcal{P}_2(\bbR^M)$ the measure
\begin{IEEEeqnarray}{rCl}
 {\Pi}_{\tau} := \int \bbP\big( F \in \cdot \, | \, F^{1:M}= u \big) \, d \tau(u). \label{eq:prob-measures-tau}
\end{IEEEeqnarray}
For each choice of $M$, the collection of all measures like this defines the non-parametric variational family conditioning on the \textit{unapproximated} components $F^{1:M}$ given by
\begin{IEEEeqnarray}{rCl}
    \mathcal{Q}_M := \left\{{\Pi}_{\tau}: \tau \in \mathcal{P}_2(\bbR^M) \right\} \subset \mathcal{P}_2(H_k).
    \label{eq:nonparametric-variational-family}
\end{IEEEeqnarray}
Notice also that as previously remarked upon in the discussion of \eqref{eq:def-pi-hat}, we know that $\Pi_{\text{B}} \in \mathcal{Q}_N$  for the special case where $\nu$ is chosen to be the empirical distribution on the data, since the Nystr\"om approximation is exact in this case so that $\widehat{F}^{1:M} = F^{1:M}$  (cf. Appendix \ref{sec:ap:nystroem-sufficient}). 
More generally and perhaps  more surprisingly, we can even give a closed form for the optimal variational approximation to $\Pi_{\text{B}}$ in $\mathcal{Q}_M$---even when $M<N$ and $\Pi_{\text{B}} \notin \mathcal{Q}_M$. 
%
The next result derives this optimal variational approximation.

\begin{theorem}
\label{thm:characterisation-optimal-approximation}
For the optimal variational approximation $\Pi^{*}_{M}$ of  $\Pi_{\text{B}}$ over $\mathcal{Q}_M$ given by
\begin{IEEEeqnarray}{rCl}
\Pi^{*}_{M} := \argmin_{Q \in \calQ_M} \KL( Q ,\Pi_{\text{B}}),
\nonumber
\end{IEEEeqnarray}
we have that $\Pi^{*}_{M} = {\Pi}_{\tau^*}$ with probability measure $\tau^*(u) \propto \exp\left( -V^*(u) \right)$, where
\begin{IEEEeqnarray}{rCl}
    V^*(u) := - \log \frac{d \tau^*}{du}(u) =  \bbE_{\xi} \Big[ \ell_N \big( \mu_u(x_{1:N}) + \sqrt{ \Sigma(x_{1:N})} \xi \big) \Big]  + u^T \frac{1}{2} \Lambda_M^{-1} u \label{eq:optimal-V-SVGP}
\end{IEEEeqnarray}
for all $u \in \bbR^M$. Here, $\xi \sim \mathcal{N}(0,I_M)$, $\Lambda_M := \operatorname{diag}(\lambda_1,\hdots,\lambda_M)$, and for $u \in \bbR^M$,
\begin{IEEEeqnarray}{rCl}
\mu_u(x_{1:N}) &:=& u^T e^{1:M}(x_{1:N})  \in \bbR^N, \label{eq:def-mu-u}\\
\Sigma(x_{1:N})&:=& r(x_{1:N},x_{1:N}) - e^{1:M}(x_{1:N})^T \Lambda_M e^{1:M}(x_{1:N}) \in \bbR^{N \times N},
\nonumber
\end{IEEEeqnarray}
where $e^{1:M}(x_{1:N}) := \big( e^m(x_n) \big)_{m,n=1}^N \in \bbR^{M \times N}$.
\end{theorem}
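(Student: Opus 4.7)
The plan is to reparametrize the minimization over $\mathcal{Q}_M$ as a minimization over $\tau \in \mathcal{P}_2(\mathbb{R}^M)$, and then explicitly decompose $\KL(\Pi_\tau, \Pi_{\text{B}})$ into a constant plus a KL between $\tau$ and some reference density on $\bbR^M$. The minimizer of the latter is evident by the Gibbs variational principle, and will identify $\tau^*$.

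First, I would establish the key conditional distribution under the prior $\Pi = \mathcal{N}(0,C)$. Writing $F(x_n) = \langle F, k_n\rangle_{H_k} = \sum_{m=1}^\infty F^m e_m(x_n)$ via the reproducing property and Kosambi--Karhunen--Loève, we see that conditionally on $F^{1:M}=u$, the vector $F(x_{1:N})$ is Gaussian with mean $\mu_u(x_{1:N}) = u^T e^{1:M}(x_{1:N})$ and covariance $\Sigma(x_{1:N}) = \sum_{m=M+1}^\infty \lambda_m e_m(x_{1:N})e_m(x_{1:N})^T = r(x_{1:N},x_{1:N}) - e^{1:M}(x_{1:N})^T \Lambda_M e^{1:M}(x_{1:N})$, where the final equality uses Mercer's expansion $r(x,x') = \sum_m \lambda_m e_m(x)e_m(x')$. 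This provides the conditional law appearing in the statement (with $\xi$ standard Gaussian of the appropriate dimension).

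Next, I would decompose the KL. Note that both $\Pi_\tau$ and $\Pi$ share the same conditional law of $F \mid F^{1:M}$, so $\Pi_\tau \ll \Pi$ whenever $\tau \ll \Pi^{1:M}$, with
\begin{equation*}
\frac{d\Pi_\tau}{d\Pi}(f) = \frac{d\tau}{d\Pi^{1:M}}\bigl(\phi^{1:M}(f)\bigr).
\end{equation*}
Combining this with $d\Pi_{\text{B}}/d\Pi(f) = Z^{-1}\exp(-\ell_N(f(x_{1:N})))$ for normalizer $Z$, and taking expectations under $\Pi_\tau$, I obtain
\begin{equation*}
\KL(\Pi_\tau,\Pi_{\text{B}}) = \KL(\tau,\Pi^{1:M}) + \bbE_{F\sim\Pi_\tau}\bigl[\ell_N(F(x_{1:N}))\bigr] + \log Z.
\end{equation*}
By the tower property and the explicit conditional law derived above, the middle term equals $\bbE_{u\sim\tau}\bigl[\bbE_\xi[\ell_N(\mu_u(x_{1:N})+\sqrt{\Sigma(x_{1:N})}\xi)]\bigr]$. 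Since $\Pi^{1:M}=\mathcal{N}(0,\Lambda_M)$, the first term contributes $\bbE_{u\sim\tau}[\log(d\tau/du)(u) + \tfrac{1}{2}u^T\Lambda_M^{-1}u]$ up to a constant.

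Collecting these contributions yields, up to an additive constant independent of $\tau$,
\begin{equation*}
\KL(\Pi_\tau,\Pi_{\text{B}}) = \bbE_{u\sim\tau}\bigl[\log(d\tau/du)(u) + V^*(u)\bigr] = \KL(\tau,\tau^*) + \mathrm{const},
\end{equation*}
with $V^*$ as in \eqref{eq:optimal-V-SVGP} and $\tau^*(u)\propto \exp(-V^*(u))$. By Gibbs' inequality the minimum over $\tau\in\mathcal{P}_2(\bbR^M)$ is attained uniquely at $\tau=\tau^*$, whence $\Pi^*_M = \Pi_{\tau^*}$.

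The main obstacle I anticipate is justifying the Radon--Nikodym computation $d\Pi_\tau/d\Pi = (d\tau/d\Pi^{1:M})\circ \phi^{1:M}$ on the infinite-dimensional space $H_k$, and arguing that the optimum is indeed attained in $\mathcal{P}_2(\bbR^M)$ (i.e.\ that $\tau^*$ is normalizable and has finite second moment). The first point requires a disintegration argument applied to $\Pi$ with respect to the linear map $\phi^{1:M}$; integrability of $\tau^*$ follows from the quadratic regularizer $\tfrac{1}{2}u^T\Lambda_M^{-1}u$ dominating $\bar\ell$ under mild growth assumptions on $\ell_N$, which should hold for the Lipschitz/convex likelihoods considered in the paper.
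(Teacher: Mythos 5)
Your proposal is correct and follows essentially the same route as the paper: the same decomposition $\KL(\Pi_\tau,\Pi_{\text{B}}) = \bbE_{\Pi_\tau}[\ell_N(f(x_{1:N}))] + \KL(\tau,\mathcal{N}(0,\Lambda_M)) + \text{const}$ (which the paper imports from Theorem 4 of \citet{wild2022variational} rather than deriving via the Radon--Nikodym factorisation as you do), the same conditional Gaussian law of $F(x_{1:N})$ given $F^{1:M}=u$ yielding the $\bbE_\xi$ term, and the same appeal to the Gibbs variational principle to identify $\tau^*\propto\exp(-V^*)$.
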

Intriguingly, for the special case of Gaussian likelihood functions, $V^*$ in the above result is quadratic, so that $\tau^*$ is a Gaussian measure.
This allows us to relate our findings to SVGPs, which variationally approximate GP posteriors using a version of \eqref{eq:prob-measures-tau} that forces $\tau$ to be a Gaussian measure on $\bbR^M$.
%
In light of this, an implication of Theorem \ref{thm:characterisation-optimal-approximation}  is that an SVGP using $F^{1:M}$ as its inducing features is optimal \textit{only} under Gaussian likelihoods.
In contrast, outside of the Gaussian likelihood setting, the Gaussianity enforced upon $\tau$ implies that SVGPs are increasingly poor approximations the more non-Gaussian the measure $\tau^*$.



\subsection{Optimal Approximations and PLS}

Having established the optimal approximation $\Pi^*_M$ over \eqref{eq:nonparametric-variational-family} in Theorem \ref{thm:characterisation-optimal-approximation}, our next result shows that the approximation $\Pi_{\infty}$ targeted by PLS is provably close to $\Pi^*_M$  under standard regularity assumptions on the negative log likelihood $\ell_N$.  
To do this, we first derive the explicit form for the limiting measure $\Pi^{1:M}_{\infty}$ featuring in $\Pi_{\infty}$ as per \eqref{eq:Pi-hat-for-theory}.
In particular, we can show (cf. Appendix \ref{ap:sec:asymptotics-pls}) that $\Pi^{1:M}_{\infty}(u) \propto \exp\left( - V_{\infty}(u) \right)$ for all $u \in \bbR^M$ and 
\begin{IEEEeqnarray}{rCl}
   V_\infty(u) &:= \ell_N\big( \mu_u(x_{1:N})\big) +  \frac{1}{2} u^T \Lambda_M^{-1} u, 
    \nonumber
\end{IEEEeqnarray}
where we use  $\mu_u(x_{1:N}) \in \bbR^N$ as defined in \eqref{eq:def-mu-u}.
To arrive at this result, simply recognise that the SDE in \eqref{eq:approx-langevin-components} is a finite-dimensional Langevin diffusion, and that $V_\infty$ is the  potential associated to this diffusion.

In a similar vein, $V^*$ defined in Theorem \ref{thm:characterisation-optimal-approximation} can  be interpreted as the potential of a Langevin diffusion with stationary distribution $\tau^*$.
This raises an immediate question: given that $\tau^*$ parameterises the optimal variational measure $\Pi^*_M$, what stops us from constructing a conventional Langevin sampler based directly on $V^*$?
The answer is simple: $V^*$ depends on an intractable  $M$-dimensional integral.
In contrast, $V_{\infty}$ is tractable, so that constructing PLS as the Langevin diffusion that draws approximate samples from $\tau_{\infty}$ is computationally feasible.

While  $V_{\infty}$ and $V^*$ are very different in terms of the computational effort required to evaluate them, there is reason to hope that their numerical differences are relatively small.
Specifically, a close inspection  reveals that $V_\infty$ is a simple approximation to $V^*$: instead of evaluating the intractable integral by averaging over $\xi \sim \mathcal{N}(0,I_M)$ as in $V^*$, the potential $V_{\infty}$ uses a somewhat coarse approximation, and evaluates the integrand only at the mode for $\xi=0$. 
While crude, this is not a bad approximation strategy for sufficiently large $M$.
In fact, the next result quantifies the resulting error between $\Pi_{\infty}$ and $\Pi^*_M$ in KL divergence, and shows it to be negligible whenever the eigenvalues $\{\lambda_m\}_{m>M}$ decay sufficiently quickly and the negative log likelihood is both Lipschitz continuous and convex.

\begin{theorem}\label{thm:PLS-optimal-close}
Assume that for some $\kappa >0$, $\ell_N:\bbR^N \to \bbR$ is a  $\kappa$-Lipschitz continuous and convex function.
Then for any fixed $x_1,\hdots,x_N \in \calX$, we have
\begin{align}
    \KL\big( {\Pi}_{\infty}, \Pi^{*}_{M} \big) \le \frac{  \kappa^2}{2} \mathrm{tr}\big[\Sigma(x_{1:N}) \big] = \frac{\kappa^2}{2} \sum_{m=M+1}^\infty \lambda_m \sum_{n=1}^N ( e_m(x_n))^2. \label{eq:upper-bound1}
\end{align}
Further, if $x_1,\hdots,x_N \in \calX$ are independently and identically distributed according to $\nu$, then
\begin{align}
    \bbE_{x_{1:N}} \Big[  \KL\big( \Pi_{\infty}, \Pi^{*}_{M} \big) \Big] \le \frac{N \kappa^2}{2} \cdot R_M\label{eq:upper-bound2}
\end{align}
for the remainder term $R_M = \sum_{m=M+1}^\infty \lambda_m^2$.
\end{theorem}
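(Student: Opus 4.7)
My plan is to first reduce the infinite-dimensional KL divergence to a finite-dimensional one via the shared conditional structure of $\calQ_M$. Since both $\Pi_{\infty}$ and $\Pi^{*}_{M} = {\Pi}_{\tau^*}$ are defined through \eqref{eq:prob-measures-tau} from the \emph{same} conditional $\bbP(F \in \cdot \mid F^{1:M} = u)$, they differ only in the marginal law of $F^{1:M}$. Concretely, both measures admit densities with respect to the prior $\Pi$ that depend on $f \in H_k$ only through $\langle f, e^{1:M}\rangle$, so that $\frac{d\Pi_{\infty}}{d\Pi^{*}_M}(f) = \frac{d\Pi_{\infty}^{1:M}}{d\tau^*}(\langle f,e^{1:M}\rangle)$. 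Since the pushforward of $\Pi_{\infty}$ under $\langle\cdot, e^{1:M}\rangle$ is $\Pi^{1:M}_{\infty}$ by construction, integrating yields
\begin{equation*}
\KL(\Pi_{\infty},\Pi^{*}_{M}) = \KL(\Pi^{1:M}_{\infty}, \tau^*).
\end{equation*}

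Next, I would express this finite-dimensional KL through the explicit Gibbs-form densities $\Pi^{1:M}_{\infty}(u) \propto e^{-V_\infty(u)}$ and $\tau^*(u) \propto e^{-V^*(u)}$ on $\bbR^M$, with normalising constants $Z_\infty$ and $Z^*$. Writing $h(u) := V^*(u) - V_\infty(u) = \bbE_\xi[\ell_N(\mu_u(x_{1:N}) + \sqrt{\Sigma(x_{1:N})}\xi) - \ell_N(\mu_u(x_{1:N}))]$, the convexity of $\ell_N$ combined with Jensen's inequality immediately gives $h(u) \geq 0$ for all $u \in \bbR^M$. The identity $Z^*/Z_\infty = \bbE_{\Pi^{1:M}_{\infty}}[e^{-h(u)}]$ then produces the decomposition
\begin{equation*}
\KL(\Pi^{1:M}_{\infty}, \tau^*) = \bbE_{\Pi^{1:M}_{\infty}}[h(u)] + \log \bbE_{\Pi^{1:M}_{\infty}}[e^{-h(u)}].
\end{equation*}

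The crucial step is to upgrade this into a \emph{quadratic-in-$h$} bound. For $h \geq 0$, an elementary Taylor inspection shows $e^{-h} \leq 1 - h + h^2/2$; combining this with $\log(1+x) \leq x$ cancels the first-order terms and leaves
\begin{equation*}
\KL(\Pi^{1:M}_{\infty}, \tau^*) \leq \tfrac{1}{2}\bbE_{\Pi^{1:M}_{\infty}}[h(u)^2].
\end{equation*}
Jensen then gives $h(u)^2 \leq \bbE_\xi[(\ell_N(\mu_u + \sqrt{\Sigma}\xi) - \ell_N(\mu_u))^2]$, and the $\kappa$-Lipschitz assumption bounds the inner square by $\kappa^2 \|\sqrt{\Sigma(x_{1:N})}\xi\|^2$, whose $\xi$-expectation is $\kappa^2 \mathrm{tr}(\Sigma(x_{1:N}))$. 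This estimate is uniform in $u$ and so establishes the first inequality in \eqref{eq:upper-bound1}; the subsequent equality $\mathrm{tr}(\Sigma(x_{1:N})) = \sum_{m > M}\lambda_m \sum_{n=1}^N e_m(x_n)^2$ is then immediate from Mercer's expansion $r(x,x') = \sum_m \lambda_m e_m(x)e_m(x')$ applied to the diagonal of $\Sigma$.

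For \eqref{eq:upper-bound2}, I would simply take $\bbE_{x_{1:N}}$ of the first inequality and use $\bbE_{x\sim \nu}[e_m(x)^2] = \lambda_m$---which holds because the $e_m$ arise as Mercer eigenfunctions with respect to $\nu$---to conclude that the expected trace equals $N\sum_{m>M}\lambda_m^2 = NR_M$. The main obstacle is the quadratic-in-$h$ improvement: the naive pointwise Lipschitz estimate $|h(u)| \leq \kappa\sqrt{\mathrm{tr}(\Sigma)}$ would yield only $\KL \leq \kappa\sqrt{\mathrm{tr}(\Sigma)}$, breaking the decay rates that \Cref{lemma:decay-eigenvalues} subsequently exploits. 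Convexity of $\ell_N$ is precisely what rescues the argument, because $h \geq 0$ unlocks the inequality $e^{-h} \leq 1 - h + h^2/2$ and with it the cancellation of $\bbE[h]$ against $\log\bbE[e^{-h}]$ at first order, leaving only the quadratic residual.
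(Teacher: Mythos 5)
Your proposal is correct and follows essentially the same route as the paper: reduce to $\KL(\Pi^{1:M}_{\infty},\tau^*)$ via the shared conditional, use convexity plus Jensen to get $V^*\ge V_\infty$, bound the KL by $\tfrac12\bbE_{\Pi^{1:M}_{\infty}}[(V^*-V_\infty)^2]$, and finish with Lipschitz continuity, the Mercer expansion of $\mathrm{tr}[\Sigma(x_{1:N})]$, and $\bbE_{\nu}[e_m(X)^2]=\lambda_m$. The only difference is that where the paper invokes Lemma 3 of \citet{dalalyan2017theoretical} for the quadratic KL bound, you derive it inline from the normalising-constant identity together with $e^{-h}\le 1-h+h^2/2$ for $h\ge 0$ and $\log y\le y-1$, which makes the argument self-contained but is not a structurally different proof.
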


If the remainder term $R_M$ in \eqref{eq:upper-bound2} decays sufficiently fast, the above result provides a strong performance guarantee. 
Fortunately, the next result shows that $R_M$ does indeed decay at sufficient speeds even under no further assumption---and the decay is substantially faster at the expense of a variety of mild assumptions. Its proof is in Appendix
\ref{sec:proof:decay-eigenvalues}.

\begin{lemma}
    \label{lemma:decay-eigenvalues}
    For the remainder term $R_M = \sum_{m=M+1}^{\infty}\lambda_m^2$ in \eqref{eq:upper-bound2}, it holds that
    \begin{itemize}
        \item $R_M = \mathcal{O}(M^{-1})$ regardless of $k$ and $\nu$;
        \item $R_M = \mathcal{O}(\exp(-c M))$ for Gaussian kernels and $\nu$ being a Gaussian measure or a measure with compact support;
        \item $R_M = \mathcal{O}(M^{-3(l+1)})$ for Mat\'ern kernels of order $l+1/2$ and $\nu$ being a uniform measure with compact support.
    \end{itemize}
    %
\end{lemma}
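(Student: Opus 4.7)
The plan is to handle the three bullets separately, since each relies on a different mechanism: the universal $\mathcal{O}(M^{-1})$ rate is purely algebraic and uses only that $C$ is trace-class, while the latter two rely on plugging in the known eigenvalue-decay rates collected in Table \ref{tab:spectral-decay}.

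For the universal bound, I would use only the fact that $(\lambda_m)_{m \geq 1}$ is non-increasing and summable, with $\operatorname{tr}(C) = \sum_{m} \lambda_m < \infty$. Monotonicity gives $m\lambda_m \leq \sum_{k=1}^{m}\lambda_k \leq \operatorname{tr}(C)$, so in particular $\lambda_{M+1} \leq \operatorname{tr}(C)/(M+1)$. Exploiting the one-line inequality $\lambda_m^2 \leq \lambda_{M+1}\lambda_m$ for each $m > M$, summing, and invoking $\sum_{m > M}\lambda_m \leq \operatorname{tr}(C)$ yields
\begin{IEEEeqnarray}{rCl}
R_M \;=\; \sum_{m=M+1}^{\infty}\lambda_m^2 \;\leq\; \lambda_{M+1}\sum_{m=M+1}^{\infty}\lambda_m \;\leq\; \frac{\operatorname{tr}(C)^2}{M+1} \;=\; \mathcal{O}(M^{-1}).
\nonumber
\end{IEEEeqnarray}
This argument is sharp in that it invokes nothing beyond trace-class-ness.

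For the Gaussian-kernel regimes I would quote the rates from Table \ref{tab:spectral-decay}. When $\nu$ has compact support, $\lambda_m = \mathcal{O}\bigl(\exp(-\alpha(m/D)\log(m/D))\bigr)$, and when $\nu$ is Gaussian, $\lambda_m = \mathcal{O}\bigl(\exp(-\alpha m/D)\bigr)$. Squaring doubles the exponent, after which summation from $M+1$ to $\infty$ is dominated by its leading term (geometric series / integral comparison), yielding $R_M = \mathcal{O}(\exp(-cM))$ for a suitable $c > 0$ depending on $\alpha$ and $D$. For the Matérn case I would similarly start from $\lambda_m = \mathcal{O}\bigl(m^{-2l-2}\log(m)^{2(D-1)(l+1)}\bigr)$, square to $\lambda_m^2 = \mathcal{O}\bigl(m^{-4l-4}\log(m)^{4(D-1)(l+1)}\bigr)$, and estimate the tail by an integral to obtain a bound of order $M^{-4l-3}$ times a power of $\log M$. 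Since any fixed power of $\log M$ is dominated by $M^{\varepsilon}$ for arbitrary $\varepsilon > 0$, this relaxes to the stated $R_M = \mathcal{O}(M^{-3(l+1)})$.

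I do not anticipate a conceptual bottleneck: the proof is essentially three short template calculations. The only place that requires care is verifying that the covariance operator $C$ defined via \eqref{eq:def-cov-op} has the same spectrum as the Mercer decomposition of $k$ with respect to $\nu$, so that the cited decay rates from \citet{ritter1995multivariate} and \citet{burt2019rates} apply verbatim; this is standard and follows from Proposition 2.18 in \citet{da2014stochastic}, already used in Section \ref{sec:BKR-BKC}. A minor bookkeeping nuisance is matching constants in the Matérn bound across references; but since the stated rate leaves slack for the logarithmic correction, no delicate tracking of constants is needed.
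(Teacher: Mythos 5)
Your proof is correct and, for the Gaussian and Mat\'ern bullets, follows essentially the same route as the paper: quote the eigenvalue decay rates from Table \ref{tab:spectral-decay}, square, and bound the tail sum by an integral (the paper absorbs the Mat\'ern logarithmic factor into one power of $m$ before integrating, whereas you carry it along and discard it at the end via $\log(M)^q = \mathcal{O}(M^{\varepsilon})$ --- both variants are equally loose at the boundary case $l=0$, $D>1$, where a residual log factor survives, but this imprecision is already present in the paper's own argument). The one place you genuinely diverge is the universal $\mathcal{O}(M^{-1})$ bound: the paper argues via $\lambda_m = o(m^{-1})$ (valid for a non-increasing summable sequence) and then compares $\sum_{m>M}\lambda_m^2$ with $\int_M^\infty x^{-2}\,dx$, whereas you use the monotonicity bound $\lambda_{M+1} \le \operatorname{tr}(C)/(M+1)$ together with $\lambda_m^2 \le \lambda_{M+1}\lambda_m$ to get the explicit constant $\operatorname{tr}(C)^2/(M+1)$. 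Your version is arguably cleaner and fully quantitative; the paper's version, read carefully, actually yields the slightly stronger $o(M^{-1})$. Either way the stated conclusion holds, and your remark about matching the spectrum of $C$ with the Mercer decomposition of $k$ under $\nu$ is exactly the identification the paper relies on via Proposition 2.18 of \citet{da2014stochastic}.
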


According to the asymptotic decay of the remainder \( R_M \) derived in Lemma \ref{lemma:decay-eigenvalues}, a constant, non-vanishing error persists in \eqref{eq:upper-bound2} for the most general case. 
However, for commonly chosen smoother kernels \( k \) and  \(\nu\) endowed with mild regularity conditions, the message is far more hopeful: for these situations, \( M = \mathcal{O}(\log(N)^{1+\epsilon}) \) and \( M = \mathcal{O}(N^{1/(3l+3) + \epsilon}) \) are sufficient to ensure that this error vanishes for any arbitrarily small \(\epsilon > 0\) as $N\to\infty$. 
Since Gaussian and Mat\'ern kernels are often the default choice for various kernel-based methods in machine learning, we can therefore expect substantial computational savings without deterioration in approximation quality for a large class of practically relevant settings.

%
The above two results show that under Lipschitz continuity and convexity for $\ell_N$, the measure $\Pi_{\infty}$ targeted by PLS is therefore not only computationally efficient, but also guaranteed to be very close to the optimum $\Pi^*_M$. 
Note that these conditions are satisfied in a variety of settings of interest, for instance for the binary classification losses with logistic link functions as in \eqref{eq:bernoulli-model}.
For a comprehensive overview of other losses satisfying these conditions, see \citet{SteChr2008}.

It is also important to note that assuming Lipschitz continuity and convexity for $\ell_N$ is sufficient, but definitely not necessary. 
In line with this, we find that empirically, $\Pi_{\infty}$ continues to be an excellent posterior approximation even when these conditions are violated.
Lipschitz continuity and convexity should therefore be understood as  technical requirements that are likely far too strict.
In fact, the negative log Gaussian likelihood $\ell_N(f(x_{1:N})) = \frac{1}{2 \sigma^2} || y_{1:N} - f(x_{1:N}) ||^2$ is a simple example demonstrating that targeting $\Pi_{\infty}$ can be justified without the prerequisite regularity conditions on $\ell_N$.
Specifically, even though the Gaussian likelihood  is not Lipschitz continuous,  $\Pi_{\infty}$ is in fact  \textit{equal} to $\Pi^*_{M}$. 
%
\begin{lemma}
    \label{lemma:Gaussian-case-optimal}
    If $\ell_N\big(f(x_{1:N} ) \big) = \frac{1}{2 \sigma^2} || y_{1:N} - f(x_{1:N}) ||^2$, then $\Pi^{1:M}_{\infty} = \tau^*$ and ${\Pi}_{\infty}= \Pi^*_{M}$, so that both $\Pi^{1:M}_{\infty}$ and $\Pi_{\infty}$ are Gaussian measures.
\end{lemma}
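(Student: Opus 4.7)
The plan is to show that for the Gaussian likelihood the two potentials $V^*$ and $V_\infty$ differ only by an additive constant, so that after normalisation the densities $\tau^*$ and $\Pi^{1:M}_{\infty}$ coincide, from which $\Pi_{\infty} = \Pi^*_M$ follows through the common pushforward construction in \eqref{eq:prob-measures-tau}.

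Concretely, I would start by substituting $\ell_N(z) = \tfrac{1}{2\sigma^2}\|y_{1:N} - z\|^2$ into the expression for $V^*$ given in Theorem~\ref{thm:characterisation-optimal-approximation}. Writing $\mu := \mu_u(x_{1:N})$ and $S := \sqrt{\Sigma(x_{1:N})}$ for brevity, I expand
\begin{IEEEeqnarray}{rCl}
\ell_N(\mu + S\xi) = \tfrac{1}{2\sigma^2}\bigl(\|y_{1:N} - \mu\|^2 - 2(y_{1:N} - \mu)^T S\xi + \xi^T \Sigma(x_{1:N}) \xi\bigr).
\nonumber
\end{IEEEeqnarray}
Taking expectation over $\xi$ (a standard Gaussian of appropriate dimension so that $\mathbb{E}[\xi] = 0$ and $\mathbb{E}[\xi \xi^T] = I$), the cross term vanishes and the quadratic term contributes $\mathrm{tr}(\Sigma(x_{1:N}))$. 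Hence
\begin{IEEEeqnarray}{rCl}
V^*(u) = \tfrac{1}{2\sigma^2}\|y_{1:N} - \mu_u(x_{1:N})\|^2 + \tfrac{1}{2\sigma^2}\mathrm{tr}\bigl(\Sigma(x_{1:N})\bigr) + \tfrac{1}{2} u^T \Lambda_M^{-1} u.
\nonumber
\end{IEEEeqnarray}
The explicit form for $V_\infty$ derived in Section~\ref{sec:theory} is
\begin{IEEEeqnarray}{rCl}
V_\infty(u) = \tfrac{1}{2\sigma^2}\|y_{1:N} - \mu_u(x_{1:N})\|^2 + \tfrac{1}{2} u^T \Lambda_M^{-1} u,
\nonumber
\end{IEEEeqnarray}
so that $V^*(u) - V_\infty(u) = \tfrac{1}{2\sigma^2}\mathrm{tr}(\Sigma(x_{1:N}))$ is independent of $u$.

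Since $\tau^* \propto \exp(-V^*)$ and $\Pi^{1:M}_{\infty} \propto \exp(-V_\infty)$, the constant difference is absorbed into the normalising constants and the two densities on $\mathbb{R}^M$ coincide, giving $\Pi^{1:M}_{\infty} = \tau^*$. The definitions \eqref{eq:Pi-hat-for-theory} and \eqref{eq:prob-measures-tau} express $\Pi_{\infty}$ and $\Pi^*_M$ as the same linear pushforward of $\Pi^{1:M}_{\infty}$ and $\tau^*$ respectively along $u \mapsto \mathbb{P}(F \in \cdot \mid F^{1:M} = u)$, so equality of the mixing measures immediately yields $\Pi_{\infty} = \Pi^*_M$. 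Finally, Gaussianity follows because $V_\infty$ is a positive-definite quadratic form in $u$, making $\Pi^{1:M}_{\infty}$ a Gaussian measure on $\mathbb{R}^M$; since the conditional $F \mid F^{1:M} = u$ is Gaussian (as $F$ is a GRE and the conditioning functional is linear), the mixture $\Pi_{\infty}$ is Gaussian as well.

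There is no real obstacle here beyond keeping the dimensions of $\xi$ and $\Sigma(x_{1:N})$ consistent in the expectation calculation; the entire argument reduces to observing that Gaussian smoothing of a quadratic shifts it only by its trace correction, which is annihilated by normalisation.
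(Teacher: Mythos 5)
Your proposal is correct and follows essentially the same route as the paper: both substitute the Gaussian negative log likelihood into $V^*$, evaluate the Gaussian expectation of the quadratic (the paper cites the standard second-moment formula where you expand by hand, yielding the same $\frac{1}{2\sigma^2}\mathrm{tr}[\Sigma(x_{1:N})]$ constant), conclude $V^* = V_\infty + \text{const}$ so that $\tau^* = \Pi^{1:M}_{\infty}$, and then deduce $\Pi_{\infty} = \Pi^*_M$ from the shared pushforward parameterisation in $\calQ_M$.
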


Both Theorem \ref{thm:PLS-optimal-close} and Lemma \ref{lemma:Gaussian-case-optimal} only consider the discrepancy between $\Pi_{\infty}$ and ${\Pi}^*_M$, and not that between $\Pi_{\infty}$ and the full Bayes posterior $\Pi_{\text{B}}$.
For the Gaussian likelihoods to which this result applies however, this latter discrepancy has been studied thoroughly in \citet{burt2019rates}, which showed that for Gaussian likelihoods, the KL divergence between $\Pi_{\text{B}}$ and ${\Pi}^*_M$ can be upper bounded similarly to \eqref{eq:upper-bound1} and \eqref{eq:upper-bound2}.
Since   $\Pi_{\infty}= {\Pi}^*_M$ for this setting by Lemma \ref{lemma:Gaussian-case-optimal}, the results in \citet{burt2019rates} thus transfer to the target measure $\Pi_{\infty}$ of PLS.
While similar bounds for non-Gaussian likelihoods are \textit{not} known, the fact that $M=N$ and $\nu = \frac{1}{N}\sum_{i=1}^N\delta_{x_i}$ implies $\Pi_{\text{B}} \in \calQ_N$ combined with the typically rapid decay of $\{\lambda_m\}_{m>M}$ should make us hopeful that for large enough $M$ we will have $\Pi^*_M \approx \Pi_{\text{B}}$.

\subsection{Related Approaches}

Continuing the comparison with SVGPs \citep{titsias2009variational}, Lemma \ref{lemma:Gaussian-case-optimal} tells us that the measure $\Pi_{\infty}$ targeted by PLS is not only equal to the optimal variational measure $\Pi^*_M$, but also to the SVGP approximation in the case of a Gaussian likelihood.\footnote{If the GP is specified via kernel $r$, and the inducing features are chosen as $U_m=\langle F, e_m \rangle$} 
Unlike the SVGP, the measure $\Pi_{\infty}$ targeted by PLS does not force $\tau$ in \eqref{eq:nonparametric-variational-family} to be Gaussian, and  is therefore close to the optimal variational measure $\Pi^*_M$ for non-Gaussian likelihoods, too (cf. Theorem \ref{thm:PLS-optimal-close}).
Beyond SVGPs, \citet{hensman2015mcmc} propose a Hamiltonian Monte Carlo (HMC) sampler reliant on the sufficiency condition of SVGP to perform joint posterior inference over kernel hyperparameters and inducing features.
Their method can be recast as performing HMC on potential $V^*$ in
\eqref{eq:optimal-V-SVGP}, where the integral is approximated using Gauss-Hermite quadrature, and where Bayesian inference is extended to the kernel hyperparameters.
Importantly, this prior work has to propose several heuristics to handle the additional intractabilities due to hyperparameter inference, and does \textit{not} show that the proposed HMC sampler can target an approximation of suitable quality.
In contrast,  Theorem \ref{thm:PLS-optimal-close} proves that PLS is close to the KL-optimal $M$-dimensional approximation
$\Pi^*_M$.

\begin{figure}[t!]
\small
\centering
\includegraphics[width=0.32\linewidth]
{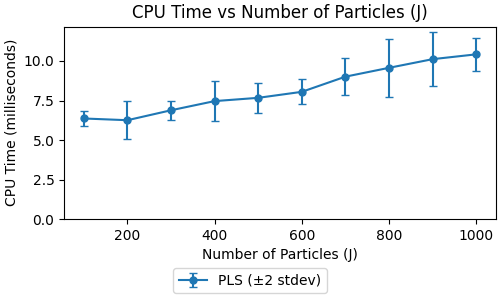} 
\includegraphics[width=0.32\linewidth]
{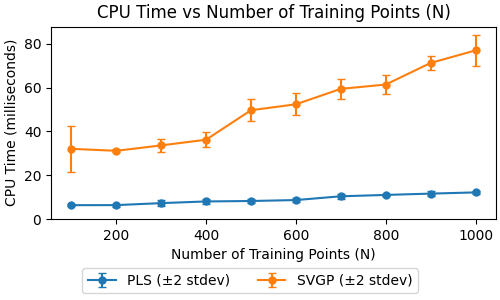} 
\includegraphics[width=0.32\linewidth]
{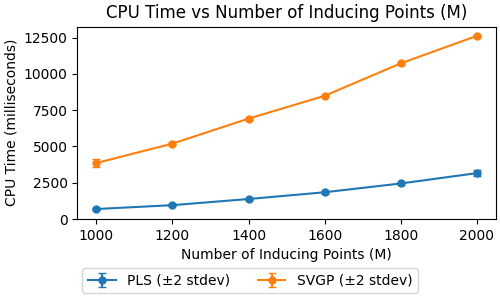} 
  \caption{\small{
    \textbf{Left:}
    Our results show that while the increase in computation time for PLS with $J$ is roughlly linear, it is negligible relative to the dominant $\mathcal{O}(M^3)$ term.
    \textbf{Middle:}
    The computational complexity of PLS in $N$ is linear, but thanks to parallelisation this effect is barely noticable in practice.
    \textbf{Right:}
    The dominant term for computation is $\mathcal{O}(M^3)$ for both PLS and SVGP, but parallelisation and a much smaller constant factor obscured by the $\mathcal{O}$-notation make PLS substantively faster, even for large $M$.
  }}
  \label{fig:computation-PLS}
\end{figure}
 
\section{Numerical illustrations}
\label{section:experiments}

In the remainder of the paper, we illustrate the performance and behaviour of PLS by comparing it to SVGP with inducing features $U_m := F(z_m)$, $m=1,\hdots,M$. 
Notice that PLS relies implicitly on inducing points $z_{1:M}$, since the estimation of the $M$ largest eigenvalue-eigenfunction pairs via the Nyström method is based on the kernel matrix $\frac{1}{M} k(z_{1:M}, z_{1:M})$ (see also Appendix \ref{sec:ap:nystroem-method}).
Thus, to make comparisons fair, we use $M= \sqrt{N}$ inducing points for both methods, and rely on the same techniques to choose kernel hyper-parameters and inducing points $z_{1:M}$ (see Appendix \ref{ap:sec:implementation-details} for details). 
For all experiments conducted, implementations are available at \href{https://github.com/jswu18/projected-langevin-sampling}{https://github.com/jswu18/projected-langevin-sampling}.

\paragraph{Computational complexity.}
In \Cref{sec:time-and-space-complexity}, we show that the computational complexity of PLS is of order $\mathcal{O}\big(M^3 + J NM\big)$, where $J$ denotes the number of samples drawn.
The $\mathcal{O}(M^3)$ term is due to the spectral decomposition of $\frac{1}{M} k(z_{1:M},z_{1:M})$, which only has to be computed \textit{once} at the start of the algorithm.
Since the algorithm is embarassingly parallel in $J$, this means that the size of $M$ is the main limiting factor in its scalability. 
Our implementation exploits this, and results are obtained after parallelising over the 8 CPU cores of a MacBook Pro with an M1 Pro chip and 16GB of memory, and comparing against the standard SVGP implementation in  \texttt{GPyTorch} \citep{gardner2018gpytorch}.
Hyperparameters are fixed across both methods, and we did not batch SVGP to ensure fair comparisons. 
\Cref{fig:computation-PLS} compares the clock times for PLS and SVGP across 10 replications of fitting a simple 1-dimensional regression with $f(x) = 2 \sin(0.35 \pi x^2)$ uniformly sampled on $x \in [-3,3]$ and injected with independent Gaussian noise distributed as $\mathcal{N}(0, 0.2)$.
Here, the default settings we consider are $N=100$, $M=10$, and $J=100$. To investigate their effects on computation time, we vary these three different parameters one after the other in our plots.
%


\paragraph{Regression.} 
Table \ref{tab:regression-class} compares SVGP to PLS for regression tasks on various benchmark data sets. 
Since the regression likelihood is Gaussian, we know both SVGP and PLS target the optimal $M$-dimensional approximation $\Pi^*_M$ (cf. Lemma \ref{lemma:Gaussian-case-optimal}). 
Unsurprisingly, we obtain similar behaviour, although PLS seems to have an edge overall.
This is perhaps not surprising: since we construct the covariance operators  in Section \ref{sec:cov-op-gp-interpretation} by taking $\nu$ to be the empirical measure over the covariates, the kernel $r$ that is implicitly used within PLS is effectively data-adaptive.

\begin{table}[b!]
    \caption{ \small{
    Comparison between PLS and SVGP on various classification benchmark data sets.
    For each data set, results are obtained for five different train/test splits, and we report both mean as well as standard deviation (in brackets). 
    \textbf{Fat text} denotes better performance, and $^*$ / $^{**}$ denotes statistical significance at a 95\% / 99\% level using a 2-sample $t$-test.
    }}
        \centering
{
\resizebox{0.8\textwidth}{!}{
        \begin{tabular}{lll|ll}

         & \multicolumn{4}{c}{{{\textbf{Bernoulli Likelihood}}}} \\
         \toprule
                  & \multicolumn{1}{c}{{PLS}}       & \multicolumn{1}{c}{SVGP} & \multicolumn{1}{c}{{PLS}}       & \multicolumn{1}{c}{SVGP}     \\ \midrule
        &    \multicolumn{2}{c}{{{\small \textit{Area Under the Curve}}}}  &    \multicolumn{2}{c}{{{\small \textit{Accuracy}}}}\vspace{0.5mm}                          \\
Breast & 98.37 (0.90) & \textbf{98.44 (0.81)} & 94.74 (1.48) & \textbf{94.74 (0.83)}\\
Diabetes & \textbf{83.38 (2.93)} & 82.95 (2.59) & \textbf{76.56 (4.33)} & 76.46 (4.44)\\
Heart & \textbf{91.34 (0.75)} & 91.21 (0.98) & \textbf{83.35 (1.83)} & 83.11 (1.41) \\
Ionosphere & 90.94 (2.85) & \textbf{92.94 (1.71)} & 85.91 (4.47) & \textbf{87.27 (5.35)}\\
Mushrooms & \textbf{81.77 (2.09)} & 81.76 (1.95) & \textbf{75.64 (1.64)} & 74.83 (2.85)\\
Rice & \textbf{94.65 (1.54)} & 94.04 (1.40) & \textbf{88.96 (1.14)} & 88.46 (0.56)\\
Wine (colour) & 94.95 (1.67) & \textbf{94.99 (1.34)} & \textbf{93.98 (0.63)}** & 85.98 (3.34)\\
Yeast & \textbf{69.16 (2.39)} & 67.23 (1.03)& \textbf{64.93 (2.20)} & 64.13 (1.30) \\
        \bottomrule
        \end{tabular}
        }\label{tab:binary-classification}
        }
\end{table}

\paragraph{Binary Classification.} Complementing the regression results,  Table \ref{tab:binary-classification} compares PLS and SVGP on a binary classification task. 
Similarly to the regression loss, the classification loss satisfies the necessary requirements for the optimality result in Theorem \ref{thm:optimal-measure-calQ} to apply.\footnote{In particular, the classification loss is convex and Lipschitz continuous.}
In contrast, SVGP is parametrically constrained to a Gaussian variational distribution, and will therefore introduce an approximation error whose magnitude cannot be controlled. 
Fortunately, the Bernoulli likelihood is still convex, so that the optimal $M$-dimensional posterior characterised in Theorem \ref{thm:optimal-measure-calQ} will be \textit{unimodal}---and therefore usually close enough to a Gaussian distribution for SVGP to still perform well. 
Unsurprisingly then, both methods perform relatively similar. 
While the Table seems to suggest that PLS outperform SVGP, this is an intellectually lazy reading of the numbers: the differences between both methods are not statistically significant.

\begin{table}[hb!]
    \caption{ \small{Comparison between PLS and SVGP on various regression benchmark data sets.
    For each data set, results are obtained for five different train/test splits, and we report both mean as well as standard deviation (in brackets). 
    \textbf{Boldface} denotes better performance, and $^*$ / $^{**}$ denotes statistical significance at a 95\% / 99\% level using a 2-sample $t$-test.
    As expected, SVGP and PLS perform similarly, though PLS performs better overall. 
    }}
    \resizebox{\textwidth}{!}{
\begin{tabular}{lll|ll}
 & \multicolumn{2}{c}{{{\textbf{Gaussian Likelihood}}}}  & \multicolumn{2}{c}{{{\textbf{Student-T Likelihood}}}} \\
 \toprule
                   & \multicolumn{1}{c}{{PLS}}       & \multicolumn{1}{c}{SVGP}  & \multicolumn{1}{c}{PLS}       & \multicolumn{1}{c}{SVGP}     \\ 

\midrule
  &    \multicolumn{4}{c}{{{\small \textit{Negative Log Likelihood}}}} 
  \vspace{0.5mm} \\ 
Boston & 1.075 (0.427) & \textbf{1.026 (0.267)} & 1.359 (0.819) & \textbf{1.174 (0.384)} \\
Concrete & \textbf{1.182 (0.112)}** & 1.296 (0.096)& \textbf{1.277 (0.099)}** & 1.289 (0.081) \\
Energy (cooling)& \textbf{0.769 (0.073)}** & 1.289 (0.032) & \textbf{0.948 (0.091)}** & 1.299 (0.045) \\
Energy (heating) & 0.407 (0.145) & \textbf{0.253 (0.113)}** & 0.425 (0.159) & \textbf{0.287 (0.128)}** \\
Kin8Nm & \textbf{0.839 (0.027)}** & 0.914 (0.022)& \textbf{0.908 (0.032)}** & 0.915 (0.025) \\
Wine (quality) & \textbf{1.345 (0.079)}** & 1.361 (0.073) & \textbf{1.353 (0.081)}** & 1.391 (0.088) \\
        \midrule
                  &    \multicolumn{4}{c}{{{\small \textit{Mean Absolute Error}}}}
                  \vspace{0.5mm} \\ 
Boston & \textbf{0.360 (0.067)}** & 0.403 (0.084) & \textbf{0.368 (0.058)}**& 0.385 (0.081) \\
Concrete & \textbf{0.620 (0.072)}** & 0.707 (0.072) & \textbf{0.669 (0.059)}** & 0.686 (0.057) \\
Energy (cooling) & \textbf{0.413 (0.019)}** & 0.777 (0.031) & \textbf{0.484 (0.029)}** & 0.765 (0.033) \\
Energy (heating) & 0.237 (0.020) & \textbf{0.235 (0.023)}**& 0.255 (0.031) & \textbf{0.238 (0.026)}** \\
Kin8Nm & \textbf{0.423 (0.012)}** & 0.473 (0.009) & \textbf{0.441 (0.012)}** & 0.468 (0.011) \\
Wine (quality) & \textbf{0.766 (0.037)}** & 0.782 (0.034)  & \textbf{0.774 (0.040)}** & 0.787 (0.036) \\
\midrule
                  &    \multicolumn{4}{c}{{{\small \textit{ Average Interval Width}}} }
                  \vspace{0.5mm} \\ 
Boston & \textbf{2.30 (0.35)}* & 2.48 (0.57) & 2.584 (0.548) & \textbf{2.539 (0.758)}* \\
Concrete & \textbf{3.54 (0.70)} & 3.58 (0.68) & \textbf{3.794 (0.840)}** & 3.805 (0.809) \\
Energy (cooling) & \textbf{2.11 (0.30)}** & 3.17 (0.26) & \textbf{2.471 (0.300)} & 3.451 (0.393) \\
Energy (heating) & 1.62 (0.24) & \textbf{1.39 (0.17)}**  & 1.487 (0.187) & \textbf{1.398 (0.248)}** \\
Kin8Nm & \textbf{2.21 (0.05)}** & 2.39 (0.11)  & \textbf{2.369 (0.109)}** & 2.386 (0.125) \\
Wine (quality) & 3.62 (0.32) & \textbf{3.62 (0.36)}  & 3.653 (0.329) & \textbf{3.644 (0.289)}** \\
\bottomrule
\label{tab:regression-class}
\end{tabular}
 }
\end{table}

\paragraph{Poisson Regression.} 
In Figure \ref{fig:multimodal-poisson}, we consider a Poisson regression with an unknown rate function modeled as $\lambda(x) = f(x)^2$ using simulated data. 
Doing so corresponds to the likelihood function $ p(y_n|f) = (y_n!)^{-1} (f(x_n))^{2y_n} \exp( - (f(x_n))^2)$ for $y_n \in \bbN_0$.
Importantly, this likelihood function is symmetric with $p(y_n\mid f) = p(y_n\mid -f)$, and therefore non-convex in $f$.
As a consequence, Theorem \ref{thm:PLS-optimal-close} does not apply and we do not have guarantees for PLS. 
Despite this, PLS still performs remarkably well: it perfectly models the bimodal nature of the posterior, which manifests in  the symmetry of the posterior uncertainty around the x-axis in Figure \ref{fig:multimodal-poisson}. 
This is in stark contrast with SVGP, whose unimodal Gaussian variational posterior would prevent it from taking any multimodal shape.
%



\begin{figure}[h!]
\small
\centering
  \includegraphics[width=\linewidth]{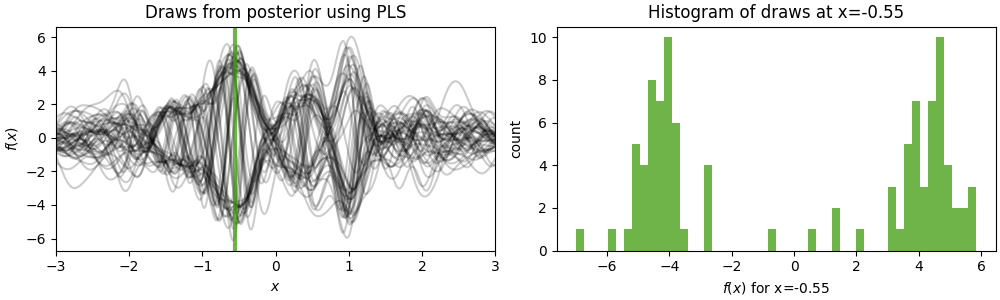} 
  \caption{
  \small{Posterior inference for a  Poisson regression with rate function $\lambda(x) = f(x)^2$. 
 \textbf{Left:} Displayed are $J=100$ posterior sample draws from PLS. Due to the symmetry of the associated likelihood function around $f=0$, we observe symmetry around the x-axis. 
  \textbf{Right:} Depicted is a histogram displaying the location of the posterior draws from PLS on the left at $x=-0.55$. The shape of the histogram reflects the observed bimodality, and is decidedly non-Gaussian. 
  }}
  \label{fig:multimodal-poisson}
\end{figure}

\paragraph{Multimodal Regression with presence of an unknown shift.}
While the previous examples served to illustrate quantitative and qualitative differences in the inferences between PLS and SVGP, we now give an illustration that demonstrates the capability of our method to efficiently sample from complex and exotic posterior distributions with relative ease.
In particular, we will show that as long as \( c \) and \( \partial_2 c \) as defined in Section \ref{sec:loss} are tractable, then PLS can be applied without any further model-specific adjustments.
To do this, we posit the following relationship between the observations \( y_n \) and the input features \( x_n \), given for $n=1,2,\dots N$ by
\begin{IEEEeqnarray}{rCl}
    y_n = f(x_n) + s \cdot b + \epsilon_n,
    \nonumber
\end{IEEEeqnarray}
where \( s \in \mathbb{R} \) is a shift parameter of known magnitude, \( b \sim \operatorname{Ber}(\alpha) \) is a Bernoulli random variable with success probability \( \alpha \in (0,1) \), and \( \epsilon_n \sim \mathcal{N}(0, \sigma^2) \) represents Gaussian noise. 
Including a Bernoulli random variable that 'switches on' the shift results in a multimodal likelihood model, as the unknown function \( f \) can only be identified up to the constant shift \( s \) (for details, see Appendix \ref{ap:sec:implementation-details}). 
This model is related to the classical fixed effects approach, a class of models popular in econometrics \citep{wooldridge2010econometric}. 
Intuitively, it describes a setting where we are uncertain whether a measurements $y_n$ has been shifted by a constant $s$ prior to observation.
As shown in Figure \ref{fig:multimodal-shift}, PLS accurately captures the uncertainty we have about whether or not our data points were shifted prior to observation by generating posterior samples that correspond to both scenarios.
%
As a result, it produces a bimodal distribution whose modes are apart by a distance of almost exactly \( s \).

\begin{figure}[ht]
\small
\centering
  \includegraphics[width=\linewidth]{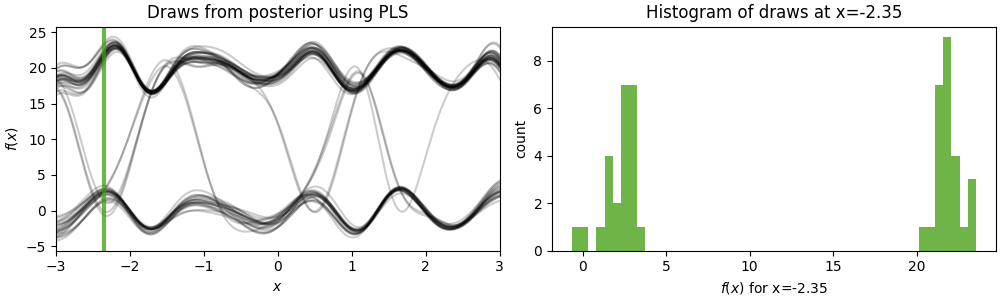} 
  \caption{
  \small{
  Posterior inference for a model with possible constant shift \( s \in \mathbb{R}^N \). 
  The underlying data was simulated using \( f(x) = 2\sin(1.5\pi x) \), $\alpha = 0.5$, \( s = 20 \), and $\sigma^2 = 1$.
 \textbf{Left:} Displayed are $J=50$ posterior sample draws from PLS over $x \in [-3,3]$ with a vertical line indicating $x=-2.35$. Due to the nature of the model, we observe two different groups of draws: one at the top and one of the bottom. Unsurprisingly, the two groups of draws are separated by a distance of about  $s$. 
  \textbf{Right:} Depicted is a histogram displaying the location of posterior draws from PLS on the left at $x=-2.35$. The histogram reflects the observed bimodality and non-Gaussianity. 
  }}
  \label{fig:multimodal-shift}
\end{figure}

\section{Conclusion}
To summarise, we derived the Wasserstein gradient flow for Bayesian inference with a Gaussian random element prior distribution, and for arbitrary likelihood functions.
We demonstrated that the WGF can be efficiently implemented by a projection of the Langevin equation from a reproducing kernel Hilbert space (RKHS) into Euclidean spaces, which led to a new method we call projected Langevin sampling (PLS). 
Our results show that the posterior targeted by this projected sampler coincides with the optimal $M$-dimensional posterior approximation for Gaussian likelihoods.
Beyond that, we showed that it is the first method to be provably close to the optimal $M$-dimensional posterior approximation for non-Gaussian likelihoods under mild convexity and Lipschitz assumptions. 
We concluded by exploring the performance of
PLS on several regression and classification tasks, compared it to Sparse variational GPs (SVGPs), and provided demonstrations for some of its capabilities for inference in multi-modal posteriors that are impossible to achieve with SVGPs. 

Throughout this development, we made a key observation that motivated much of the paper's focus: in Section \ref{sec:BKR-BKC}, we discovered that in order to implement a WGF-powered functional inference scheme, the functions of interest \textit{needed} to lie in an RKHS. 
It is worth reflecting upon this: we did not choose RKHS spaces \textit{directly}.
Rather, they emerged as a consequence of 
assuming that the relevant function space is a Hilbert space, and that the negative log likelihood is Fr\'echet differentiable. 
Although it is not obvious how these assumptions could be relaxed while ensuring that the WGF derived in Section \ref{sec:WGF:SDE} can be tractably evolved on a computer, it raises the fundamental question if weaker assumptions could be found to generalise our algorithm to functional inference such as more general Hilbert spaces or  Banach spaces.
While we believe that these questions could establish an exciting new frontier for functional inference, the underlying mathematics would have to fundamentally depart from the results we derived here.

Turning attention to the methodological gains made in our paper, it is noteworthy that PLS can compete with SVGPs along several dimensions.
On top of its ability to make substantive computational savings by being run in parallel, one advantage of our approach seemed to be the ability to accommodate multi-modal posterior distributions.
While our illustrations provided evidence that PLS can in principle perform multi-modal inference in fortuitous circumstances, it is worth noting that PLS is essentially a Langevin sampler---a class of algorithms well-known to struggle with multi-modal inference. 
An important future direction of research on using the WGF to derive functional inference algorithms would be the derivation of a different class of algorithms that is more naturally suited to multi-modal posteriors.
Doing so would require a fundamentally new approach:
the theoretical guarantees we derived for PLS depended on convexity of the negative log likelihood function---but multi-modality  emerges mostly when they are non-identifiable and non-convex.
Though the multi-modal setting would require a completely different mathematical toolkit from the one developed in the current paper and is beyond the scope of what we studied here, it poses intriguing questions for further research into functional inference based on WGFs.

\subsection*{Acknowledgements}

VW was supported by the Scatchered scholarship and  EPSRC grant EP/W005859/1.
JK was supported by  EPSRC grants EP/W005859/1 and EP/Y011805/1.


\renewcommand{\theHsection}{A\arabic{section}}
\newpage
\appendix

\section{Technical background}\label{sec:ap:technical-background}

\subsection{Reproducing Kernel Hilbert Spaces}\label{ap:sec:RKHS}

Let $\calX$ be non-empty set and $\mathcal{F}(\calX, \bbR)$ be the set containing all functions $f: \calX \to \bbR$. A Hilbert space $\big( H, \langle \cdot, \cdot \rangle\big)$ with $H \subset \mathcal{F}(\calX, \bbR)$ is called Reproducing Kernel Hilbert Space (RKHS) if and only if the pointwise evaluation functionals $\pi_x : H \to \bbR$, $f \mapsto f(x)$ are continuous for all $x \in \calX$.
A function $k:\calX \times \calX \to \bbR$ is called kernel, if the kernel matrix $k(x_{1:N},x_{1:N}) := \big( k(x_n, x_{n'}) \big)_{n,n'=1}^N \in \bbR^{N \times N}$ is positive semi-definite for all $x_{1:N} \in \calX^N$ and $N \in \bbN$.

 Moore-Aronszajn theorem \citep{aronszajn1950theory} states that, for every RKHS $H$, there exists a kernel $k:\calX \times \calX \to \bbR$ with the property that $k(x, \cdot) \in H$ and  $\pi_x(f) = \big\langle f, k(x,\cdot) \big\rangle$ for all $x \in \calX$. The functions $k(x, \cdot) : \calX \to \bbR$ are called canonical feature maps and the property $f(x) = \langle f , k(x, \cdot) \rangle$ is called reproducing property.

Conversely, for any kernel $k$, we can construct a Hilbert space $H \subset \mathcal{F}(\calX, \bbR)$ as closure of $H_0$ defined as
\begin{align}
    H_0:=\left\{ \sum_{n=1}^N \alpha_n k(x_n, \cdot) \, | \, \,  \alpha_n \in \bbR, \,  x_n \in \calX ,  \, N \in \bbN \right\}
\end{align}
with respect to norm induced by the inner product $\langle \sum_{n=1}^N \alpha_n k(x_n, \cdot), \sum_{n=1}^N \beta_n k(\widehat{x}_n, \cdot) \rangle := \sum_{n,n'=1}^N \alpha_n \beta_n k(x_n,\widehat{x}_{n'}) $ for all $\alpha_n, \beta_n \in \bbR$, $x_n, \widehat{x}_n \in \calX$ and $N \in \bbN$. The Hilbert space constructed in this way is an RKHS with kernel $k$ \citep[Theorem 10.10]{wendland2004scattered}.  

Additional details about RKHS can be found in \citet{Berlinet2004}, \citet{wendland2004scattered} and \citet{SteChr2008}. Furthermore \cite{scholkopf2002learning} and \citet{hofmann2008kernel} describe how the theory of RKHS can be used in the context of machine learning.

\subsection{Gaussian Random Elements and Gaussian Measures}\label{ap:sec:GREs}
Let $\big( \Omega, \mathcal{A}, \bbP)$ be the underlying physical probability space and $\big( H, \langle \cdot, \cdot \rangle \big)$ a (separable) Hilbert space.

The measurable mapping $F: \Omega \to H$ is called Gaussian Random Element (GRE) in $H$ if $\langle F , h \rangle : \Omega \to \bbR$ is a Gaussian random variable for all $h \in H$ \footnote{In this context a Gaussian random variable $ X \sim \mathcal{N}(\mu,0)$ is defined as Dirac measure at $\mu$} \citep[Definition 4.1]{van2008stochastic}. If $F$ is a GRE then $ \bbE \big[\| F \|^2 \big] < \infty$ \citep[Theorem 4.3]{van2008stochastic} and we can define the mean element $\bbE[F] \in H$ and the covariance operator $\mathbb{C}[F]: H \to H$ via
\begin{align}
    \bbE[F] &:= \int F(\omega) \, d \bbP(\omega)  \\
    \mathbb{C}[F] h &:= \int F(\omega)  \langle F ,h \rangle  \, d\bbP(\omega) - \langle m ,h \rangle m  
\end{align}
for $g \in H$, where the integrals are understood as Bochner integrals. By standard rules for Bochner integrals, one obtains $ \langle \bbE[F] , h \rangle =\bbE \big[ \langle F, h \rangle \big] $, $\langle \mathbb{C}[F] h , g \rangle = \mathbb{C}\big[ \langle F, h \rangle, \langle F,g \rangle \big] $ and consequently $ \langle F , h \rangle \sim \mathcal{N}\big( \langle \bbE[F] , h \rangle, \langle \mathbb{C}[F] h , h \rangle \big)$. 
The covariance operator $\mathbb{C}[F] : H \to H$ is a positive, symmetric, trace class operator \citep[Proposition 2.16]{da2014stochastic}. 
Conversely, for a given mean element $m \in H$ and given  positive, symmetric, trace class operator $C: H \to H$ there exits a GRE $F: \Omega \to H$ such that $\bbE[F] = m$ and $\bbC[F] = C$ \citep[Proposition 2.18]{da2014stochastic}. We write $F \sim \mathcal{N}(m,C)$ for a GRE with mean element $m$ and covariance operator $C$.

A probability measure $P$ defined on the Borel $\sigma$-algebra $\mathcal{B}(H)$ is called Gaussian measure if and only if the push-forward measure $T\#P: \mathcal{B}(\bbR) \to [0,1], B \mapsto T\#P(B) := P\big( T^{-1}(B) \big)$ is a Gaussian measure on $\mathcal{B}(\bbR)$ for all bounded, linear functionals $T:H \to \bbR$. Notice that by Riesz representation theorem every bounded, linear functional $T$ is of the form $T = \langle \cdot, h \rangle$ for a $h \in H$. Hence, by definition, if $F : \Omega \to H$ is a GRE then the push-forward measure $F \# \bbP$ is a GM. Consequently, the statements about GREs outlined above carry over to GMs mutatis mutandis. 

\citet{Bogachev1998} is the authoritative source on Gaussian measures and discusses their properties on general Fréchet spaces. Gaussian measures on Banach and Hilbert spaces are described in \citet{da2014stochastic}. \citet{van2008stochastic} discusses Banach space valued Gaussian random elements.

\section[Wasserstein gradient flow for probability measures on Hilbert spaces]{Wasserstein gradient flow in $\mathcal{P}_2(H)$}\label{ap:sec:WGF}

Let $\big( H, \langle \cdot , \cdot \rangle \big)$ be a separable Hilbert space and 
\begin{align}
    \mathcal{P}_2(H) := \left\{ \mu : \mathcal{B}(H) \to [0, 1] \, \big| \; \mu(H)=1,\, \int_H || u ||^2 \, d\mu(u)  < \infty \right\}
\end{align}
be the space of Borel probability measures on $\mathcal{B}(H)$ with finite second moment.  Here $\mathcal{B}(H)$ denotes the Borel $\sigma$-algebra on $H$. Define further for $Q \in \mathcal{P}_2(H)$ the Bochner spaces
\begin{align}
     L^2(Q) &:= \left\{ f: H \to \bbR \, \big| \, \int \big(f(u)\big)^2 \, dQ(u) < \infty \right\} \\
     L^2(Q;H) &:= \left\{ f: H \to H \, \big| \, \int \langle f(u), f(u) \rangle \, dQ(u)  < \infty \right\} 
\end{align}
where functions are identified $Q$-almost everywhere.

We are interested in calculating the Wasserstein gradient flow for 
\begin{align}
    L(Q) = \int_H \ell(u) \, dQ(u) + \KL (Q, \Pi), \label{ap:eq:vi-loss}
\end{align}
where $Q \in \mathcal{P}_2(H)$ and $\ell: H \to \bbR $ is a Fréchet differentiable function, $\Pi = \mathcal{N}(0,C)$ is a Gaussian measure on $H$ with covariance operator $C: H \to H$ (cf. Appendix \ref{ap:sec:GREs})  and $\KL$ the Kullback-Leibler divergence defined as 
\begin{align}
    \KL(Q, \Pi) := \int \log \left( \frac{dQ}{d\Pi} \right) (u)  \, dQ(u)
\end{align}
for $Q$ dominated by $\Pi$ where $dQ/d\Pi$ denotes the corresponding Radon-Nikodym derivative and $\KL(Q,\Pi) = \infty$ otherwise. 

The first step in deriving the Wasserstein gradient flow is to calculate the Wasserstein gradient. We present a definition for the Wasserstein gradient below which is taken from \citet[Definition 4.2.2]{figalli2021invitation} adjusted to the Hilbert space case. A more general definition can be found in Chapter 11.1 of \citet{ambrosio2005gradient}.

\begin{definition}
 Let $L : \mathcal{P}_2(H) \to [0, \infty]$ be a functional. The Wasserstein gradient at $Q$ is the unique element $\phi \in L^2(Q;H)$ (if it exists) such that 
 \begin{align}
     \frac{d}{dt} \big|_{t=0} L\big( Q(t) \big) = \int \langle \phi(u),  v(u) \rangle \, dQ(u) \label{eq:WG-def2}
 \end{align}
for all smooth curves $\big( Q(t) \big)_{t \in (-\epsilon, \epsilon)} \subset \mathcal{P}_2(H)$ with $Q(0) = Q$ and \textit{tangent vector} (explanation below) $v \in L^2(Q;H)$ at $t=0$. We write $\nabla_W L[Q]: H \to H$ for the Wasserstein gradient $\phi$ at $Q$ if it exists.
\end{definition}

We will now discuss how to construct a curve  $\big( Q(t) \big)_{t \in (-\epsilon, \epsilon)} \subset \mathcal{P}_2(H)$ at $Q$ with tangent vector $v$. Let $u(t;u_0) \in H$ be the solution to the ODE
\begin{align}
    u(0) &= u_0  \label{eq:ODE-1}\\ 
    u'(t) &= v \big( u(t) \big) \label{eq:ODE-2}
\end{align}
for $t \in (- \epsilon, \epsilon)$ with given initial value $u_0 \in H$. We assume that $v \in L^2(Q;H)$ is sufficiently regular so that a solution exits for small enough $\epsilon >0$. Then $\big( Q(t) \big)_{t \in (-\epsilon, \epsilon)} \in \mathcal{P}_2(H)$ with $Q(t):=u(t; \cdot) \# Q$ is a smooth curve in $L^2(H)$ with tangent vector $v$ at $Q \in \mathcal{P}_2(H)$.

Indeed, it suffices to show that \eqref{eq:WG-def2} holds for all curves constructed via \eqref{eq:ODE-1} and \eqref{eq:ODE-2} \citep[Theorem 8.3.1]{ambrosio2005gradient}.

\begin{theorem}\label{thm:WG-KLD}
    The Wasserstein gradient for $L$ in \eqref{ap:eq:vi-loss} is given as
    \begin{align}
        \nabla_W L[Q](u) = D \ell(u) + D \log (dQ/d\Pi)(u)
    \end{align}
    for all $u \in H$ and $Q\in \mathcal{P}_2(H)$ dominated by $\Pi = \mathcal{N}(0,C)$. Notice that $L(Q) = \infty$ whenever $Q$ is not dominated by $\Pi$ and in this case the Wasserstein gradient is undefined.
\end{theorem}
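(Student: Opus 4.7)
The plan is to compute $\frac{d}{dt}\big|_{t=0} L(Q(t))$ along smooth curves $Q(t) = u(t;\cdot)\#Q$ generated by the ODE \eqref{eq:ODE-1}--\eqref{eq:ODE-2} with tangent field $v \in L^2(Q;H)$, and then identify the Wasserstein gradient $\phi$ as the unique element of $L^2(Q;H)$ whose inner product with every such $v$ produces this derivative. By linearity, it suffices to treat the two summands of \eqref{ap:eq:vi-loss} separately: the linear potential term $\int \ell \, dQ$ and the KL divergence $\KL(Q, \Pi)$.

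For the potential term, the change-of-variables formula yields $\int \ell\, dQ(t) = \int \ell\big(u(t;u_0)\big)\, dQ(u_0)$. Differentiating under the integral (justified by dominated convergence together with Fréchet differentiability of $\ell$ and the $L^2(Q;H)$-regularity of $v$) and using $u'(0;u_0) = v(u_0)$ gives
\begin{align*}
\frac{d}{dt}\Big|_{t=0} \int \ell \, dQ(t) = \int \langle D\ell(u_0), v(u_0)\rangle \, dQ(u_0),
\end{align*}
so this term contributes $D\ell$ to the Wasserstein gradient.

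For the KL term, writing $q(t) := dQ(t)/d\Pi$ and $q := q(0)$, the goal is to prove
\begin{align*}
\frac{d}{dt}\Big|_{t=0} \KL(Q(t), \Pi) = \int \big\langle D \log q(u), v(u) \big\rangle \, dQ(u).
\end{align*}
I would first verify this on cylindrical data: if $v$ and $Q$ live on a finite-dimensional subspace $\operatorname{span}\{e_1,\dots,e_N\}$ of eigenvectors of $C$, then $\Pi$ restricts to a non-degenerate Gaussian on $\bbR^N$ and the claim reduces to the classical Euclidean computation via the continuity equation $\partial_t \rho + \nabla \cdot (\rho v) = 0$ followed by integration by parts against $\log(\rho/\pi)$. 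To transfer this to general $Q$, $v$, I would use the orthonormal eigenbasis $\{e_m\}$ of $C$ to produce cylindrical truncations and pass to the limit; the key analytical tool here is the Gaussian integration-by-parts identity on $H$ (for $h \in \operatorname{Im}(C)$, $\int \langle D\varphi, h\rangle \, d\Pi = \int \varphi \, \langle C^{-1}h, \cdot\rangle \, d\Pi$) which guarantees that the cylindrical calculations are consistent and that the limiting expression is $D\log q = Dq/q$. Summing this with the potential contribution and appealing to the uniqueness of the $L^2(Q;H)$-representer in \eqref{eq:WG-def2} then identifies $\nabla_W L[Q](u) = D\ell(u) + D\log q(u)$ as claimed.

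The principal obstacle is making the manipulations on the KL term rigorous in infinite dimensions, where no translation-invariant reference measure is available: the chain rule for $\log q(t)$, the continuity equation, and the integration-by-parts step all require that $q$ and its logarithmic derivative belong to Sobolev-type spaces on the Gaussian space $(H, \Pi)$ (concretely, $D\log q \in L^2(Q;H)$). I would handle this by restricting attention to $Q \in \mathcal{P}_2(H)$ with this regularity and justifying the cylindrical approximation through either Malliavin calculus on $(H,\Pi)$ or by direct appeal to the abstract framework of Chapters 10--11 in \citet{ambrosio2005gradient}, which yields exactly the stated form of the Wasserstein gradient for entropy functionals with Gaussian reference measures under these hypotheses.
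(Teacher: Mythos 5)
Your proposal is correct in outline and lands on the same two-term decomposition the paper uses, but the mechanics differ in both halves. The paper's proof first derives an evolution equation for the density $q(t) = dQ(t)/d\Pi$ directly in the Hilbert space, by combining the weak-form Fokker--Planck equation with the Gaussian integration-by-parts formula of \citet{da2006introduction}; it then substitutes this $\partial_t q$ into \emph{both} the potential term and the entropy term and undoes the integration by parts to read off $\int \langle D\ell + D\log q_t, v\rangle\, dQ_t$. You instead differentiate the potential term directly via the pushforward change of variables and the chain rule, which is more elementary and sidesteps the density-evolution machinery entirely for that term; and for the KL term you propose a cylindrical (finite-dimensional) reduction along the eigenbasis of $C$ followed by a limiting argument, rather than the paper's direct infinite-dimensional computation. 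Both routes hinge on the same key tool (Gaussian integration by parts on $H$) and neither is fully rigorous about regularity --- the paper repeatedly stipulates ``for regular enough $v$'' and assumes Fr\'echet differentiability of everything in sight, while you explicitly flag that $D\log q \in L^2(Q;H)$ must be imposed and defer the justification to Malliavin calculus or Chapters 10--11 of \citet{ambrosio2005gradient}. What your route buys is a cleaner potential-term computation and an argument that reduces the genuinely infinite-dimensional step to a classical Euclidean one; what it costs is the unexecuted convergence argument for the cylindrical truncations of the entropy term, which the paper avoids by working with the Hilbert-space Fokker--Planck equation from the outset.
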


\begin{proof}
Let $\big( Q(t) \big)$ be the curve constructed in \eqref{eq:ODE-1} and \eqref{eq:ODE-2}.We denote by $q(t) := dQ(t)/d\Pi$ the Radon-Nikodym derivative of $Q(t)$ with respect to $\Pi$ which exists for regular enough $v$ in a small enough neighbourhood around $t=0$. 

We start the proof by deriving an equation for the time evolution of $q(t)$. 
The Hilbert space version of the Fokker-Planck equation (FPE) \citep[Section 14.2.2]{da2014stochastic} states that
\begin{align}
    \frac{d}{dt} \int \varphi(u) \, dQ_t(u) = \int \big\langle v(u), \, D \varphi(u) \big\rangle \, dQ_t(u)
\end{align}
for all $\varphi \in C^2_b(H)$. For the LHS of the FPE we have
\begin{align}
    \frac{d}{dt} \int \varphi(u) \, dQ_t(u) =  \int \varphi(u) \partial_t q(t,u) \, d\Pi(u)
\end{align}
and for the RHS of the FPE we obtain 
\begin{align}
    \int \big\langle v(u), \, D \varphi(u) \big\rangle \, dQ_t(u) 
    &= \int \big\langle v(u), \, D \varphi(u) \big\rangle q(t,u) \, d\Pi(u) \\
    & = \int \big\langle v(u) q(t,u), \, D \varphi(u) \big\rangle  \, d\Pi(u) \\
&= \int - \text{Tr} \left(D \big[ v(u) q(t,u) \big] \right) \varphi(u) \, d\Pi(u) \\
&+ \int \big\langle C^{-1} u, v(u) q(t,u)  \big\rangle \varphi(u) \, d\Pi(u)
\end{align}
where the last equality holds due to the integration by parts (IBP) formula for Gaussian measures on Hilbert spaces \citep[Lemma 10.1 and Section 10.4]{da2006introduction} for regular enough $v$. Here $\text{Tr}$ denotes the trace operator and all Fréchet derivatives are with respect to the $u$-variable. Technically speaking $D$ is the Friedrichs operator introduced in Section 10 of \citet{da2006introduction}. However, we will always assume Fréchet differentiability of all quantities involved and in this case $D$ coincides with the Fréchet derivative. Combining both calculation gives the equality 
\begin{align}
     &\int \varphi(u) \partial_t q(t,u) \, d\Pi(u) \\
     &= \int - \text{Tr} \left(D \big[ v(u) q(t,u) \big] \right) \varphi(u) \, d\Pi(u) 
+ \int \big\langle C^{-1} u, v(u) q(t,u)  \big\rangle \varphi(u) \, d\Pi(u)
\end{align}
for all regular enough test functions $\varphi:H \to \bbR$ and consequently
\begin{align}
    \partial_t q(t,u) = - \text{Tr} \left(D \big[ v(u) q(t,u) \big] \right)   
+ \big\langle C^{-1} u, v(u) q(t,u)  \big\rangle \label{eq:density-evolution}
\end{align}
holds for all $u \in H$ and all $t$ in a small enough interval around $t = 0$. Also note that $q(0,\cdot) = q := dQ/d\Pi$ by construction.

We will now calculate the Wasserstein gradient for $L$ at $Q$. By standard rules for Radon-Nikodym derivatives we obtain
\begin{align}
    L\big( Q(t) \big) = \underbrace{\int \ell(u) q(t,u) \, d\Pi(u)}_{(a)} + \underbrace{\int \log q_t(u) q_t(u) \, d\Pi(u)}_{(b)}.
\end{align}
We calculate for (a)
\begin{align}
    \frac{d}{dt} \int \ell(u) q(t,u) \, d\Pi(u) &= \int \ell(u) \partial_t q(t,u) \, d\Pi(u) \\
    &= \int \ell(u) \left(  - \text{Tr} \left(D \big[ v(u) q(t,u) \big] \right)   
+ \big\langle C^{-1} u, v(u) q(t,u)  \big\rangle \right) \, d\Pi(u) \\
&= \int \big \langle D\ell(u) , v(u) q(t,u) \big \rangle \, d\Pi(u) \\
&= \int \big \langle D \ell(u) , v(u) \big\rangle \, dQ_t(u),
\end{align}
where the second equality follows from \eqref{eq:density-evolution} and the third euality from IBP. For (b) we calculate
\begin{align}
    &\frac{d}{dt} \int \log q_t(u) q_t(u) \, d\Pi(u) \\
    &= \int \partial_t q(t,u) (1 + \log q_t(u) ) \, d\Pi(u) \\
    &= \int \left( - \text{Tr} \left(D \big[ v(u) q(t,u) \big] \right)   
+ \big\langle C^{-1} u, v(u) q(t,u)  \big\rangle  \right) \big( 1 + \log q_t(u) \big) \, d\Pi(u) \\
&= \int \big \langle D ( 1 + \log q_t(u) ), v(u) q(t,u) \big \rangle \, d\Pi(u) \\
&= \int \big \langle D (  \log q_t(u) ), v(u) \big \rangle \, dQ_t(u) 
\end{align}
where we again use \eqref{eq:density-evolution} and IBP. We put the calculation for (a) and (b) together and obtain 
\begin{align}
    \frac{d}{dt} L\big( Q(t) \big) = \int \big\langle D \ell(u) + D \log q_t(u) , v(u) \big \rangle \, d\Pi(u) \label{eq:WG-def}.
\end{align}
We evaluate the RHS of \eqref{eq:WG-def} for $t=0$ and see that the Wasserstein gradient at $Q$ is given as
\begin{align}
\nabla_W L[Q](u) = D \ell(u) + D \log q(u)
\end{align}
for $u \in H$ with $q = dQ/d\Pi$.  Note that this is precisely what we would expect from the finite-dimensional case.
\end{proof}
The next step is to identify a suitable stochastic process $\big( F(t) \big)$ with $F(t): \Omega \to H$ such that $\text{Law}[F(t)] = Q(t)$ where $Q(t)$ is the WGF at time $t$.
\begin{theorem}\label{thm:WGF}
    Let $\big( F(t) \big)_{t \in [0, T] }$ be the solution (which we assume exists, see \citet{hairer2007analysis} or \citet{da2014stochastic} for conditions) to 
    \begin{align}
        F(0) &\sim Q_0 \\
        d F(t) &= - \left(D \ell\big( F(t) \big) + C^{-1} F(t)  \right) dt + \sqrt{2} dW(t) \label{eq:SDE-ID-2}
    \end{align}
    for $t \in [0, T]$ where $Q_0 \in \mathcal{P}_2(H)$ is given, $C^{-1}$ is the inverse of the covariance operator $C$ and $\big(W(t)\big)$ a cylindrical Wiener process. Then $Q(t) := \text{Law} \big[ F(t) \big]$ follows the Wasserstein gradient flow for $L$ in \eqref{eq:WG-def2} and starts at $Q_0$. 
\end{theorem}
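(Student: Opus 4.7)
The plan is to verify that $Q(t) := \text{Law}[F(t)]$ satisfies the WGF characterization \eqref{eq:wgf-characterisation}, with the Wasserstein gradient of $L$ supplied by \Cref{thm:WG-KLD}. The natural route is to apply Itô's formula in Hilbert space (see e.g. \citet{da2014stochastic}) to smooth cylindrical test functions $\varphi \in \text{Cyl}(H \times [0,T])$, take expectations, and then compare the resulting weak form with \eqref{eq:wgf-characterisation} after eliminating second-order terms via integration by parts against the Gaussian prior.

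Concretely, I would first apply Itô's formula to $\varphi(t, F(t))$, which for cylindrical $\varphi$ has only finitely many active directions so the trace term is well-defined. After taking expectation and integrating in time, the stochastic integral vanishes, leaving
\begin{IEEEeqnarray}{rCl}
    \bbE\big[\varphi(T, F(T))\big] - \bbE\big[\varphi(0, F(0))\big] = \int_0^T \!\!\int \Big( \partial_t \varphi - \langle D\ell(f) + C^{-1}f, D\varphi\rangle + \mathrm{Tr}(D^2\varphi) \Big) \, dQ(t)(f) \, dt. \nonumber
\end{IEEEeqnarray}
The crucial next step is to recognise that $\mathcal{L}\varphi := \mathrm{Tr}(D^2\varphi) - \langle C^{-1}f, D\varphi\rangle$ is precisely the Ornstein--Uhlenbeck generator associated with $\Pi = \mathcal{N}(0, C)$, for which the Gaussian integration by parts identity used in the proof of \Cref{thm:WG-KLD} yields $\int (\mathcal{L}\varphi) \, q(t)\, d\Pi = -\int \langle D\varphi, Dq(t)\rangle \, d\Pi = -\int \langle D\varphi, D\log q(t)\rangle \, dQ(t)$, where $q(t) = dQ(t)/d\Pi$. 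Substituting collapses the second-order terms and produces exactly $-\int \langle D\ell + D\log q(t), D\varphi\rangle \, dQ(t)$, which by \Cref{thm:WG-KLD} equals $-\int \langle \nabla_W L[Q(t)], D\varphi\rangle \, dQ(t)$. Restricting to test functions $\varphi$ compactly supported in $(0,T)$ kills the boundary terms and reproduces \eqref{eq:wgf-characterisation}, while the initial condition $Q(0) = Q_0$ is built into the SDE.

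The main obstacle is justifying the infinite-dimensional machinery rigorously. First, one must argue that the SDE \eqref{eq:SDE-ID-2} admits a solution with law $Q(t) \in \mathcal{P}_2(H)$ for all $t$, for which we can invoke \citet{hairer2007analysis}; second, one must ensure $Q(t) \ll \Pi$ with a Radon--Nikodym derivative regular enough to carry Fréchet derivatives in the Malliavin--Sobolev sense so that the Gaussian IBP formula genuinely applies. The latter is delicate because it requires smoothing from the Ornstein--Uhlenbeck component of the drift and regularity transported from $Q_0$; this is precisely the technical content that is glossed by the remark in the statement referring the reader to \citet{hairer2007analysis} and \citet{da2014stochastic}. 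Once these analytical prerequisites are in place, the Itô-plus-IBP computation above is essentially bookkeeping and reproduces the WGF identity.
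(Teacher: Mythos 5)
Your proposal is correct and is essentially the paper's own argument run in reverse: the paper starts from the WGF characterisation (grounding its existence in Theorem 11.2.12 of \citet{ambrosio2005gradient} under an added convexity assumption on $\ell$), applies the Gaussian integration-by-parts identity to arrive at the Fokker--Planck equation of the SDE \eqref{eq:SDE-ID-2}, and explicitly notes that the argument reverses, whereas you start from the SDE, obtain the weak Fokker--Planck form via It\^{o}'s formula on cylindrical test functions, and apply the very same IBP identity to recover \eqref{eq:wgf-characterisation} with the Wasserstein gradient from \Cref{thm:WG-KLD}. The central computation is identical in both directions; the only point you leave implicit (as does the paper, which handles it by assuming $\ell$ convex so the target is log-concave and the flow exists and is characterised uniquely) is why satisfying the continuity-equation characterisation suffices to identify $Q(t)$ as \emph{the} gradient flow rather than merely a solution of the associated PDE.
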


\begin{proof} Recall that the loss $L$ in \eqref{eq:WG-def2} for all $Q$ dominated by $\Pi$  can be written as 
\begin{align}
    L(Q) = \KL (Q ,\Pi_{\text{B}} ) + \text{ const.}
\end{align}
where $\Pi_{\text{B}}$ is the Bayesian posterior \citep{wild2022variational}. Recall further that by Bayes theorem
\begin{align}
   \frac{d \Pi_{\text{B}}}{d\Pi}(u) = \frac{p(y|u)}{p(y)} 
\end{align}
for all $u \in H$ where $p(y|u)$ is the likelihood function, $p(y)= \int p(y|u) \, d\Pi(u) $ the marginal likelihood and $\ell(u) := - \log p(y|u)$ as introduced in the main text. Note that $\Pi_{\text{B}}$ is log-concave in the sense of Definition 9.4.9 in \citet{ambrosio2005gradient} as long as $\ell(u)$ is convex \citep[Theorem 9.4.11]{ambrosio2005gradient}. We will assume $\ell$ to be convex from now on which is typically true for functional losses. 
Theorem 11.2.12 in \citet{ambrosio2005gradient} implies that the Wasserstein gradient flow $\big( Q(t) \big)$ for $L$ exists and further that the Radon-Nikodym derivative of $Q(t)$ with respect to $\Pi_{\text{B}}$ exists and that $\rho_t := dQ(t)/d \Pi_{\text{B}}$ satisfies 
\begin{align}
    \int_0^T \int_H \partial_t \varphi(t,u) - \langle D \log \rho_t(u), D \varphi(t,u) \rangle \, dQ_t(u) \, dt = 0 \label{eq:WGF-charact-eq}
\end{align}
for all test functions $\varphi: [0,T] \times H \to \bbR$. 
We want to rewrite this equation in terms of $q(t) :=  dQ(t)/d\Pi$. First note that $q(t)$ exists since $\Pi$ and $\Pi_{\text{B}}$ are equivalent in our case \citep[Section 1.3]{ghosal2017fundamentals}. We further have by the chain-rule for Radon-Nikodym densities
\begin{align}
    \log \rho_t(u) &= \log (dQ(t)/d \Pi_{\text{B}})(u) \\
    &= \log q_t(u) + \log  (d\Pi/d\Pi_{\text{B}}) (u) \\
    &= \log q_t(u) - \log ( d \Pi_{\text{B}}/ d\Pi)(u) \\
    &=\log q_t(u) - \log p(y|u) + \log p(y) \\
    &= \log q_t(u) + \ell(u) + \log p(y).
\end{align}
We plug this into \eqref{eq:WGF-charact-eq} and obtain the dynamics for $q(t)$ as
\begin{align}
    \int_0^T \int_H \partial_t \varphi(t,u) - \langle D \log q_t(u) + D \ell(u) , D \varphi(t,u) \rangle \, dQ_t(u) \, dt = 0. \label{eq:wgf-char-proof}
\end{align}
Since $D \log q_t + D \ell = \nabla_W L[Q(t)]$ (cf. Theorem \ref{thm:WG-KLD}) we conclude that $q(t)$ indeed satisfies \eqref{eq:wgf-characterisation} and therefore follows the WGF.

However note that every solution to \eqref{eq:wgf-char-proof} satisfies (differentiate w.r.t. to time)\footnote{The reverse may be easier: Multiply both sides of \eqref{eq:FPE-2} with a test function and then integrate with respect to time. This gives \eqref{eq:wgf-char-proof}.}
\begin{align}
    \frac{d}{dt} \int \psi(u) dQ_t(u) = -\int_H \langle D \log q_t(u) + D \ell(u) , D \psi(u) \rangle \, dQ_t(u) \label{eq:FPE-2}
\end{align}
for all $t$ and all test functions $\psi:H \to \bbR$. The RHS of \eqref{eq:FPE-2} is equal to
\begin{align}
    &-\int_H \langle D \log q_t(u) + D \ell(u) , D \psi(u) \rangle \, dQ_t(u) \\
    &= - \int \langle D \ell(u), D \psi(u) \rangle dQ_t(u) - \int_H \langle D \log q_t(u)  , D \psi(u) \rangle \, dQ_t(u) \\
    &= - \int \langle D \ell(u), D \psi(u) \rangle dQ_t(u) - \int_H \langle D  q_t(u)  , D \psi(u) \rangle \, d\Pi(u) \\
    &= - \int \langle D \ell(u), D \psi(u) \rangle dQ_t(u) + \int \text{Tr} \big[ D^2 \psi(u) \big] \, dQ_t(u) - \int \langle C^{-1} u, D \psi(u) \rangle d Q_t(u) \label{eq:ibp-result}
\end{align}
where the last equality follows from integration by parts for Gaussian measures on Hilbert spaces \citep[Lemma 11.1.9]{da2002second}. We combine \eqref{eq:FPE-2} and \eqref{eq:ibp-result} to obtain
\begin{align}
    \frac{d}{dt} \int \psi(u) dQ_t(u) = \int  \mathcal{A}_t \psi(u) d Q_t(u) \label{eq:FPE-3}
\end{align}
where $\mathcal{A}_t$ is the Kolmogorov operator defined as 
\begin{align}
    \mathcal{A}_t \psi(u) = \text{Tr} \big[ D^2 \psi(u) \big] + \big\langle - D \ell(u) - C^{-1}u, D\psi(u) \big\rangle.
\end{align}
We recognise \eqref{eq:FPE-3} as the Fokker-Planck equation (FPE) associated with the solution to \eqref{eq:SDE-ID-2} (see Chapter 14.2.2 of \citet{da2014stochastic} or \citet{bogachev2008parabolic,bogachev2010existence} for earlier references). Note that any solution $\big( Q(t) \big)$ to the FPE solves \eqref{eq:wgf-char-proof} by reversing the above argument. As a consequence $\big( Q(t) \big)$ satisfies the WGF if and only if it is a solution to the FPE. We can therefore simulate the SDE \eqref{eq:SDE-ID-2} in order to follow the WGF.
\end{proof}

\section{The moments of the posterior measure }\label{ap:sec:moments-posterior}
Our Bayesian model consist of:
\begin{itemize}
    \item A prior measure $\Pi \in \mathcal{P}_2(H)$
    \item A likelihood function $p: \bbR^N \times H \to [0, \infty)$, i.e. $p$ is $\mathcal{B}(\bbR^N) \otimes \mathcal{B}(H)$- $\mathcal{B}([0,\infty))$ measurable and there exists a measure $\mu \in \mathcal{P}(\bbR^N)$ such that 
\begin{align}
    \int p(y|f) \, d\mu(y) = 1 \label{eq:mkf-property}.
\end{align}
\end{itemize}
According to Bayes' theorem, the posterior $\Pi_{\text{B}}$ exists and has a Radon-Nikodym derivative with respect to the prior $\Pi$ given as
\begin{align}
    \frac{d\Pi_{\text{B}}}{d \Pi}(f) = \frac{p(y|f)}{p(y)},
\end{align}
where $p(y):= \int p(y|f) \, d\Pi(f)$. By definition of the likelihood function, we know that  $\gamma(A):= \int_A p(y) \, d\mu(y)$, $A \in \mathcal{B}(\bbR^N)$, is a probability measure. We can now state the theorem.

\begin{theorem}\label{thm:posterior-moment}
The Bayesian posterior satisfies $\Pi_{\text{B}} \in \mathcal{P}_2(H)$ for $\gamma$-almost every $y \in \bbR^N$.
\end{theorem}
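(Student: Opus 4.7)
The plan is to apply Tonelli's theorem to swap the order of integration in $\int \int \|f\|^2 p(y|f)\, d\Pi(f)\, d\mu(y)$, using the defining property \eqref{eq:mkf-property} of the likelihood and the assumption that $\Pi \in \mathcal{P}_2(H)$ to show this double integral is finite. From there, standard arguments on $\mu$-null vs.\ $\gamma$-null sets transfer the conclusion to $\gamma$-a.e.\ $y$.

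Concretely, I would proceed in four short steps. First, I would verify the measurability prerequisites: the map $(y,f)\mapsto \|f\|^2 p(y|f)$ is product-measurable and nonnegative because $p$ is product-measurable by assumption and $f\mapsto \|f\|^2$ is Borel on $H$; this lets me invoke Tonelli's theorem on $\bbR^N\times H$ equipped with $\mu\otimes\Pi$. Second, I would compute
\begin{align*}
\int_{\bbR^N}\!\int_H \|f\|^2 p(y|f)\, d\Pi(f)\, d\mu(y)
= \int_H \|f\|^2 \Big(\int_{\bbR^N} p(y|f)\, d\mu(y)\Big) d\Pi(f)
= \int_H \|f\|^2\, d\Pi(f) < \infty,
\end{align*}
where the inner integral equals $1$ by \eqref{eq:mkf-property} and finiteness follows from $\Pi \in \mathcal{P}_2(H)$. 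Hence the set
\[
A := \Big\{\, y \in \bbR^N \;:\; \int_H \|f\|^2 p(y|f)\, d\Pi(f) = \infty \,\Big\}
\]
satisfies $\mu(A)=0$.

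Third, I would transfer this $\mu$-a.e.\ statement to $\gamma$-a.e.\ by noting that $\gamma$ has density $p(y)$ with respect to $\mu$, so $\gamma \ll \mu$; thus $\gamma(A)=0$ as well. In addition, the set $N := \{y : p(y)=0\}$ satisfies $\gamma(N) = \int_N p(y)\, d\mu(y) = 0$. Combining, for every $y \notin A \cup N$ (a $\gamma$-conull set), we have $p(y) > 0$ and $\int_H \|f\|^2 p(y|f)\, d\Pi(f) < \infty$. Fourth, for such $y$, Bayes' theorem yields
\begin{align*}
\int_H \|f\|^2\, d\Pi_{\text{B}}(f) \;=\; \frac{1}{p(y)} \int_H \|f\|^2 p(y|f)\, d\Pi(f) \;<\; \infty,
\end{align*}
and $\Pi_{\text{B}}(H) = \frac{1}{p(y)}\int_H p(y|f)\, d\Pi(f) = 1$, so $\Pi_{\text{B}} \in \mathcal{P}_2(H)$.

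The proof is essentially a Fubini argument, so I do not anticipate a genuine obstacle. The only subtlety is being careful that the two exceptional sets $A$ and $N$ are both $\gamma$-null; this is easy once one notes $\gamma \ll \mu$ and that $N$ is precisely the $\mu$-set where the density of $\gamma$ vanishes. No convexity or smoothness of $\ell$ is needed, and the argument uses only that $\Pi$ has finite second moment (which, as mentioned in the paper, holds automatically for GRE priors).
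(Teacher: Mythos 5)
Your proof is correct and is essentially the paper's argument in a different notation: the Tonelli swap you perform is exactly the tower property $\bbE\big[\|F\|^2\big] = \int \bbE\big[\|F\|^2 \mid Y=y\big]\, d\gamma(y)$ that the paper uses, with the conditional second moment written out explicitly via the Bayes density $p(y|f)/p(y)$. The only cosmetic differences are that you invoke the standing assumption $\Pi \in \mathcal{P}_2(H)$ where the paper cites Fernique's theorem for the Gaussian prior, and that you are more explicit about the two $\gamma$-null exceptional sets ($A$ and $\{y : p(y)=0\}$), which the paper leaves implicit.
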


\begin{proof}
Let $\big( \Omega, \mathcal{A}, \bbP)$ be an (sufficiently large) underlying probability space and $F: \Omega \to H$, $Y: \Omega \to \bbR^N$ two measurable mappings such that $F \#\bbP = \Pi, \, Y\#\bbP \sim \gamma$ and 
\begin{align}
    \bbP( Y \in A | F= f) = \int_A p(y|f) \, d\mu(y)
\end{align}
for all $A \in \mathcal{B}(\bbR^N)$, $f \in H$. By the above construction, the claim of Theorem \ref{thm:posterior-moment}  holds if and only if 
\begin{align}
    \bbE\big[\|F \|^2 | Y = y \big] < \infty
\end{align}
for $\gamma$-almost every $y \in \bbR^N$. However, we know by the tower-property of expected values, that 
\begin{align}
    \bbE\big[ \| F \|^2 \big] = \int \bbE\big[\|F \|^2 | Y = y \big] \, d \gamma(y).
\end{align}
Since $\bbE[||F||^2] < \infty$ by Fernique's theorem  \citep[Theorem 2.7]{da2014stochastic}, we can immediately conclude that $\bbE\big[\|F \|^2 | Y = y \big] < \infty$ for $\gamma$-almost every $y \in \bbR^N$.    
\end{proof}

\begin{remark}
The proof above is identical for finite-dimensional parameters. However, we felt that including an infinite-dimensional version may still provide some additional benefit.

Furthermore, the result can easily be generalised to arbitrary moments, in the sense that, for a fixed $l \in \bbN$, it holds that $\bbE \big[ \|F \|^l \big] < \infty$ 
implies $\bbE \big[ \| F \|^l \, | \,  Y = y\big] < \infty$ for $\gamma$-almost every $y \in \bbR^N$.

\end{remark}

\section{Gaussian random elements with values in the RKHS}\label{ap:sec:GRE_in_RKHS}

In this section we prove the existence of a GRE $F: \Omega \to H_k$ with covariance operator $C:H_k \to H_k$ defined in \eqref{eq:def-cov-op}. This allows us to effectively parameterise RKHS valued GREs via the kernel $k$ from the RKHS $H=H_k$.

\begin{lemma}\label{lemma:GRE-in-RKHS}
Let $k: \calX \times \calX \to \bbR$ be a positive, symmetric and continuous kernel on a compact metric space $\calX$ and $H=H_k$ the associated RKHS. Let further $\nu \in \mathcal{P}(\calX)$ be a Borel probability measure with full support and assume that $\int k(x,x) \, d\nu(x) < \infty$. Then there exists a Gaussian random element $F \sim \mathcal{N}(0,C)$ in $H$ with covariance operator $C: H \to H$ given as
\begin{align}
    Cf = \int k(\cdot, x') f(x') \, d\nu(x'). \label{eq:def-cov-op-ap}
\end{align}
\end{lemma}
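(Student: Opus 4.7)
The plan is to verify that the operator $C$ defined in \eqref{eq:def-cov-op-ap} is a positive, self-adjoint, trace-class operator on $H_k$; the existence of the corresponding GRE then follows from the standard result (e.g.\ Proposition~2.18 in \citet{da2014stochastic}) that such operators are in one-to-one correspondence with centred GREs on a separable Hilbert space. Note that under the stated assumptions (continuous kernel on a compact metric space), $H_k$ is separable, so this standard result applies.

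First I would show that $C$ is well-defined as an operator $H_k \to H_k$ by interpreting the integral as a Bochner integral with values in $H_k$. Using the reproducing property, $f(x') = \langle f, k(x', \cdot)\rangle_{H_k}$, and $\|k(\cdot, x')\|_{H_k} = \sqrt{k(x', x')}$, so
\begin{IEEEeqnarray}{rCl}
\int \|k(\cdot, x') f(x')\|_{H_k} \, d\nu(x') \le \|f\|_{H_k} \int k(x',x') \, d\nu(x') < \infty,
\nonumber
\end{IEEEeqnarray}
by Cauchy--Schwarz and the trace assumption. Hence $Cf \in H_k$ for every $f \in H_k$, and $C$ is bounded.

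Next I would verify self-adjointness and positivity via a direct calculation using the reproducing property: for $f,g \in H_k$,
\begin{IEEEeqnarray}{rCl}
\langle Cf, g\rangle_{H_k} = \int \langle k(\cdot, x'), g\rangle_{H_k} f(x') \, d\nu(x') = \int g(x') f(x') \, d\nu(x'),
\nonumber
\end{IEEEeqnarray}
which is symmetric in $f,g$ (hence $C$ is self-adjoint) and non-negative when $f=g$ (hence $C$ is positive).

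The main step, and the only one requiring actual work, is trace-classness. For this I would pick an orthonormal basis $\{e_n\}_{n \in \bbN}$ of $H_k$ (which exists by separability) and compute
\begin{IEEEeqnarray}{rCl}
\sum_{n=1}^\infty \langle C e_n, e_n\rangle_{H_k} = \sum_{n=1}^\infty \int (e_n(x'))^2 \, d\nu(x') = \int \sum_{n=1}^\infty \langle e_n, k(x', \cdot)\rangle_{H_k}^2 \, d\nu(x'),
\nonumber
\end{IEEEeqnarray}
where the interchange of sum and integral is justified by Tonelli's theorem. Parseval's identity yields $\sum_n \langle e_n, k(x',\cdot)\rangle^2 = \|k(x',\cdot)\|_{H_k}^2 = k(x',x')$, so
\begin{IEEEeqnarray}{rCl}
\sum_{n=1}^\infty \langle C e_n, e_n\rangle_{H_k} = \int k(x',x') \, d\nu(x') < \infty
\nonumber
\end{IEEEeqnarray}
by hypothesis. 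Thus $C$ is trace-class, and invoking the cited existence result gives the GRE $F \sim \mathcal{N}(0,C)$. The only subtle point is ensuring the separability of $H_k$ and the measurability needed to apply Tonelli; both follow from continuity of $k$ on the compact metric space $\calX$ together with the full-support Borel measure $\nu$, which is why these hypotheses are included in the statement.
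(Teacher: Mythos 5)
Your proof is correct, but it takes a genuinely different route from the paper. You verify directly on $H_k$ that $C$ is bounded, self-adjoint, positive, and trace-class (via the Bochner-integral norm bound, the identity $\langle Cf,g\rangle_{H_k}=\int f\,g\,d\nu$, and the Parseval computation $\sum_n \langle e_n,k(x',\cdot)\rangle^2=k(x',x')$), and then invoke the standard existence result for centred GREs with a given covariance operator. The paper instead works on $L^2(\nu)$: it takes the integral operator $T_k$ there, applies the spectral theorem, constructs the isometric isomorphism $S:L^2(\nu)\to H_k$ with $Sf=\sum_n\sqrt{\lambda_n}\langle f,b_n\rangle_2\,b_n$, pushes forward a GRE $G\sim\mathcal{N}(0,T_k)$ to $F:=S\circ G$, and verifies $C=ST_kS^*$ by computing the action of $S^*$ on the eigenbasis. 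Your argument is shorter and more self-contained (and, incidentally, does not actually use the full-support assumption on $\nu$, which the paper needs for $S$ to be an isometric isomorphism onto $H_k$). What the paper's construction buys is structural information that is reused downstream: it identifies the eigenvalues of $C$ with the Mercer eigenvalues of $T_k$ and relates the eigenfunctions $e_m=\sqrt{\lambda_m}\,b_m$, which underpins the Nyström estimation in Appendix H and the identity $\bbE_{X\sim\nu}[(e_m(X))^2]=\lambda_m$ used in the proof of Theorem~\ref{thm:PLS-optimal-close}. Your proof establishes existence cleanly but would leave those facts to be derived separately.
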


\begin{proof}
Let
\begin{align}
    L^2(\nu, \bbR):= \left\{ f: \calX \to \bbR \, : \, \int \big( f(x) \big)^2 \, d\nu(x) < \infty \right\}
\end{align}
be the space of equivalence classes of $\nu$-almost everywhere identical square-integrable functions with inner product denoted by $\langle \cdot ,\cdot \rangle_2$. Define $T_k: L^2(\nu, \bbR) \to L^2(\nu, \bbR) $´as $T_k f = \int k(\cdot, x')  f(x') \, d\nu(x') $. Note that $T_k$ and $C$ have the same functional form but are defined on different spaces. 

Under the assumptions on $k$ we know that $T_k$ is self-adjoint, compact operator and therefore the spectral theorem guarantees the existence of an orthonormal basis $\{ b_n \}_{n=1}^\infty \subset L_2(\nu,\bbR)$ and eigenvalues $\{ \lambda_n \}_{n=1}^\infty \subset \bbR_+$ such that 
\begin{align}
    T_k f = \sum_{n=1}^\infty \lambda_n \langle f, b_n \rangle_2 \,  b_n
\end{align}
where the sum converges in $L^2( \nu, \bbR)$. We can now define $S: \mathcal{L}^2(\nu, \bbR) \to  H_k$ via
\begin{align}
    S f = \sum_{n=1}^\infty \sqrt{\lambda_n} \langle f, b_n \rangle_2 \,  b_n. \label{eq:S-iso-iso}
\end{align}
It is well-known that $S$ an isometric isomorphism between $L^2(\nu, \bbR)$ and $H_k$ \citep[Theorem 4.5.1]{SteChr2008}. In particular this means
\begin{align}
    \langle S f , S g \rangle = \langle f, g \rangle_2
\end{align}
for all $f,g \in L^2(\nu, \bbR)$. 

According to Theorem 1 in \citet{wild2022generalized}, there exists a GRE $G$ in $L^2(\nu, \bbR)$ with $G \sim \mathcal{N}(0,T_k)$. Our goal is now to show that $F:= S_k \circ G$ is a Gaussian random element in $H_k$ with covariance operator $C$. 

Lemma 5.6 in \citet{kukush2020gaussian} implies that $F$ is a GRE in $H_k$ with covariance operator $S T_k S^*$ where $S^*$ is the adjoint operator of $S$. It remains to show that $C=S T_k S^*$. 

First, note that $S^*: H_k \to \mathcal{L}^2(\nu,\bbR)$ is characterised by satisfying 
\begin{align}
    \langle S b_i , b_j \rangle = \langle b_i, S^* b_j \rangle_2 \label{eq:S_adjoint1}
\end{align}
for all $(i,j) \in \bbN^2$ since $\{ b_n \}_{n=1}^\infty$ is an orthonormal system in $H_k$. By construction, we know that $S b_j = \sqrt{\lambda_j} b_j$ and therefore
\begin{align}
    \langle S b_i , b_j \rangle = \frac{1}{\sqrt{\lambda_j}} \langle S b_i , S b_j \rangle = \frac{1}{\sqrt{\lambda_j}} \langle b_i, b_j \rangle_2 \label{eq:S_adjoint2}
\end{align}
where the last equality follows from the isometry property of $S$. Combining \eqref{eq:S_adjoint1} and \eqref{eq:S_adjoint2}  leads to 
\begin{align}
    \langle b_i, S^* b_j \rangle_2 =  \frac{1}{\sqrt{\lambda_j}} \langle b_i, b_j \rangle_2
\end{align}
for all $(i,j) \in \bbN^2$ and therefore $S^*b_j = \frac{1}{\sqrt{\lambda_j}} b_j$ for all $j \in \bbN$. It now follows immediately that 
\begin{align}
    (S T_k S^*) b_j &=  \frac{1}{\sqrt{\lambda_j}} S T_k b_j =  \lambda_j b_j.
\end{align}
On the other hand, we have
\begin{equation}
    C b_j = T_k b_j = \lambda_j b_j
\end{equation}
for all $j \in \bbN$ by the spectral theorem and therefore $C = S T_k S^*$ by density of $\{b_n\}_{n=1}^{\infty}$ in $H_k$ as claimed.
\end{proof}

We show that $F$ naturally induces a Gaussian process $G$ albeit with a new kernel $r$ which is a smoothed version of $k$.

\begin{lemma}\label{lemma:GRE-GP-analogy}
Let $F \sim \mathcal{N}(0,C)$ be the GRE in the RKHS $H=H_k$ with covariance operator $C$ defined in \eqref{eq:def-cov-op-ap} (cf. Lemma \ref{lemma:GRE-in-RKHS}). Define $G(x) := \langle F , k(x, \cdot) \rangle$ and $ G := \big\{ G(x) \, : \, x \in \calX \big\}$. Then $G$ is a Gaussian process with kernel $r$ given by
\begin{align}
    r(x,x') = \int k(x,z) k(z, x') \, d\nu(z)
\end{align}
for all $x,x' \in \calX$.
\end{lemma}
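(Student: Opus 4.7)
The plan is to verify the two defining properties of a Gaussian process with kernel $r$: (i) finite-dimensional distributions of $G$ are jointly Gaussian, and (ii) the covariance function $\mathrm{Cov}\bigl(G(x), G(x')\bigr)$ matches $r(x,x')$ while the mean vanishes.

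First I would establish Gaussianity. By the reproducing property, $G(x) = \langle F, k(x,\cdot)\rangle$ is the evaluation of a bounded linear functional on the GRE $F \sim \mathcal{N}(0,C)$, so $G(x)$ is a real-valued centered Gaussian random variable by the definition of a GRE (cf. Appendix \ref{ap:sec:GREs}). For joint Gaussianity of any finite collection $G(x_1),\hdots,G(x_n)$, I would note that every linear combination satisfies
\begin{IEEEeqnarray}{rCl}
\sum_{i=1}^n a_i\, G(x_i) \;=\; \Bigl\langle F,\, \sum_{i=1}^n a_i\, k(x_i,\cdot) \Bigr\rangle, \nonumber
\end{IEEEeqnarray}
which is again a linear functional evaluated at the GRE $F$, hence Gaussian. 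This is precisely the Cramér–Wold criterion for joint Gaussianity, so $\bigl(G(x_1),\hdots,G(x_n)\bigr)$ is a centered Gaussian vector.

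Next I would compute the covariance. Using properties of the covariance operator of a GRE, for arbitrary $h_1, h_2 \in H_k$ we have $\mathrm{Cov}\bigl(\langle F, h_1\rangle, \langle F, h_2\rangle\bigr) = \langle C h_1, h_2\rangle$. Specialising to $h_i = k(x_i,\cdot)$ gives
\begin{IEEEeqnarray}{rCl}
\mathrm{Cov}\bigl(G(x), G(x')\bigr) &=& \bigl\langle C\, k(x,\cdot),\, k(x',\cdot) \bigr\rangle. \nonumber
\end{IEEEeqnarray}
Plugging in the definition \eqref{eq:def-cov-op-ap} of $C$, I obtain
\begin{IEEEeqnarray}{rCl}
C\, k(x,\cdot) \;=\; \int k(\cdot, z)\, k(x,z)\, d\nu(z), \nonumber
\end{IEEEeqnarray}
so that, interchanging the inner product with the Bochner integral and invoking the reproducing property $\langle k(\cdot, z), k(x',\cdot)\rangle = k(z,x')$, we arrive at
\begin{IEEEeqnarray}{rCl}
\mathrm{Cov}\bigl(G(x), G(x')\bigr) \;=\; \int k(x,z)\, k(z, x')\, d\nu(z) \;=\; r(x,x'). \nonumber
\end{IEEEeqnarray}
A brief check of the mean completes this step: $\mathbb{E}[G(x)] = \langle \mathbb{E}[F], k(x,\cdot)\rangle = 0$.

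The only real technical point is justifying the interchange of the inner product and the integral in the computation of $C\,k(x,\cdot)$. This follows from standard properties of Bochner integrals once we verify that $z \mapsto k(\cdot, z)\, k(x, z)$ is Bochner integrable in $H_k$, which holds under the standing assumptions on $k$ and $\nu$ (continuity of $k$, compactness of $\calX$, and $\int k(z,z)\, d\nu(z) < \infty$ used in Lemma \ref{lemma:GRE-in-RKHS}). After that, the argument is essentially bookkeeping; no serious obstacle is anticipated.
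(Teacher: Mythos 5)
Your proposal is correct and follows essentially the same route as the paper's proof: both reduce joint Gaussianity to Gaussianity of linear combinations $\langle F, \sum_i a_i k(x_i,\cdot)\rangle$ via the defining property of a GRE, and both compute the covariance as $\langle C k(x,\cdot), k(x',\cdot)\rangle$ by interchanging the inner product with the Bochner integral and applying the reproducing property. The only cosmetic difference is that the paper computes the variance of the full linear combination at once while you compute pairwise covariances, which is equivalent.
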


\begin{proof}
The process $G$ is a GP with kernel $r$ if and only if $G(X) \sim \mathcal{N}\big(0, r(X,X)\big)$ for all $X=(x_1, \hdots, x_N) \in \calX^N$. The later is by definition equivalent to 
\begin{align}
    \alpha^T G(X) \sim \mathcal{N}(0, \alpha^T r(X,X) \alpha ).
\end{align}
for all $\alpha \in \bbR^N$ which we will prove hereafter. Let now $\alpha \in \bbR^N$ and $X=(x_1, \hdots, x_N) \in \calX^N$ arbitrary. We then have
\begin{align}
     \alpha^T G(X) &= \sum_{n=1}^N \alpha_n \langle G, k(x_n,\cdot) \rangle \\
     &= \big\langle G, \underbrace{\sum_{n=1}^N \alpha_n k(x_n, \cdot)}_{=:h} \big\rangle \label{eq:GRE-in-RKHS}
\end{align}
where the first equality follows from the reproducing property and the second by bilinearity of the scalar product. By definition of GREs (cf. Appendix \ref{ap:sec:GREs}), we infer from \eqref{eq:GRE-in-RKHS}  that $\alpha^T G(X)$ is Gaussian random variable and it's variance is given as 
\begin{align}
    \mathbb{V}[\alpha^T G(X)] &= \langle C h , h \rangle = \sum_{n,n'=1}^N \alpha_n \alpha_{n'} \big \langle C k(x_n, \cdot) , k(x_{n'}, \cdot) \rangle.
\end{align}
By standard rules for Bochner integrals, we further obtain
\begin{align}
    \big\langle C k(x_n, \cdot) , k(x_{n'}, \cdot) \big\rangle &= \left\langle \int k(\cdot, z) k(z, x_n) \, d\nu (z), k(x_{n'}, \cdot) \right\rangle \\
    &= \int \big\langle k(\cdot, z) , k(x_{n'}, \cdot) \big\rangle k(z, x_n) \, d \nu(z) \\
    &= \int  k(z, x_{n'}) k(z, x_n) \, d\nu(z) \\
    &= r(x_n, x_{n'})
\end{align}
for all $n,n'=1,\hdots,N$ which proves the claim
\end{proof}

\begin{remark}
This kernel $r$ was already proposed in in Section 3.1 of \citet{filippi2016bayesian} in an effort to construct a Gaussian processes with sample paths that lie in the RKHS $H=H_k$. Our derivation gives a natural interpretation of the corresponding GP as being derived from a GP with sample paths in $L^2(\nu)$ which is equipped with additional smoothness by virtue of the isometric isomorphism $S$ in \eqref{eq:S-iso-iso}.
\end{remark}

\section[The influence of the measure on the input data space]{The influence of the measure $\nu$}\label{ap:sec:the-role-of-nu}

The Gaussian process $G$ constructed in Lemma \ref{lemma:GRE-GP-analogy} has kernel $r$ given as 
\begin{align}
    r(x,x') = \int k(x,s) k(s,x') \, d\nu(s) \label{eq:r-kernel2}
\end{align}
for all $x,x' \in \calX$. In principle any Borel measure $\nu \in \mathcal{P}(\calX)$\footnote{In fact $\nu$ is only required to be finite and does not need to be a probability measure.} which satisfies the conditions in Lemma \ref{lemma:GRE-in-RKHS} leads to a valid GRE in $H_k$.

One idea is therefore to choose a combination of $k$ and $\nu$ such that $r$ can be calculated analytically. This strategy was proposed in \citet{filippi2016bayesian} and can be very effective. However, it severely restricts the class of available kernels $k$ since few kernel exists that lead to a tractable expression in \eqref{eq:r-kernel2}.

Alternatively, we can choose a probability measure $\nu \in \mathcal{P}(\calX)$ from which which it is easy to generate samples $\widehat{X}_1, \hdots \widehat{X}_{N_S} \sim \nu$ where $N_S \in \bbN$ is the number of samples. This leads to the Monte Carlo estimator 
\begin{align}
    r(x,x') \approx \frac{1}{N_S} k(x,\widehat{X}) k( \widehat{X}, x') := \frac{1}{N_S} \sum_{n=1}^{N_S} k(x,\widehat{x}_n) k(\widehat{x}_n, x')
\end{align}
for $r$ where $\widehat{X}:= ( \widehat{X}_1, \hdots \widehat{X}_{N_S}) $.

We now want to illustrate that the kernel effectively becomes useless, if the input data distribution, i.e. the law of $X_1,\hdots,X_N$ deviates strongly from $\nu$. To this end, assume that we use a squared exponential kernel $k$ with
\begin{align}
    k(x,x') = \exp\big( -|x-x'|^2\big)
\end{align}
for $x,x' \in \bbR$. Assume further that the data is given as $X_1,\hdots,X_N \sim \mathcal{N}(0,1)$. We now choose $\nu= \mathcal{N}(10,1)$ and generate samples $\widehat{X}_1, \hdots, \widehat{X}_{N_S} \sim \nu$. Take now two points $x,x'$ which are in a high-density region of $\mathcal{N}(0,1)$, e.g. $x=0$ and $x'=0.5$, then
\begin{align}
    r(x,x') &= \bbE_{\xi} \Big[ \exp\big( - |\xi|^2 - |0.5- \xi|^2 \big) \Big] \approx 1.3 \cdot 10^{-43} 
\end{align}
where $\xi \sim \nu = \mathcal{N}(10,1)$. Put differently, a kernel matrix based on $r$ would be extremely uninformative. This is why we choose $\nu = \frac{1}{N}\sum_{n=1}^N \delta_{x_n}$ or an approximation based on sub-samples of $x_1,\hdots,x_N$ in all our experiments.

\section{Langevin SDE in ONB representation}
The time evolution of the SDE in $H$ is given as (cf. Theorem \ref{thm:WGF})
\begin{align}
    d F(t) = - \left( D \ell \big( F(t) \big) + C^{-1} F(t) \right) dt + \sqrt{2} d W(t)
\end{align}
for $t \in (0, T]$ and $F(0) = F_0$ with $F_0 \in H$ given. Define now $F^{m}(t):= \langle F(t) , e_m \rangle$ where $\{\lambda_m, e_m \}_{m=1}^\infty$ is the spectral decomposition of the covariance operator $C$ in \eqref{eq:def-cov-op}. Define 
$\phi: H_k \to \bbR$ with $\phi(f):= \langle f, e_m \rangle$ and by linearity it's Fréchet derivatives are given as $D\phi(f) = e_m$ and $D^2 \phi =0$ and consequently by Ito's rule \citep[Chapter 4.4]{da2014stochastic} we obtain
\begin{align}
    d F^{m}(t) &= d \phi\big( F(t)\big) \\
    &=- \langle D \ell \big( F(t) \big) + C^{-1} F(t)  , e_m \rangle dt + \sqrt{2} d \langle W(t), e_m \rangle \\
    &= - \langle D \ell\big( F(t) \big), e_m \rangle dt - \langle C^{-1} F(t) , e_m \rangle dt + \sqrt{2} d \langle W(t), e_m \rangle.
\end{align}
Recall, that our loss $\ell$ is of the form $\eqref{eq:loss-pointwise}$ and we therefore can simplify further as 
\begin{align}
d F^{m}(t) &= -\sum_{n=1}^N (\partial_2 c)\big( y_n, F_t(x_n)\big) \langle k(x_n,\cdot) , e_m \rangle dt - \langle F(t), C^{-1} e_m \rangle dt + \sqrt{2} d \langle W(t), e_m \rangle \\
&=     - \Big(\sum_{n=1}^N (\partial_2 c)(y_n, F_t(x_{n})) e_{m}(x_n) + \frac{ F^m(t)}{\lambda_m} \Big) dt + \sqrt{2} d B^m(t),
\end{align}
where we used the reproducing property and that $C^{-1}$ is self adjoint. Note that $B^m(t) := \langle W(t), e_m \rangle$ $(m=1,\hdots,M)$ are stochastically independent Brownian motions by standard properties of the cylindrical Wiener process \citep[Chapter 4.1.2]{da2014stochastic}.

\section{The Nyström method}\label{sec:ap:nystroem-method}

The Nyström method \citep{williams2001using} is designed to approximate eigenvalue-eigenfunction pairs $\{\lambda_n, b_n \}_{n=1}^\infty$ of the kernel integral operator $T_k : \mathcal{L}_2(\nu;\bbR) \to \mathcal{L}_2(\nu;\bbR)$ defined as 
\begin{align}
    T_k f := \int k(\cdot, x') f(x') \, d\nu(x')
\end{align}
which we introduced in Appendix \ref{ap:sec:GRE_in_RKHS}. For $X_1,\hdots, X_N $ independent samples from $\nu$, the Nyström method calculates the spectral decomposition of $\frac{1}{N}k(x_{1:N}, x_{1:N}):= \frac{1}{N} \big( k(x_n, x_{n'})\big)_{n,n'=1}^N \in \bbR^{N \times N}$ as 
\begin{align}
    \frac{1}{N}k(x_{1:N}, x_{1:N}) = V \widehat{\Lambda} V^T \label{eq:ap:spectra-decomp}
\end{align}
where $V=(v_1|\hdots|v_N)$ and $\widehat{\Lambda}:=\text{diag}(\widehat{\lambda}_1, \hdots, \widehat{\lambda}_N)$. Then 
$$
\widehat{b}_n(x) := \frac{1}{ \sqrt{N} \widehat{\lambda}_n} v_n^T k(X,x) := \frac{1}{ \sqrt{N} \widehat{\lambda}_n} \sum_{j=1}^N v_{nj} k(x_j, x)
$$
approximates $b_n(x)$ and $\widehat{\lambda}_m$ approximates $\lambda_m$. 

However, $C$ has the same eigenvalues as $T_k$ and the eigenfunctions are related \citep[Theorem 4.51]{SteChr2008} with
$
    e_n = \sqrt{\lambda}_n b_n.
$
We can therefore use 
\begin{align}
    \widehat{e}_n(x):= \sqrt{\widehat{\lambda}_n} \widehat{b}_n(x) = \frac{1}{ \sqrt{N\widehat{\lambda}_n} } v_n^T k(X,x)
\end{align}
as approximation for $e_n$. 

In many cases, using all available data $X_1, \hdots,X_N$ would be computationally prohibitive since the spectral decomposition in \eqref{eq:ap:spectra-decomp} costs $\mathcal{O}(N^3)$. However, for typical kernels, such as the squared exponential, the eigenvalues decay very rapidly and only the first few eigenvalues contribute meaningfully to explain the data. We can therefore chose a subsample $z_1,\hdots, z_M$ of $x_1,\hdots,x_N$ and apply the Nyström method for $z_{1:M}:=(z_1,\hdots,z_M)$ where $M \ll N$
which leads to $\mathcal{O}(M^3)$ costs. We describe a heuristic in Appendix \ref{ap:sec:implementation-details} for choosing $z_{1:M}$.

\section{Sufficiency of Nyström projections}\label{sec:ap:nystroem-sufficient}

\begin{lemma}
Let  $F\sim \mathcal{N}(0,C)$ be a GRE and $Y_{1:N} \sim \gamma$ be two random mappings such that (cf. Appendix \ref{ap:sec:moments-posterior} for a construction)
\begin{align}
    \bbP( Y_{1:N} \in A | F=f) = \int_A p(y_{1:N}|f) \, d\mu(y_{1:N}).
\end{align}
We assume that the negative log-likelihood is of the form
\begin{align}
    - \log p(y_{1:N}|f) = \sum_{n=1}^N c(y_n, f(x_n)) + \text{ const}.
\end{align}
We define $\psi: \bbR^N \to \bbR$ as $\psi\big( f(x_{1:N}) \big):= \exp\left( - \sum_{n=1}^N c(y_n, f(x_n)) -\text{const}\right) $.
Then it holds that
\begin{align}
    \bbP\big( F(x^*) \in B \, | \, y_{1:N}, \widehat{F}^{1:N}\big) = \bbP\big( F(x^*) \in B \, | \, \widehat{F}^{1:N} \big) \label{eq:sufficiency-statematnet}
\end{align}
for all $x^* \in \calX^{N_*}$ and Borel sets $B \subset \bbR^{N_*}$. Furthermore, if $\nu = \frac{1}{N} \sum_{n=1}^N \delta_{x_n}$, then we have 
\begin{align}
    \widehat{\Pi}_{\infty}^{1:N}
    =
    {\Pi}_{\infty}^{1:N}
     =
    \widehat{\Pi}_{\operatorname{B}}^{1:N}
     =
    {\Pi}_{\operatorname{B}}^{1:N}
    = \frac{1}{Z} %
    \int \exp(- \widehat{V}_\infty(u) ) \, du \label{eq:measure-equality}
\end{align}
where $Z:= \int \exp(-\widehat{V}_\infty(u)) \, du$ and $\widehat{V}_\infty:\bbR^N \to \bbR$ given as
\begin{IEEEeqnarray}{rCl}
    \widehat{V}_\infty(u) & := & \sum_{n=1}^N c\big(y_n, \sum_{m=1}^N\widehat{e}_m(x_n) u_m \big) + \frac{1}{2} u^T \widehat{\Lambda}^{-1} u. 
\end{IEEEeqnarray}
with $\widehat\Lambda$ as  defined in Appendix \ref{sec:ap:nystroem-method}.
\end{lemma}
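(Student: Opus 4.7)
The plan is to establish the sufficiency relation \eqref{eq:sufficiency-statematnet} first (which requires no restriction on $\nu$) and then derive the chain of equalities in \eqref{eq:measure-equality} under the empirical-measure assumption $\nu=\frac{1}{N}\sum_n\delta_{x_n}$. The core technical fact powering both halves is that, when the Nyström inducing points coincide with the data $x_{1:N}$, the evaluation vector $F(x_{1:N})$ is a \emph{deterministic} function of $\widehat{F}^{1:N}$.

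\textbf{Sufficiency.} The likelihood factorises as $p(y_{1:N}\mid f)=\psi(f(x_{1:N}))$ by assumption. From Appendix \ref{sec:ap:nystroem-method}, the Nyström eigenfunctions $\widehat{e}_1,\ldots,\widehat{e}_N$ are linear combinations of $k(x_1,\cdot),\ldots,k(x_N,\cdot)$, and, under invertibility of the kernel matrix $k(x_{1:N},x_{1:N})$, the two families span the same $N$-dimensional subspace of $H_k$. Hence each canonical feature admits an expansion $k(x_n,\cdot)=\sum_{m=1}^N\alpha_{mn}\widehat{e}_m$, and the reproducing property gives $F(x_n)=\sum_m\alpha_{mn}\widehat{F}^m=:g_n(\widehat{F}^{1:N})$. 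Bayes' rule then yields
\[
\bbP(F(x^*)\in B\mid y_{1:N},\widehat{F}^{1:N}=u)\;=\;\frac{\psi(g(u))\,\bbP(F(x^*)\in B\mid \widehat{F}^{1:N}=u)}{\psi(g(u))}\;=\;\bbP(F(x^*)\in B\mid \widehat{F}^{1:N}=u),
\]
since $\psi(g(u))$ is constant in $B$ and cancels in the ratio.

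\textbf{Nyström exactness under the empirical measure.} When $\nu=\frac{1}{N}\sum_n\delta_{x_n}$, the operator \eqref{eq:def-cov-op} reduces to $Cf=\frac{1}{N}\sum_n k(\cdot,x_n)f(x_n)$, whose range lies in $\mathrm{span}\{k(x_n,\cdot)\}_{n=1}^N$. Restricting the eigenproblem $Ce=\lambda e$ to this $N$-dimensional subspace is equivalent to the matrix problem $\frac{1}{N}k(x_{1:N},x_{1:N})v=\lambda v$ that Nyström solves, so $\widehat{\lambda}_m=\lambda_m$ and $\widehat{e}_m=e_m$ for all $m=1,\ldots,N$; this gives $\widehat{V}_\infty=V_\infty$, $\widehat{\Pi}_\infty^{1:N}=\Pi_\infty^{1:N}$, and $\widehat{\Pi}_{\text{B}}^{1:N}=\Pi_{\text{B}}^{1:N}$, reducing \eqref{eq:measure-equality} to showing $\Pi_{\text{B}}^{1:N}=\Pi_\infty^{1:N}=Z^{-1}\exp(-V_\infty)\,du$.

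\textbf{Identifying both measures with the Gibbs density.} For $\Pi_\infty^{1:N}$, I would recognise \eqref{eq:components-Langevin-2} as a finite-dimensional overdamped Langevin SDE on $\bbR^N$: its drift equals $-\nabla V_\infty$ precisely because $\mathrm{Proj}[F](x_n)=F(x_n)$ on the $N$-dimensional span (so the nonlinearity in the drift really evaluates at $\sum_{m'}\widetilde{F}^{m'}e_{m'}(x_n)$), and standard SDE theory then gives the unique stationary distribution $Z^{-1}\exp(-V_\infty)$. For $\Pi_{\text{B}}^{1:N}$, the sufficiency argument already yields $F(x_n)=\sum_m e_m(x_n)F^m$, so the Bayes posterior density with respect to $\Pi$ pushes forward under $\phi^{1:N}$ to a density on $\bbR^N$ proportional to $\exp(-\sum_n c(y_n,\sum_m e_m(x_n)u_m))$ times the Gaussian prior density $\mathcal{N}(u;0,\Lambda_N)$; the product of these two factors is exactly $\exp(-V_\infty(u))$. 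The main delicate step throughout is the Nyström exactness claim, which rests on the (elementary but slightly technical) identification of the nonzero part of the spectrum of $C:H_k\to H_k$ with the spectrum of the empirical Gram matrix.
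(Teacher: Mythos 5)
Your proposal is correct and follows essentially the same route as the paper's own proof: both rest on the observation that $F(x_{1:N})$ is a deterministic linear function of $\widehat{F}^{1:N}$ (exactness of the Nystr\"om projection at the data points), which makes the likelihood $\sigma(\widehat{F}^{1:N})$-measurable and lets the likelihood factor cancel in the conditional density, and both then identify $\widehat{\Pi}^{1:N}_{\infty}$ as the Gibbs measure of the finite-dimensional Langevin potential and match it to the pushforward Bayes posterior with diagonal Gaussian prior $\mathcal{N}(0,\widehat{\Lambda})$. The only difference is presentational: you establish $\widehat{e}_m = e_m$, $\widehat{\lambda}_m = \lambda_m$ up front and work with unhatted objects, whereas the paper computes directly with the hatted quantities (e.g.\ showing $\langle C\widehat{e}_m,\widehat{e}_{m'}\rangle = \widehat{\lambda}_m\delta_{m,m'}$); your parenthetical that projection exactness is needed for the drift to equal $-\nabla V_\infty$ is unnecessary (it is needed only to match the Bayes posterior), but harmless.
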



\begin{proof}
Recall that $\widehat{F}^{1:N} := \big( \langle F, \widehat{e}_1 \rangle, \hdots, \langle F, \widehat{e}_N \rangle \big)^T$ with 
\begin{align}
    \widehat{e}_n(x) = \widehat{v}_n^T k_X(x) = \sum_{i=1}^N \widehat{v}_{ni} k(x , x_i), 
\end{align}
where $\widehat{v}_n  = \frac{1}{\sqrt{N \widehat{\lambda}_n} } v_n \in \bbR^N$ (cf. Appendix \ref{sec:ap:nystroem-method}). We now want to show that the orthogonal projection $\widehat{\Pr} [f] = \sum_{n=1}^N \langle f, \widehat{e}_n \rangle \widehat{e}_n$ is exact for all data points $x_1,\hdots,x_N$ which were used in the Nyström method, i.e $\widehat{\Pr}[f](x_n) = f(x_n)$ for all $n=1,\hdots,N$. To this end, we define $k_X(\cdot) = \big( k(x_1, \cdot), \hdots, k(x_N,\cdot) \big)^T$ and calculate
\begin{align}
    \widehat{\Pr}[f](x_n) &= \sum_{i=1}^N \langle f, \widehat{e}_i \rangle \widehat{e}_i(x_n) \\
    &= \sum_{i=1}^N  \widehat{v}_i^T f(x_{1:N}) \widehat{v}_i^T k_X(x_n) \\
    &= f(x_{1:N})^T \Big( \sum_{i=1}^N \widehat{v}_i \widehat{v}_i^T \Big) k_X(x_n) \\
    &= f(x_{1:N})^T \Big( \frac{1}{N} \sum_{i=1}^N \frac{1}{\lambda_i} v_i v_i^T \Big) k_X(x_n) \\
    &= f(x_{1:N})^T k(x_{1:N},x_{1:N})^{-1} k_X(x_n) \\
    &= f(x_n),
\end{align}
for all $n=1,\hdots,N$. This can equivalently be expressed as
\begin{align}
    f(x_{1:N}) = \widehat{\Pr}[f](x_{1:N}) = \sum_{i=1}^N \underbrace{\langle f, \widehat{e}_i\rangle }_{=:\widehat{f}^i}  \widehat{e}_i(x_{1:N})
\end{align}
for all $f \in H_k$. Consequently, we have 
\begin{align}
    p(y_{1:N}|f) = \psi\Big( \sum_{i=1}^N \widehat{f}^i \widehat{e}_i(x_{1:N}) \Big) \label{eq:measurable-fct-of-hats}
\end{align}
and conclude that the conditional density of $Y_{1:N}$ given $F=f$ is a measurable function of $\widehat{f}^{1:N}$. We now write $p(f(x^*)| \widehat{f}^{1:N}, y_{1:N})$ for the pdf of $F(x^*)$ given ($\widehat{F}^{1:N}, Y_{1:N}) =( \widehat{f}^{1:N}, y_{1:N})$ and similarly for other marginal and conditional distributions involved and obtain
\begin{align}
    p\big(f(x^*)| \widehat{f}^{1:N}, y\big) &= \frac{p\big(f(x^*), \widehat{f}^{1:N}, y_{1:N}\big)}{ p\big(\widehat{f}^{1:N}, y\big)} \\
    &= \frac{p\big(y_{1:N} | f(x^*), \widehat{f}^{1:N}\big)  p\big(f(x^*)| \widehat{f}^{1:N}\big) p\big(\widehat{f}^{1:N} \big)
    }{p\big( y_{1:N} | \widehat{f}^{1:N} \big) p\big( \widehat{f}^{1:N} \big)} .
\end{align}
Here, we notice that $p\big(y_{1:N} | f(x^*), \widehat{f}^{1:N}\big) = p(y_{1:N} |  \widehat{f}^{1:N}) $ because $p(y_{1:N}| f)$ is a measurable function of $\widehat{f}^{1:N}$ and consequently we obtain
\begin{align}
    p\big(f(x^*)| \widehat{f}^{1:N}, y_{1:N} \big) = p( f(x^*)| \widehat{f}^{1:N})
\end{align}
which shows \eqref{eq:sufficiency-statematnet}. It remains to show \eqref{eq:measure-equality}. However,
analogous to what we do in in Appendix \ref{ap:sec:asymptotics-pls}, it is easy to show that \eqref{eq:approx-langevin-components} is a Langevin diffusion with potential 
\begin{IEEEeqnarray}{rCl}
    \widehat{V}_\infty(u) & := & \sum_{n=1}^N c\big(y_n, \sum_{m=1}^N\widehat{e}_m(x_n) u_m \big) + \frac{1}{2} u^T \widehat{\Lambda}^{-1} u.
\end{IEEEeqnarray}
Consequently, we immediately infer that 
%
%
$\widehat{\Pi}_{\infty}^{1:N} \propto \exp(- \widehat{V}_\infty)$ which proves the second equality in \eqref{eq:measure-equality}. Furthermore, it follow from \eqref{eq:measurable-fct-of-hats} that
\begin{align}
   - \log p(y| \widehat{f}^{1:N}) &= - \log \psi\big( \sum_{m=1}^N \widehat{f}^m \widehat{e}_m(x_{1:N}) \big) \\
   &=\sum_{n=1}^N c\big(y_n, \sum_{m=1}^N\widehat{e}_m(x_{n}) \widehat{f}^m  \big)
\end{align}
and for the prior we know that $\widehat{F}^{1:N} \sim \mathcal{N} \Big(0 ,  \big(\langle C \widehat{e}_m , \widehat{e}_{m'} \rangle\big)_{m,m'=1}^N \Big)$ by definition of a GRE. By definition, $v_m$ is the eigenvector of $\frac{1}{N} k(x_{1:N}, x_{1:N})$ with eigenvalue $\widehat{\lambda}_m$ and further $r(x_{1:N}, x_{1:N}) = \frac{1}{N} k(x_{1:N},x_{1:N})^2$ due to $\nu= \sum_{n=1}^N \delta_{x_n}$ 
which leads to
\begin{align}
    \langle C \widehat{e}_m , \widehat{e}_{m'} \rangle &= \widehat{v}_m^T r(x_{1:N}, x_{1:N}) \widehat{v}_{m'} \\
    &=  \frac{1}{N}  \widehat{v}_{m}^T \big( k(x_{1:N}, x_{1:N}) \big)^2 \widehat{v}_{m'} \\
    &= \frac{1}{\sqrt{\widehat{\lambda}_m \widehat{\lambda}_{m'} }} v^T_m \big( \frac{1}{N} k(x_{1:N}, x_{1:N}) \big)^2 v_{m'} \\
    &= \widehat\lambda_m \delta_{m,m'},
\end{align}
where $\delta_{m,m'}$ denotes the Kronecker delta. As a result, we have by Bayes theorem
%
\begin{align}
    p_{\widehat{F}^{1:N}|Y_{1:N}}(u|y_{1:N}) &\propto \exp \Big( \log p(y_{1:N}| \widehat{F}^{1:N}=u) + \log p_{\widehat{F}^{1:N}}(u) \Big) \\
    &= \exp \Big( - \sum_{n=1}^N c\big(y_n, \sum_{m=1}^N\widehat{e}_m(x_n) u_m \big) - \frac{1}{2} u^T \widehat\Lambda^{-1} u \Big)
\end{align}
which proves the claim.
\end{proof}

\section{Matheron's Rule for Gaussian Random Elements}\label{sec:ap:matherons-rule}

Matheron's rule \citep{journel1976mining,wilson2020efficiently} is a simple trick to sample a conditional Gaussian. Let $\big( U, V\big) \sim \mathcal{N}(\mu, \Sigma)$ with 
\begin{align}
&\mu =
\begin{pmatrix}
 \mu_U \\
 \mu_V
\end{pmatrix} 
&&\Sigma=
\begin{pmatrix}
\Sigma_{UU} & \Sigma_{UV} \\
\Sigma_{VU} & \Sigma_{VV}
\end{pmatrix}
\end{align}
Then $U| V=v \sim \widetilde{U} + \Sigma_{UV} \Sigma_{VV}^{-1} \big( v-  \widetilde{V} \big) $ where $\big( \widetilde{V}, \widetilde{U} \big) \sim \mathcal{N}(\mu, \Sigma)$ is independent from $(U,V)$. In other words: We can transform a sample from the joint distribution of $(U,V)$ into a sample of the conditional distribution $U|V=v$. 

We want to use Matheron's rule to generate samples from $ F(X^*) | {F}^{1:M}=u$ where $X^* \in \calX^{N_*}$ is a set of input locations. Since $F \sim \mathcal{N}(0,C)$ is a GRE, we know that $\big(F(X^*), {F}^{1:M} \big)$ is jointly Gaussian with mean zero and covariance matrix 
\begin{align}
    R := \begin{pmatrix}
  \mathbb{C}\big[ F(X^*), F(X^*) \big] &  \mathbb{C}\big[ F(X^*), F^{1:M}\big] \\
\mathbb{C}\big[  F^{1:M }, F(X^*)\big]  &  \mathbb{C}\big[ F^{1:M}, F^{1:M}\big] 
\end{pmatrix}.
\end{align}
We calculate the relevant covariance matrices as 
\begin{align}
    &\mathbb{C}\big[ F(X^*), F(X^*) \big]  = r(X^*,X^*) 
    &&\mathbb{C}\big[ F^{1:M}, F^{1:M}\big] = \Lambda
\end{align}
where $\Lambda = \text{diag}(\lambda_1, \hdots, \lambda_M)$ and $r$ is the kernel defined in \eqref{eq:def-r-kernel}. Further, we have 
\begin{align}
   \mathbb{C}\big[  F^{m }, F(x_n^*)\big]  = \langle C e_{m}, k_{x_n^*}(\cdot) \rangle 
   = \lambda_m \langle e_{m}, k_{x_n^*}(\cdot) \rangle 
   = \lambda_m e_{m}(x_n^*)
\end{align}
for all $m=1,\hdots,M$, $n=1,\hdots,N^*$ and consequently 
\begin{align}
    \mathbb{C}\big[  F^{1:M }, F(X^*)\big]  = \Lambda e^{1:M}(X^*) \in \bbR^{M\times N_*},
\end{align}
where $(e^{1:M}(X^*))_{m,n} := e_{m}(x_n^*) $. Matheron's rule therefore takes the form 
\begin{align}
    F(X^*)|F^{1:M}=v \sim U + e^{1:M}(X^*)^T (v - V)
\end{align}
for all $u \in \bbR^M$ with $(U,V) \sim \mathcal{N}(0,R)$. Naturally, we do not have access to $e^{1:M}(X^*)$, $\Lambda$ and $R$. We therefore use the corresponding approximations
\begin{align}
\widehat{R}:=
\begin{pmatrix}
    \widehat{r}(X^*,X^*) &  \widehat{e}^{1:M}(X^*)^T \widehat{\Lambda}\\
    \widehat{\Lambda} \widehat{e}^{1:M}(X^*) & \widehat{\Lambda}
\end{pmatrix},
\end{align}
where $\widehat{\Lambda}$ and $(\widehat{e}^{1:M}(X^*))_{m,n} := \widehat{e}_m(x_n^*)$ ($m=1,\hdots,M$, $n=1,\hdots,N_*$)
are obtained from the Nyström approximation based on the samples $Z:=z_{1:M}$ (cf. Appendix \ref{sec:ap:nystroem-method}). The kernel matrix is approximated via
\begin{align}
    \widehat{r}(X^*,X^*) := \frac{1}{N_*+M} k(X^*,\widehat{X}) k(\widehat{X}, X^*)
\end{align}
with $\widehat{X}:=(X^*,Z) \in \calX^{N_*+M}$. Notice that inclusion of $X^*$ in $\widehat{X}$ leads to $\widehat{r}(X^*,X^*)$ having full rank $N^*$.

\section{Asymptotic Analysis of Projected Langevin Sampling}\label{ap:sec:asymptotics-pls}

\begin{lemma}
Let $F \sim \mathcal{N}(0,C)$ be a GRE and assume that $-\log p(y|f) = \ell_N( f(x_{1:N}))$ for a function $\ell_N: \bbR^N \to \bbR$. Let further $\left(\widetilde{F}^{1:M}(t)\right)_{t\geq 0}$ be the solution to the SDE whose components are given as
\begin{IEEEeqnarray}{rCl}
    d \widetilde{F}^m(t) = - \left(\sum_{n=1}^N (\partial_n \ell_N )\Big( {\Pr} [F(t)](x_{1:N}) \Big) {e}_{m}(x_n) + \frac{\widetilde{F}^m(t)}{\lambda_m} \right) dt + \sqrt{2} d B^m(t), \label{eq:approx-langevin-components:ap}
\end{IEEEeqnarray}
where $\Pr[F(t)] = \sum_{m=1}^M \widetilde{F}^m(t) e_m$. Notice, that for $\ell_N\big( f(x_{1:N} ) \big) = \sum_{n=1}^N c\big(y_n , f(x_n) \big)$ we recover \eqref{eq:approx-langevin-components}. We want to show that 
\begin{align}
    \widetilde{F}^{1:M}(t) \overset{\mathcal{D}}{\longrightarrow} \Pi^{1:M}_\infty
\end{align}
for $t \to \infty$ where $\Pi^{1:M}_\infty$ has the potential
\begin{IEEEeqnarray}{rCl}
    V_\infty(u) & =  \ell_N\big( \mu_u(x_{1:N}) \big) + \frac{1}{2} u^T {\Lambda_M}^{-1} u + \text{ const.}
    \label{eq:potential-asy-distr:ap}
\end{IEEEeqnarray}
for $u \in \bbR^M$ where $\mu_u(x_{1:N})=u^T e^{1:M}(x_{1:N}) $ and $\Lambda_M = \text{diag}(\lambda_1,\hdots,\lambda_M)$.
\end{lemma}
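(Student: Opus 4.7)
The plan is to recognize that the finite-dimensional SDE in \eqref{eq:approx-langevin-components:ap} is simply a standard overdamped Langevin diffusion on $\bbR^M$ with potential $V_\infty$, and then invoke classical ergodicity results to identify its limiting law. The core of the proof is therefore a short gradient computation; the rest is bookkeeping.

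First, I would rewrite the drift of \eqref{eq:approx-langevin-components:ap} as the gradient of a scalar potential. Let $u:=\widetilde F^{1:M}(t)\in\bbR^M$ and observe that, by the definition of $\Pr[F(t)]$ and $e^{1:M}(x_{1:N})$, we have $\Pr[F(t)](x_n) = \sum_{m=1}^M u_m e_m(x_n) = \mu_u(x_n)$, so the $n$-th component of $\Pr[F(t)](x_{1:N})$ equals the $n$-th component of $\mu_u(x_{1:N})$. Computing partial derivatives of
\begin{equation*}
V_\infty(u) = \ell_N\big(\mu_u(x_{1:N})\big) + \tfrac{1}{2} u^T \Lambda_M^{-1} u
\end{equation*}
via the chain rule and the linearity $\partial_{u_m}\mu_u(x_n) = e_m(x_n)$, I obtain
\begin{equation*}
\partial_{u_m} V_\infty(u) = \sum_{n=1}^N (\partial_n \ell_N)\big(\mu_u(x_{1:N})\big)\, e_m(x_n) + \frac{u_m}{\lambda_m},
\end{equation*}
which matches precisely the negative of the drift coefficient for $\widetilde F^m(t)$ in \eqref{eq:approx-langevin-components:ap}. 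Hence the system can be written compactly as $d\widetilde F^{1:M}(t) = -\nabla V_\infty\big(\widetilde F^{1:M}(t)\big)\, dt + \sqrt{2}\, dB^{1:M}(t)$ where $B^{1:M}$ is standard Brownian motion in $\bbR^M$ (by the independence of the scalar $B^m$'s noted after \eqref{eq:components-Langevin}).

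Second, I would apply standard ergodicity results for finite-dimensional Langevin SDEs to conclude. The Fokker–Planck equation associated to such an overdamped Langevin diffusion has the Gibbs measure with density proportional to $\exp(-V_\infty(u))$ as its unique invariant distribution, provided $V_\infty$ is sufficiently regular and $\exp(-V_\infty)$ is integrable. Integrability is immediate from the quadratic confinement term $\tfrac12 u^T\Lambda_M^{-1} u$ together with mild growth assumptions on $\ell_N$ (for instance, boundedness from below, which holds whenever $p(y_{1:N}|\cdot)$ is bounded above, or one of the standard confinement/Lyapunov-drift conditions as in \citet{roberts1996exponential}). Under these conditions, classical results yield weak convergence $\widetilde F^{1:M}(t) \xrightarrow{\mathcal D} \Pi_\infty^{1:M}$ as $t\to\infty$, where $\Pi_\infty^{1:M}(du) \propto \exp(-V_\infty(u))\, du$, matching \eqref{eq:potential-asy-distr:ap} up to the additive constant absorbed into the normalisation.

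The only genuine subtlety is ensuring the regularity conditions required to invoke ergodicity; for the loss classes considered in the paper (convex and Lipschitz $\ell_N$ as in \Cref{thm:PLS-optimal-close}, or the Gaussian and Bernoulli examples of Section \ref{sec:loss}), $V_\infty$ is smooth and strongly confining at infinity, so existence of a unique strong solution and geometric ergodicity follow from standard references such as \citet[]{roberts1996exponential}. No new machinery is needed beyond identifying the gradient structure above.
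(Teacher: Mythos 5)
Your proposal is correct and follows essentially the same route as the paper's proof in Appendix \ref{ap:sec:asymptotics-pls}: compute $\partial_{u_m}V_\infty$ via the chain rule, recognise the SDE as an overdamped Langevin diffusion with potential $V_\infty$, and invoke standard ergodicity results from \citet{roberts1996exponential}. Your additional remarks on integrability and confinement are a slight elaboration of the paper's ``under mild assumptions on $V$'' but do not change the argument.
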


\begin{proof}
Let $V_\infty$ be the potential above and note that, by the chain-rule, we have
\begin{align}
    \partial_m V_\infty(u) = \sum_{n=1}^N \partial_n \ell_N\big( \mu_u(x_{1:N}) \big) e_m(x_n) + \frac{u_m}{{\lambda}_m}
\end{align}
where $\partial_m$ denotes the derivative with respect to $m$-th coordinate. Let further $\nabla V_\infty= ( \partial_1 V_\infty, \hdots, \partial_M V_\infty)^T$ be the gradient of $V_\infty$. We now immediately recognise \eqref{eq:approx-langevin-components:ap} as Langevin diffusion for the potential $V$, since it holds that
\begin{align}
    d \widetilde{F}^m(t) = - \partial_m V\big( \widetilde{F}^{1:M}(t)\big) + \sqrt{2} dB^m(t). \label{eq:langevin-diff:ap}
\end{align}
It is well established \citep{roberts1996exponential} that---under mild assumption on $V$---the limiting distribution of  \eqref{eq:langevin-diff:ap} is given as $\Pi^{1:M}_\infty$.
\end{proof}

\section{Optimal Variational Approximation}\label{sec:ap:optimal-var-approx}
We prove a slightly more general result. Indeed, we can derive the optimal variational distribution for arbitrary inducing features $U$ with $U:=  \big(\langle F , h_m \rangle \big)_{m=1}^M \in \bbR^M$ for set of functions $\{h_1,\hdots,h_M\} \subset H_k $. Theorem \ref{thm:characterisation-optimal-approximation} follows for the choice $h_m = e_m$, $m=1,\hdots,M$. Define now for  $\tau \in \mathcal{P}_2(\bbR^m)$ the measure
\begin{align}
    Q_\tau(A) := \int \bbP \big( F \in A \, | \, U=u \big) \, d\tau(u),  \label{eq:definition-calQ}
\end{align}
for $A \in \mathcal{B}(H_k)$ and the set $\calQ_M := \{ Q_\tau \, : \, \tau \in \mathcal{P}_2(\bbR^M) \} \subset \mathcal{P}_2(H_k)$. We can find a closed form expression for the best posterior approximation in $\calQ_M$.

\subsection{Proof of \Cref{thm:characterisation-optimal-approximation}}

Below, we derive \Cref{thm:optimal-measure-calQ} below without relying on the decomposition into eigenvalues and eigenvectors, since doing so will allow us to more easily draw upon the result in subsequent analysis. 
The required steps to translate the results of \Cref{thm:optimal-measure-calQ} into the more intuitively appealing form stated in \Cref{thm:characterisation-optimal-approximation} are trivial.

\begin{theorem}\label{thm:optimal-measure-calQ}
Define $\Pi^*_M$ as 
\begin{align}
    \Pi^*_M := \argmin_{Q \in \calQ_M} \KL(Q , \Pi_{\text{B}})
\end{align}
where $\Pi_{\text{B}}$ is the Bayesian posterior. Then $\Pi^*_M$ satisfies \eqref{eq:definition-calQ} with $\tau^* \propto \exp(- V^*(u))$ with
\begin{align}
    V^*(u) :=\bbE_{\xi} \Big[ \ell_N \big( \mu_u(x_{1:N}) + \sqrt{ \Sigma(x_{1:N})} \xi \big) \Big]  + \frac{1}{2} \Big( \langle C h, h \rangle \Big)^{-1}
\end{align}
for all $u \in \bbR^M$. Here, $\xi \sim \mathcal{N}(0,I_M)$, and we define 
\begin{align}
\mu_u(x_{1:N}) &:= \mathbb{C} \big[ F(x_{1:N}) , U ] \mathbb{C}[U,U]^{-1}u \\
\Sigma(x_{1:N})&:= \mathbb{C} \big[ F(x_{1:N}) , F(x_{1:N})] - \mathbb{C} \big[ F(x_{1:N}) , U ] \mathbb{C}[U,U]^{-1} \mathbb{C} \big[ U , F(x_{1:N}) ] \\
\langle C h, h \rangle &:= \big( \langle C h_m, h_{m'} \rangle \big)_{m,m'=1}^M
\end{align}
for all $u \in \bbR^M$. Notice that by standard properties of the GRE these covariance terms can be expressed in terms of the covariance operator $C$.
\end{theorem}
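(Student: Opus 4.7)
The plan is to compute $\KL(Q_\tau, \Pi_{\text{B}})$ explicitly as a functional of $\tau$, and then apply a Gibbs-type variational identity to identify the minimiser in closed form.

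First, I would note that the prior itself belongs to $\calQ_M$: writing $\tau_\Pi := \calN(0, \bbC[U,U])$ for the marginal law of $U$ under $\Pi$, the disintegration that defines $\calQ_M$ yields $\Pi = Q_{\tau_\Pi}$. For any $\tau \ll \tau_\Pi$, testing against bounded measurable $\varphi:H_k\to\bbR$ and using the tower property gives the Radon--Nikodym derivative
$$
\frac{dQ_\tau}{d\Pi}(f) \;=\; \frac{d\tau}{d\tau_\Pi}\bigl(U(f)\bigr),
$$
where $U(f):=(\langle f,h_m\rangle)_{m=1}^M$. Combined with Bayes' rule $d\Pi_{\text{B}}/d\Pi(f)\propto \exp(-\ell_N(f(x_{1:N})))$, this expresses $\log(dQ_\tau/d\Pi_{\text{B}})$ as the sum of $\log(d\tau/d\tau_\Pi)(U(f))$, $\ell_N(f(x_{1:N}))$, and a $\tau$-independent additive constant.

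Next, I would integrate against $Q_\tau$. The first term pushes forward under $U$ to give $\KL(\tau,\tau_\Pi)$. For the second, the tower property gives $\int g(u)\,d\tau(u)$ where $g(u) := \bbE[\ell_N(F(x_{1:N}))\mid U=u]$. The crucial observation is that under the GRE prior, $(F(x_{1:N}),U)$ is jointly Gaussian, so the conditional law of $F(x_{1:N})\mid U=u$ is $\calN(\mu_u(x_{1:N}),\Sigma(x_{1:N}))$ with $\mu_u$ and $\Sigma$ exactly the quantities defined in the theorem. Hence $g(u) = \bbE_\xi[\ell_N(\mu_u(x_{1:N})+\sqrt{\Sigma(x_{1:N})}\,\xi)]$, and
$$
\KL(Q_\tau,\Pi_{\text{B}}) \;=\; \KL(\tau,\tau_\Pi) \;+\; \int g(u)\,d\tau(u) \;+\; \text{const}.
$$

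The final step is a Gibbs-type variational argument: the right-hand side can be rewritten, up to an additive constant, as $\KL(\tau, \tau^*)$ where $\tau^*(du) \propto \exp(-g(u))\,d\tau_\Pi(u)$, and therefore its unique minimiser is $\tau^*$ itself. Substituting $d\tau_\Pi/du \propto \exp(-\tfrac{1}{2} u^\top \bbC[U,U]^{-1}u)$ yields $\tau^*\propto \exp(-V^*)$ with $V^*$ as stated. The version stated as Theorem~\ref{thm:characterisation-optimal-approximation} then follows by specialising to $h_m=e_m$, for which $\bbC[U,U]=\Lambda_M$.

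The main technical obstacle is the measure-theoretic bookkeeping on the infinite-dimensional space $H_k$: one must verify that $u\mapsto\bbP(F\in\cdot\mid U=u)$ is a regular conditional probability and that the Radon--Nikodym identity displayed above is valid. Both reduce to standard results on conditional Gaussian measures on separable Hilbert spaces, and once these are in place the remaining variational calculation is routine.
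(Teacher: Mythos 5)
Your proposal is correct and follows essentially the same route as the paper: decompose $\KL(Q_\tau,\Pi_{\text{B}})$ into $\KL(\tau,\Pi_U)$ plus the $\tau$-average of the conditional expected loss (plus a constant), identify the conditional law $F(x_{1:N})\mid U=u$ as $\calN(\mu_u,\Sigma)$, and finish with the Gibbs variational identity. The only difference is that you derive the KL decomposition from first principles via the pushforward identity $dQ_\tau/d\Pi = (d\tau/d\tau_\Pi)\circ U$, whereas the paper imports it directly from Theorem 4 of \citet{wild2022variational} (and cites Theorem 1 of \citet{knoblauch2019generalized} for the Gibbs step); your version is self-contained but substantively identical.
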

\begin{proof}
Let $Q \in \calQ_M$ be of the form \eqref{eq:definition-calQ} with probability measure $\tau$. By Theorem 4 in \citet{wild2022variational}, we know that 
\begin{align}
       \KL(Q, \Pi_{\text{B}}) =  \bbE_Q\Big[\ell_N(f(x_{1:N})) \Big] + \KL(\tau, \Pi_U) + \log p(y)
\end{align}
where $\Pi_U$ is the prior law of $U$ given as $\mathcal{N}(0, \langle C h ,h \rangle)$. Furthermore, we know that $F(X)|U$ is Gaussian for every $Q \in \calQ_M$ by definition $\calQ_M$. We therefore know  by standard properties of Gaussians that 
\begin{align}
    F(x_{1:N}) | U=u \sim \mu_u(x_{1:N}) + \sqrt{\Sigma(x_{1:N})} \xi
\end{align}
for fixed $u \in \bbR^M$ and a $\xi \sim \mathcal{N}(0, I_M)$. We can therefore condition on $U=u$ and use the tower property of expected values to obtain 
\begin{align}
    \KL(Q, \Pi_{\text{B}}) = \int_{\bbR^M} \bbE_{\xi}\Big[\ell_N\big( \mu_u(x_{1:N}) + \sqrt{\Sigma(x_{1:N})} \xi\big) \Big] \, d\tau(u) + \KL(\tau, \Pi_U) + \log p(y).
\end{align}
We  now define $L(\tau) := \int \phi(u) \, d\tau + \KL(\tau, \Pi_U)$ with $\phi(u) :=\bbE_{\xi}\Big[\ell_N\big( \mu_u(x_{1:N}) + \sqrt{\Sigma(x_{1:N})} \xi\big) \Big] $, $u \in \bbR^M$ and $\tau \in \mathcal{P}_2(\bbR^M)$. From the above calculations, it then immediately follows that
\begin{align}
    \min_{Q \in \calQ} \KL(Q, \Pi_{\text{B}}) = \min_{\tau} L(\tau) + \log p(y).
\end{align}
However, the global minimiser of $L$ is well-known and given as $\tau^* \propto \exp(- V^*(u))$ \citep[Theorem 1]{knoblauch2019generalized} with potential  
\begin{align}
    V^*(u) = \phi(u) - \log \frac{d \Pi_U}{du}(u) = \phi(u) + \frac{1}{2} \Big(\langle C h, h \rangle\Big)^{-1}.
\end{align}
This proves our claim.
\end{proof}

\section{Optimality of Projected Langevin Sampling}\label{sec:ap:optimality-pls}

Let  $\{ \lambda_n , e_n \}_{n=1}^\infty$  be the eigenvalue-eigenfunction pairs of the covariance operator $C$ with $\lambda_1>\lambda_2 > \hdots $ and define $F^{m}:= \langle F, e_m \rangle$, $m=1,\hdots,M$ and $F^{1:M}:= \big( F^{1}, \hdots, F^{M}\big)^T$. We simulate according to the PLS algorithm and obtain $\Pi^{1:M}_{\infty}$ with potential $V_\infty$ as derived in Appendix \ref{ap:sec:asymptotics-pls}. The PLS approximation to the posterior is defined as
\begin{align}
    \Pi_{\infty}(A) := \int \bbP( F \in A \mid F^{1:M}=u) \, d \Pi^{1:M}_{\infty} (u)
\end{align}
for $A \in \mathcal{B}(H_k)$. By construction, clearly $\Pi_{\infty}\in \calQ_M$ and so it is natural to compare $\Pi_{\infty}$ to the optimal measure $\Pi^*_M$ of Theorem \ref{thm:optimal-measure-calQ}.


\subsection{Proof of \Cref{thm:PLS-optimal-close}}

\begin{proof}
Let $\Pi$ the Gaussian prior measure. Define the map $\Phi : H_k \to \bbR^M$ as $\Phi(f):= \langle f, e^{1:M} \rangle := \big( \langle f, e_m  \rangle \big)_{m=1}^M$. The by construction we have
\begin{align}
    &\Phi \# \Pi_{\infty}= \Pi^{1:M}_{\infty} &&\text{ and }
    &&&\Phi \# \Pi^*_M = \tau^*.
\end{align}
Furthermore $\Pi_{\infty}\in \calQ$ means that $\Pi_{\infty}$ is dominated by the prior measure $\Pi$ and of the form \citep{matthews2016sparse,wild2023rigorous}
\begin{align}
    \frac{d \Pi_{\infty}}{d \Pi}(f) = \frac{d (\Phi \# \widehat{\Pi})}{d (\Phi \# \Pi)}\big( \phi(f) \big) = \frac{d \Pi^{1:M}_{\infty} }{d(\mathcal{N}(0, \Lambda_M))}\big( \phi(f) \big)
\end{align}
and similarly for $\Pi^*_M$
\begin{align}
    \frac{d \Pi^*_M }{d \Pi}(f) = \frac{d (\Phi \# \Pi^*_M)}{d (\Phi \# \Pi)}\big( \phi(f) \big) = \frac{d \tau^*}{d(\mathcal{N}(0, \Lambda_M))}\big( \phi(f) \big)
\end{align}
for all $f \in H_k$. Here $\Lambda_M := \text{diag}\big( \lambda_1, \hdots, \lambda_M)$, Consequently, we have by standard rules for Radon-Nikodym derivatives that 
\begin{align}
    \frac{d {\Pi}_{\infty}}{{ d \Pi^*_M}}(f) = \frac{d \Pi^{1:M}_{\infty}}{d \tau^*} ( \phi(f) )
\end{align}
for all $f \in H_k$. We can now calculate the KL-divergence as 
\begin{align}
    \KL(\Pi_{\infty}, \Pi^*_M ) &=  \int_H \log \Big( \frac{d \Pi_{\infty}}{d\Pi^*_M}\Big) (f) \, d\Pi_{\infty}(f) \\
    &= \int_H \log \Big( \frac{d \Pi^{1:M}_{\infty}}{ d \tau^*}\Big)( \phi(f) )  \, d\Pi_{\infty}(f) \\
    &= \int_{\bbR^M} \log \frac{d \Pi^{1:M}_{\infty}}{d \tau^*}(u) \, d \tau_{\infty}(u) \\
    &= \KL( \Pi^{1:M}_{\infty}, \tau^*)
\end{align}
where we applied the change of measure formula. It remains to find an upper bound for the KL-divergence between the two probability measures $\Pi^{1:M}_{\infty}, \tau^* \in \mathcal{P}_2(\bbR^M)$.

By Theorem \ref{thm:optimal-measure-calQ}, we know that $\tau^*$ has potential given as 
\begin{align}
   V^*(u) =  \bbE_{\xi} \Big[ \ell_N \big( \mu_u(x_{1:N}) + \sqrt{ \Sigma(x_{1:N})} \xi \big) \Big]  + \frac{1}{2} \Big( \langle C h, h \rangle \Big)^{-1}
\end{align}
where $\xi \sim \mathcal{N}(0,I_M)$. For $h_m = e_m$ the mean vector and covariance matrix can be calculated as 
\begin{align}
\mu_u(x_{1:N}) &:= \mathbb{C} \big[ F(x_{1:N}) , U ] \mathbb{C}[U,U]^{-1}u          \\&= \big\langle C k_{x_{1:N}}(\cdot) , e^{1:M}\rangle \big( \langle C e^{1:M},e^{1:M} \rangle \big)^{-1} u \\
&= e^{1:M}(x_{1:N})^T \Lambda_M (\Lambda_M)^{-1} u \\
& = e^{1:M}(x_{1:N})^T u  \label{eq:mu_u-is-ortho}
\end{align}
and further
\begin{align}
\Sigma(x_{1:N})&:= \mathbb{C} \big[ F(x_{1:N}) , F(x_{1:N})] - \mathbb{C} \big[ F(x_{1:N}) , U ] \mathbb{C}[U,U]^{-1} \mathbb{C} \big[ U , F(x_{1:N}) ] \\
&= r(x_{1:N},x_{1:N}) - e^{1:M}(x_{1:N})^T \Lambda_M e^{1:M}(x_{1:N})
\end{align}
where $e^{1:M}(x_{1:N}) := \big( e^m(x_n) \big)_{m,n=1}^N \in \bbR^{M \times N}$.

Further, the potential of $\Pi^{1:M}_{\infty}$ is given as (cf. Appendix \ref{ap:sec:asymptotics-pls}) 
\begin{align}
    V_\infty(u) &=\ell_N( \mu_u(x_{1:N}) ) + \frac{1}{2} u^T \Lambda_M^{-1} u
\end{align}
 where the second equality follows from the definition of $\ell_N$ and the calculations for $\mu_u(X)$. Due to the convexity of $\ell_N$, we obtain from Jensen's inequality 
 \begin{align}
     V^*(u) &=  \bbE_{\xi} \Big[ \ell_N \big( \mu_u(x_{1:N}) + \sqrt{ \Sigma(x_{1:N})} \xi \big) \Big]  + \frac{1}{2} u^T\Lambda_M^{-1} u \\
     &\ge \ell_N \Big( \bbE_{\xi}\big[ \mu_u(x_{1:N}) + \sqrt{ \Sigma(x_{1:N})} \xi \big] \Big)   + \frac{1}{2} u^T\Lambda_M^{-1} u \\
     &=V_\infty(u).
 \end{align}
By Lemma 3 in \citet{dalalyan2017theoretical} this implies that 
\begin{align}
    \KL( \Pi^{1:M}_{\infty} , \tau^*) \le \frac{1}{2} \bbE_{U} \Big[ | V^*(U) - V_\infty(U) |^2 \Big]
\end{align}
where $U \sim \Pi^{1:M}_{\infty}$. We now calculate 
\begin{align}
    \bbE_{U} \Big[ | V^*(U) - V_\infty(U) |^2 \Big] &= \bbE_{U} \Big[ | \bbE_{\xi} \big[ \ell_N( \mu_U(x_{1:N})) -\ell_N ( \mu_U(x_{1:N}) + \sqrt{\Sigma} \xi )  \big]  |^2 \Big] \\
    &\le \bbE_{U} \bbE_\xi \Big[ |  \ell_N( \mu_U(x_{1:N})) -\ell_N ( \mu_U(x_{1:N}) + \sqrt{\Sigma} \xi ) |^2  \Big] \\
    &\le \kappa^2 \bbE_{U} \bbE_\xi \Big[ || \sqrt{\Sigma}(x_{1:N}) \xi ) ||^2   \Big] \\
    &=\kappa^2 \bbE \big[ || \sqrt{\Sigma}(x_{1:N}) \xi ) ||^2   \big] \\
    &= \kappa^2 \text{tr}[\Sigma(x_{1:N})],
\end{align}
where the first inequality is due to Jensen's inequality and the second inequality uses the Lipschitz continuity of $\ell_N$.

We combine the results to obtain
\begin{align}
    \KL\big( \Pi_{\infty}, \Pi^*_M \big) &= \KL( \Pi^{1:M}_{\infty}, \tau^*) \le \frac{1}{2} \kappa^2 \text{tr}[ \Sigma(x_{1:N}) ].
\end{align}
Furthermore, the trace can be simplified to 
\begin{align}
    \text{tr}[ \Sigma(x_{1:N})] &= \sum_{n=1}^N r(x_n, x_n) - \sum_{n=1}^M \lambda_m \text{tr}[ e_m(x_{1:N})^T e_m(x_{1:N}) ] \\
    &= \sum_{n=1}^N r(x_n, x_n) - \sum_{m=1}^M \lambda_m \sum_{n=1}^N (e_m(x_n))^2. \label{eq:trace}
\end{align}
Furthermore note that 
\begin{align}
    r(x,x') &= \langle C k_x(\cdot), k_{x'}(\cdot) \rangle \\
    &= \sum_{m=1}^\infty \lambda_m \langle k_x , e_m \rangle \langle k_{x'}, e_m \rangle \\
    &= \sum_{m=1}^\infty \lambda_m e_m(x) e_m(x')
\end{align}
for all $x,x' \in \calX$. We plug this expression back into \eqref{eq:trace} and obtain 
\begin{align}
    \text{tr}[\Sigma(x_{1:N})] = \sum_{m=M+1}^\infty \lambda_m \sum_{n=1}^N ( e_m(x_n))^2. 
\end{align}
Furthermore we know that for $X_n \sim \nu$ it holds that 
\begin{align}
    \bbE \big[ (e_m(X_n))^2 \big] &= \int (e_m(s))^2 \, d\nu(s)  \\
    &= \lambda_m,
\end{align}
due to the isometric isomorphism $S$ introduced in Appendix \ref{ap:sec:GRE_in_RKHS}. See also \citet[Theorem 4.51]{SteChr2008}. We combine all calculations and obtain for fixed $x_1,\hdots,x_N$ the inequality
\begin{align}
    \KL\big( \Pi_{\infty}, \Pi^*_M \big) \le \frac{\kappa^2}{2} \sum_{m=M+1}^\infty \lambda_m \sum_{n=1}^N ( e_m(x_n))^2
\end{align}
and further for $X_1,\hdots,X_N \sim \nu$ 
\begin{align}
    \bbE_{X_{1:N}} \Big[  \KL\big( \Pi_{\infty}, \Pi^*_M \big) \Big] \le \frac{N \kappa^2}{2} \sum_{m=M+1}^\infty \lambda_m^2
\end{align}
which proves the claim.
\end{proof}

The assumptions that $\ell_N$ is Lipschitz continuous and convex is indeed often satisfied in practice. For example, the binary classification loss with logistic link function introduced in Section \ref{sec:loss} is both convex and Lipschitz continuous \citep{SteChr2008}. Unfortunately, the squared loss does not satisfy Lipschitz continuity (even though it is convex), but in this case the PLS measure $\Pi_{\infty}$ actually coincides with ${\Pi}^{*}_{M}$. 

\subsection{Proof of Lemma \ref{lemma:decay-eigenvalues}}
\label{sec:proof:decay-eigenvalues}

\begin{proof}
    First, consider the Gaussian kernel. 
    Following the findings of \citet{ritter1995multivariate} summarised in \Cref{tab:spectral-decay} and by virtue of the big-$\mathcal{O}$ definition, we can always find $M$ so that for all $m\geq M$, $\lambda_m^2 \leq \lambda_m \leq \exp\{-(\alpha/D) x \}$.
    Further, we can always choose $M$ large enough so that the function $x \mapsto \exp\{-(\alpha/D) x \}$ is monotonically decreasing for all $x\geq M$. 
    Thus, 
    \begin{IEEEeqnarray}{rCl}
        \sum_{m=M+1}^{\infty}\lambda_m^2
        & \leq &
        \int_{M}^{\infty} \exp\{-(\alpha/D) x \} dx
        = \frac{D}{\alpha}\exp\{-(\alpha/D)M\}.
        \nonumber
    \end{IEEEeqnarray}

    Using similar arguments, in the case of Mat\'ern kernels with uniformly supported distributions on intervals, we can choose $M$ so that for all $m\geq M$, it also holds that we have $\lambda_m^2 \leq m^{-4(l+1)}\left( \log(m)^{2(D-1)(l+1)} \right)^2 \leq m^{-4l-3}$.
    Since $x\mapsto x^{-4l-3}$ is monotonically decreasing, it therefore holds that
    \begin{IEEEeqnarray}{rCl}
        \sum_{m=M+1}^{\infty}\lambda_m^2
        & \leq &
        \int_{M}^{\infty} x^{-4l-3} dx
        = \frac{1}{3(l+3)}M^{-3(l+1)}.
        \nonumber
    \end{IEEEeqnarray}

    For the general case, we follow the same arguments after noting that $\lambda_m = o(m^{-1})$ due to the definition of trace-class operators with spectral decomposition, for which both $\sum_{m=1}^{\infty}\lambda_m < \infty$ and $\lambda_m \geq 0$, so that $\lambda_m = o(m^{-1})$. 
    Hence, $\lambda_m^2 = o(m^{-2})$ so that for some $M$ and all $m \geq M$ it holds that
     \begin{IEEEeqnarray}{rCl}
        \sum_{m=M+1}^{\infty}\lambda_m^2
        & \leq &
        \int_{M}^{\infty} x^{-2} dx
        = M^{-1},
        \nonumber
    \end{IEEEeqnarray}
    which concludes the proof.
\end{proof}

\subsection{Proof of Lemma \ref{lemma:Gaussian-case-optimal}}


\begin{proof}
According to Theorem \ref{thm:optimal-measure-calQ} the potential of $\tau^*$ is given as ($h_m = e_m$)
\begin{align}
   V^*(u) &=  \bbE_{\xi} \Big[ \ell_N \big( \mu_u(x_{1:N}) + \sqrt{ \Sigma(x_{1:N})} \xi \big) \Big]  + \frac{1}{2} u^T \Big( \langle C h, h \rangle \Big)^{-1}u \\
   &=\frac{1}{2 \sigma^2} \bbE_{\xi} \left[||Y_{1:N} -\mu_u(x_{1:N}) + \sqrt{ \Sigma(x_{1:N})} \xi ||^2  \right] + \frac{1}{2} u^T \Lambda_M^{-1} u
\end{align}
where $\xi \sim \mathcal{N}(0,I_M)$ and $\Lambda_M =\text{diag}(\lambda_1, \hdots,\lambda_M)$. Note that 
\begin{align}
    Y_{1:N} -\mu_u(x_{1:N}) + \sqrt{ \Sigma(x_{1:N})} \xi \sim \mathcal{N} \big( y_{1:N}- \mu_u(x_{1:N}), \Sigma(x_{1:N}) \big)
\end{align}
and so by Equation 378 in \citet{petersen2008matrix} we obtain
\begin{align}
     V^*(u) &=  \text{tr}[\Sigma(x_{1:N}) ] + \frac{1}{2 \sigma^2} ||Y_{1:N}- \mu_u(x_{1:N}) ||^2 + u^T \Lambda_M^{-1} u \\
     &= \frac{1}{2 \sigma^2} ||Y_{1:N}- \mu_u(x_{1:N}) ||^2 + u^T \Lambda_M^{-1} u + \text{ const.} \\
     &= \frac{1}{2 \sigma^2} \sum_{n=1}^N (y_n - \mu_u(x_n))^2  + u^T \Lambda_m^{-1} u + \text{ const.} \\
     &= V_\infty(u) + \text{ const.}, 
\end{align}
where the last equality follows from $\mu_u(x_{1:N}) = e^{1:M}(X)^Tu$ which we calculated in \eqref{eq:mu_u-is-ortho}. We therefore conclude that $V^*$ and $V_\infty$ are the same up to an additive constant in dependent of $u$ which implies that $\tau^* = \Pi^{1:M}_{\infty}$. Since both $\Pi^*_M$ and $\Pi_{\infty}$ are contained in $\calQ_M$ and are parameterised by $\tau^*$ and $\Pi^{1:M}_{\infty}$, respectively, this immediately implies that $\Pi_{\infty}= \Pi^*_M$.
\end{proof}

\section{Projected Langevin Sampling for other inducing functions}\label{ap:sec:fidi-dynamics-projected}
The idea of PLS can be generalised to inducing functions other than $\widehat{e}^{1:M}$. To this end, let now $P : H_k \to H_M$ be the orthogonal projection onto $H_M := \text{span}\big( \{h_1, \hdots, h_M \}\big) \subset H_k$, $h=(h_1,\hdots,h_M)$ given as
\begin{align}
    P f =   h(\cdot)^T \langle h,h \rangle^{-1} \langle f, h \rangle. \label{eq:ortho-proj-op},
\end{align}
where $ ( \langle h, h \rangle )_{m,m'}=\langle h_m, h_{m'} \rangle$ and $(\langle f, h \rangle )_m = \langle f, h_m \rangle$ for all $m=1,\hdots,M$. We first prove that $P$ is indeed given by expression \eqref{eq:ortho-proj-op}. 
\begin{lemma}\label{Lemma:ortho-projection}
 The orthogonal operator $P:H_k \to H_M$ is given by expression \eqref{eq:ortho-proj-op}.
\end{lemma}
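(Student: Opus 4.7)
The plan is to characterise $Pf$ through the two defining properties of orthogonal projection onto a closed subspace: (i) $Pf \in H_M$, and (ii) $\langle f - Pf, g \rangle = 0$ for all $g \in H_M$. Since $H_M$ is the linear span of $h_1, \ldots, h_M$, condition (ii) is equivalent to $\langle f - Pf, h_m\rangle = 0$ for every $m = 1,\ldots, M$.

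Concretely, I would proceed as follows. First, by (i), write $Pf = \sum_{m=1}^M \alpha_m h_m = h(\cdot)^T \alpha$ for some coefficient vector $\alpha = (\alpha_1, \ldots, \alpha_M)^T \in \mathbb{R}^M$ to be determined. Second, substitute this ansatz into the orthogonality condition (ii): for each $m' = 1, \ldots, M$, we obtain
\begin{IEEEeqnarray}{rCl}
\langle f, h_{m'} \rangle = \langle Pf, h_{m'}\rangle = \sum_{m=1}^M \alpha_m \langle h_m, h_{m'}\rangle. \nonumber
\end{IEEEeqnarray}
Stacking these $M$ scalar equations gives the linear system $\langle f, h\rangle = \langle h, h\rangle\, \alpha$, where $\langle h, h\rangle \in \mathbb{R}^{M\times M}$ is the Gram matrix with entries $(\langle h, h\rangle)_{m,m'} = \langle h_m, h_{m'}\rangle$ and $\langle f, h\rangle \in \mathbb{R}^M$ has entries $\langle f, h_m\rangle$. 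Third, under the implicit assumption that $h_1, \ldots, h_M$ are linearly independent (so that $H_M$ is genuinely $M$-dimensional), the Gram matrix is symmetric positive-definite and hence invertible, allowing us to solve $\alpha = \langle h, h\rangle^{-1} \langle f, h\rangle$. Substituting this back into $Pf = h(\cdot)^T \alpha$ yields the claimed formula \eqref{eq:ortho-proj-op}.

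There is no real obstacle here: the proof is a textbook application of the normal equations for orthogonal projection onto a finite-dimensional subspace of a Hilbert space. The only mild subtlety worth flagging is the invertibility of the Gram matrix $\langle h, h\rangle$, which relies on linear independence of the inducing functions; if this ever fails in practice one would simply replace $\langle h, h\rangle^{-1}$ by the Moore--Penrose pseudoinverse and restrict $H_M$ to the actual span, but within the framework of this lemma we may take linear independence as given.
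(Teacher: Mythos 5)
Your proof is correct and follows essentially the same route as the paper's: the paper characterises $Pf$ as the minimiser of $\|f-g\|_k$ over $g\in H_M$ and solves the resulting finite-dimensional least-squares problem, which yields exactly the normal equations $\langle h,h\rangle\,\alpha=\langle f,h\rangle$ that you derive directly from the orthogonality condition. Your explicit remark on the invertibility of the Gram matrix is a point the paper leaves implicit, but otherwise the two arguments coincide.
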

\begin{proof}
By definition of the orthogonal projection $P$ onto $H_M$ satisfies
\begin{align}
    \| f - P(f) \|_k = \argmin_{g \in H_M} \| f -g \|_k, 
\end{align}
where $\| \cdot \|_k$ is the norm induced by the RKHS inner product. Every element in $g \in H_M$ can, by definition, be written as 
\begin{align}
    g(x) = \sum_{m=1}^M \alpha_m h_m(x)
\end{align}
for $\alpha=(\alpha_1,\hdots, \alpha_M) \in \bbR^M$. Exploiting this fact leads to an finite-dimensional optimisation problem for $\alpha \in \bbR^M$ with solution 
\begin{align}
    \alpha^* =  \langle h ,h \rangle^{-1} \langle f, h \rangle.
\end{align}
and hence $P(f) = h(\cdot)^T \alpha^*$. 
\end{proof}

The next step is to derive an evolution equation for the inducing features $U:= (U_1, \hdots, U_M )^T$ with $U_m := \langle F ,h_m \rangle$ ($m=1,\hdots,M)$ from the projected Langevin SDE which is given as
\begin{align}
     d F(t) = -\left(  D \ell \big( P(F(t)) \big)  + C^{-1} F(t) \right) dt + \sqrt{2} d W(t) \label{eq:langevin-projected-infinite}.
\end{align}

\begin{theorem}\label{thm:SDE-for-U}
 Define $U(t) := \langle F(t) , h_m \rangle$ then $U(t)$ satisfies the SDE in $\bbR^M$ given as
\begin{align}
    d U(t) = &-   h(X) \big(\partial_2 c\big)\big(Y, P[F_t](X) \big) dt \nonumber 
    - \langle  C^{-1} F(t), h \rangle  dt +  \sqrt{2\langle h,h \rangle} d B(t).
\end{align}
\end{theorem}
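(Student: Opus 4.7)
The proof will apply It\^o's rule to the linear functionals $\phi_m: H_k \to \bbR$, $\phi_m(f) := \langle f, h_m \rangle$, for $m=1,\hdots,M$. Because these functionals are linear and bounded, their Fr\'echet derivatives are constant, with $D\phi_m(f) = h_m$ and $D^2\phi_m = 0$. Consequently, the It\^o formula on Hilbert spaces (cf. Chapter 4.4 of \citet{da2014stochastic}) reduces for each component $U_m(t) = \phi_m(F(t))$ to
\begin{IEEEeqnarray}{rCl}
dU_m(t) = \big\langle dF(t), h_m \big\rangle,
\nonumber
\end{IEEEeqnarray}
so all that remains is to project \eqref{eq:langevin-projected-infinite} onto $h_m$ term by term and then reassemble the $M$ components into a vector-valued equation.

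The plan is then to process the three terms of \eqref{eq:langevin-projected-infinite} individually. For the first term, we exploit the pointwise structure of $\ell$ as in \eqref{eq:loss-pointwise}: via the chain rule and the reproducing property,
\begin{IEEEeqnarray}{rCl}
D\ell(P[F(t)]) = \sum_{n=1}^N (\partial_2 c)\big(y_n, P[F(t)](x_n)\big)\, k(x_n, \cdot),
\nonumber
\end{IEEEeqnarray}
so that $\langle D\ell(P[F(t)]), h_m\rangle = \sum_{n=1}^N (\partial_2 c)(y_n, P[F(t)](x_n)) h_m(x_n)$. Stacking this across $m=1,\hdots,M$ yields precisely the matrix--vector expression $h(X)(\partial_2 c)(Y, P[F_t](X))$ appearing in the theorem. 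The second term is immediate: the prior drift simply becomes $\langle C^{-1}F(t), h\rangle$ by pairing with $h = (h_1,\hdots,h_M)$.

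The third term, the stochastic integral, is the step that requires the most care, and I would expect it to be the main technical obstacle. Setting $\widetilde{B}_m(t) := \langle W(t), h_m\rangle$, each $\widetilde{B}_m$ is a real-valued Brownian motion, but the $\widetilde{B}_m$ are \emph{not} independent: by the defining property of a cylindrical Wiener process on $H_k$ (see Chapter 4.1.2 of \citet{da2014stochastic}), the cross-covariation satisfies
\begin{IEEEeqnarray}{rCl}
\mathbb{E}\big[\widetilde{B}_m(t)\widetilde{B}_{m'}(t)\big] = \langle h_m, h_{m'}\rangle\, t,
\nonumber
\end{IEEEeqnarray}
so the vector $\widetilde{B}(t) = (\widetilde{B}_1(t), \hdots, \widetilde{B}_M(t))^T$ is an $\bbR^M$-valued Brownian motion with covariance matrix $\langle h, h\rangle\, t$. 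Assuming $\langle h, h\rangle$ is invertible (which holds whenever $h_1,\hdots,h_M$ are linearly independent, a natural assumption given their role as inducing features), we can write $\widetilde{B}(t) = \sqrt{\langle h, h\rangle}\, B(t)$ for a standard Brownian motion $B(t)$ in $\bbR^M$, which yields the noise term $\sqrt{2\langle h, h\rangle}\, dB(t)$ as claimed.

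Collecting the three contributions componentwise and reassembling them into vector form produces exactly the SDE stated in the theorem. The only subtlety beyond bookkeeping is the rigorous identification of the projected cylindrical Wiener increment, which is handled by the standard theory of Gaussian processes on Hilbert spaces and does not involve any further assumptions beyond those already imposed for \eqref{eq:langevin-projected-infinite} to be well-defined.
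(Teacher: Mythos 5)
Your proposal is correct and follows essentially the same route as the paper's proof: apply It\^o's rule to the linear map $\phi(f) = \langle f, h\rangle$ (whose second Fr\'echet derivative vanishes), project the drift terms using the chain rule and the reproducing property, and identify the projected cylindrical noise as $\sqrt{2\langle h,h\rangle}\,dB(t)$. The only difference is that you spell out the covariance computation for the projected Wiener increments, which the paper asserts without detail.
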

Here, we denote $h(X) :=  \big( h_m(x_n) \big)_{m,n=1}^{M,N} \in \bbR^{M \times N}$, 
$\big(\partial_2(Y, P[F_t](X))\big)_n := \partial_2c(y_n, (P[F_t](X))_n) $, 
$P [F_t](X) = h(X)^T \langle h,h \rangle^{-1} U(t)$,
$(\langle C^{-1} F(t), h \rangle)_m = \langle C^{-1} F(t), h_m \rangle) $ for $m=1,\hdots,M$, $n=1,\hdots,N$. Furthermore $B(t)$ is a standard Brownian motion in $\bbR^M$ and $\sqrt{\langle h, h \rangle}$ the square-root of the matrix $\langle h , h \rangle$.

\ \\

\begin{proof}
We apply Ito's Rule to \eqref{eq:langevin-projected-infinite} \citep[Chapter 4.4]{da2014stochastic} with $\phi: H \to \bbR^M$ defined as $\phi(f) := \langle f, h \rangle$ which gives the dynamics
\begin{align}
    d U(t) &= d \phi\big( F(t) \big) \\
    &= - \left(  \Big\langle D \ell\big( P[F(t)] \big), h \Big\rangle + \big\langle C^{-1}F(t), h \big\rangle \right) dt + \sqrt{2\langle h,h \rangle} d B(t). \label{eq:ito-raw}
\end{align}
For a loss $\ell$ of the form \eqref{eq:loss-pointwise}, we can simplify further by applying the chain-rule for Fréchet derivatives
\begin{align}
     \Big\langle D \ell\big( P[F(t)] \big), h \Big\rangle  = h(X)^T \big(\partial_2 c\big)\big(Y, (P [F_t])(X) \big) \label{eq:rewrite1}
\end{align}
which concludes the proof.
\end{proof}

The SDE for $U(t)$ in Theorem \eqref{thm:SDE-for-U} can not be simulated, since we do not have access to $\langle C^{-1} F(t), h \rangle$. However, for the specific choice $h_m = k(z_m, \cdot)$ where $Z=(z_1,\hdots,z_M)$ are samples from $\nu$, we can find an approximation for this term.

\begin{lemma}\label{eq:approximation-cov-op}
Let $z_1,\hdots, z_m$ be iid samples from $\nu$, $h = k_Z(\cdot)$ and let $C$ be the covariance operator defined in \eqref{eq:def-cov-op}. Then 
    \begin{align}
        \langle C^{-1} f, k_Z \rangle \approx M k(Z, Z)^{-1}  f(Z)
    \end{align}
for all $f \in H_k$.
\end{lemma}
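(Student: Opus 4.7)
The plan is to derive the approximation by combining the reproducing property, the defining integral equation for $C^{-1}$, and a Monte Carlo approximation of that integral using the sample points $z_1, \hdots, z_M$ themselves.

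First, I would formally set $g := C^{-1} f$ and use the reproducing property of the RKHS to rewrite each entry of the vector $\langle C^{-1} f, k_Z \rangle \in \bbR^M$ as
\begin{IEEEeqnarray}{rCl}
    \langle C^{-1} f, k(z_m, \cdot) \rangle = \langle g, k(z_m, \cdot) \rangle = g(z_m),
    \nonumber
\end{IEEEeqnarray}
so that $\langle C^{-1} f, k_Z \rangle = g(Z)$, where the right-hand side denotes the vector with $m$-th coordinate $g(z_m)$.

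Next, I would use the definition of $C$ in \eqref{eq:def-cov-op}, which gives $Cg = f$, i.e.\
\begin{IEEEeqnarray}{rCl}
    f(x) = \int k(x, x') \, g(x') \, d\nu(x')
    \quad \text{for all } x \in \calX.
    \nonumber
\end{IEEEeqnarray}
Since $z_1, \hdots, z_M \overset{\text{iid}}{\sim} \nu$, the right-hand side can be approximated by the Monte Carlo estimator
\begin{IEEEeqnarray}{rCl}
    f(x) \approx \frac{1}{M} \sum_{m=1}^M k(x, z_m) \, g(z_m).
    \nonumber
\end{IEEEeqnarray}
Evaluating this approximation at each $z_n$ for $n = 1, \hdots, M$ and writing the result in matrix form yields
\begin{IEEEeqnarray}{rCl}
    f(Z) \approx \frac{1}{M} k(Z, Z) \, g(Z).
    \nonumber
\end{IEEEeqnarray}

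Finally, since $k(Z,Z)$ is invertible (under the mild condition that $k$ is strictly positive definite and the $z_m$ are distinct, which holds almost surely under $\nu$ with full support), I would invert this linear system to obtain $g(Z) \approx M \, k(Z,Z)^{-1} f(Z)$, and combine with the first step to conclude
\begin{IEEEeqnarray}{rCl}
    \langle C^{-1} f, k_Z \rangle = g(Z) \approx M \, k(Z,Z)^{-1} f(Z).
    \nonumber
\end{IEEEeqnarray}

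The main obstacle is that $C^{-1}$ is an unbounded operator (cf. the footnote before equation \eqref{eq:langevin-infinite-dim}) whose domain is only $\operatorname{Im}(C) \subset H_k$, so one must either assume $f \in \operatorname{Im}(C)$ or interpret the identity $g = C^{-1} f$ in a formal or regularised sense. Since the statement is an approximation rather than an equality, I would not dwell on rigorously controlling the Monte Carlo error, but simply remark that the approximation improves as $M \to \infty$ by the law of large numbers applied pointwise in $x$.
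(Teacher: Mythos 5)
Your proposal is correct and follows essentially the same route as the paper: the reproducing property reduces $\langle C^{-1}f, k_Z\rangle$ to pointwise evaluations, and the Monte Carlo approximation $\nu \approx \frac{1}{M}\sum_m \delta_{z_m}$ turns the integral operator into the matrix $\frac{1}{M}k(Z,Z)$, which is then inverted. Your phrasing via $g = C^{-1}f$ and the remark about the domain of the unbounded operator $C^{-1}$ are minor (and welcome) refinements of the same argument.
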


\begin{proof}
Let now $h_m = k(z_m, \cdot)$ and hence $h= k_Z(\cdot)$. By reproducing property we have 
\begin{align}
    \langle C^{-1} f , h \rangle = C^{-1}f(Z).
\end{align}
We now use a standard Monte Carlo approach by assuming that
\begin{align}
    \frac{1}{M}\sum_{m=1}^{M} \delta_{z_m} \approx \nu.
\end{align}
Consequently, we approximate $C: H \to H$ via for all $x \in \calX$ via
\begin{align}
    Cf(x) &= \int k(x,x') f(x') \, d\nu(x') \\
    &\approx \frac{1}{M} \sum_{m=1}^M k(x, z_m) f(z_m) \\
    &=:\frac{1}{M} k(x, Z) f(Z)
\end{align}
which leads to $Cf(Z) \approx \frac{1}{M} k(Z,Z) f(Z)$ and ultimately to
\begin{align}
    (C^{-1}f)(Z) \approx M k(Z,Z)^{-1} f(Z) \label{eq:C-inv-approx}.
\end{align}
This concludes the proof.
\end{proof}

The combination of Theorem \ref{thm:SDE-for-U} and Lemma \ref{eq:approximation-cov-op} leads to a (approximate) SDE for $U(t)$ given as 
\begin{align}
    d U(t) = &-   k(Z,X) \big(\partial_2 c\big)\big(Y, k(X,Z) k(Z,Z)^{-1} U(t) \big) dt \nonumber \\
    &- M k(Z,Z)^{-1} U(t) dt +  \sqrt{2k(Z,Z)} dB(t) \label{eq:SDE-U-vector-appendix}
\end{align}
which can be simulated in $\bbR^M$. For large $t>0$ we hope that $U(t) \approx U | Y$. Furthermore, by Matheron's Rule, we can transform the samples from $U$ into posterior samples (cf. Appendix \ref{sec:ap:matherons-rule}). The covariance matrices are given as
\begin{align}\label{eq:cov-calcus-IP}
&\mathbb{C}( \langle F , k_{x^*}\rangle, \langle F , k_{x^*}\rangle) = r(x^*,x^*), 
&&\mathbb{C}(\langle F , k_{x^*}\rangle, U) = r(x^*,Z)   &&&\mathbb{C}[U,U] = r(Z,Z),
\end{align}
since $G(x):=\langle F , k(x,\cdot) \rangle$ is a GP with kernel $r$ (cf. Lemma \ref{lemma:GRE-GP-analogy}). 

Furthermore, we can also obtain the asymptotic distribution of the SDE $U(t)$ in closed form, since it is a preconditioned Langevin equation.

\begin{theorem}
Let $\big(U(t)\big)_{t \ge 0}$ be the solution to
the SDE in \eqref{eq:SDE-U-vector-appendix} with $U(0)=U_0$ for a given initial value $U_0 \in \bbR^M$. Then 
$
    U(t) \overset{\mathcal{D}}{\longrightarrow} \widehat{Q}_U
$
for $t \to \infty$ where $\widehat{Q}_U(du) = \widehat{q}_U(u) du$ with
\begin{align}
 \widehat{q}_U(u) \propto \exp\left( - \sum_{n=1}^N c\big(y_n, k_Z(x_n)^T k(Z,Z)^{-1}  u \big) - \frac{M}{2} u^T k(Z,Z)^{-2} u  \right),    
\end{align}
for $u \in \bbR^M$. 
\end{theorem}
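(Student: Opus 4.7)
The plan is to recognise the SDE in \eqref{eq:SDE-U-vector-appendix} as a preconditioned Langevin diffusion with preconditioner $A := k(Z,Z)$ and potential
\[
V(u) := \sum_{n=1}^N c\bigl(y_n,\, k_Z(x_n)^{\top} k(Z,Z)^{-1} u\bigr) + \frac{M}{2}\, u^{\top} k(Z,Z)^{-2} u.
\]
Once this identification is made, the stationary distribution $\propto \exp(-V)$ and convergence in distribution will follow from standard ergodicity results for SDEs of the form $dU = -A\nabla V(U)\,dt + \sqrt{2A}\,dB$.

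The first step is to compute $\nabla V$ by the chain rule. The quadratic term yields $M k(Z,Z)^{-2} u$, and the $n$-th summand of the data fit term contributes $(\partial_2 c)(y_n,\, k_Z(x_n)^{\top} k(Z,Z)^{-1} u)\cdot k(Z,Z)^{-1} k_Z(x_n)$. Stacking across $n=1,\dots,N$ gives
\[
\nabla V(u) \;=\; k(Z,Z)^{-1}\, k(Z,X)\,(\partial_2 c)\bigl(Y,\, k(X,Z) k(Z,Z)^{-1} u\bigr) \;+\; M k(Z,Z)^{-2} u,
\]
so multiplying by $A = k(Z,Z)$ reproduces exactly the drift in \eqref{eq:SDE-U-vector-appendix}. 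The diffusion is already in the matching form $\sqrt{2A}\,dB$, so the SDE is canonical preconditioned Langevin.

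The second step is to verify that $\widehat{q}_U \propto \exp(-V)$ is an invariant density by direct substitution into the stationary Fokker--Planck equation $-\nabla\cdot\bigl(-A\nabla V \cdot p\bigr) + \sum_{i,j} A_{ij}\,\partial_i\partial_j p = 0$. Using $\nabla p = -p\,\nabla V$ and $\partial_i\partial_j p = p\bigl((\partial_i V)(\partial_j V) - \partial_i\partial_j V\bigr)$, both sides reduce to $p\,[(\nabla V)^{\top} A \nabla V - \mathrm{tr}(A\,\mathrm{Hess}\,V)]$, confirming invariance for any symmetric positive definite constant $A$.

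The final step is to upgrade invariance to convergence in distribution. Since $A = k(Z,Z)$ is positive definite (under the mild assumption that the inducing inputs $z_1,\dots,z_M$ are distinct and $k$ is strictly positive definite), the diffusion is uniformly elliptic. Moreover, $V$ is coercive thanks to the quadratic regulariser $\tfrac{M}{2} u^{\top} k(Z,Z)^{-2} u$, provided $(\partial_2 c)(y_n,\cdot)$ grows at most polynomially, which is the case for standard likelihoods used in practice. One then invokes classical Meyn--Tweedie type results (e.g.\ Roberts and Tweedie 1996) to obtain geometric ergodicity and hence $U(t) \overset{\mathcal{D}}{\longrightarrow} \widehat{Q}_U$. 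The main obstacle is this ergodicity step: the argument is textbook when $c$ is convex with controlled growth, but guaranteeing a Lyapunov drift condition for more exotic likelihoods would require additional hypotheses beyond what is assumed here.
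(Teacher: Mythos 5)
Your proposal is correct and follows essentially the same route as the paper: identify $A=k(Z,Z)$ as the preconditioner, verify that the drift equals $-A\nabla V$ for the stated potential $V$, and conclude via standard ergodicity results for preconditioned Langevin diffusions (the paper simply cites a reference for this last step, whereas you additionally verify invariance through the Fokker--Planck equation and flag the coercivity/growth conditions needed for convergence, which is a more careful treatment of the same argument).
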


\begin{proof}
First, define $A:=  k(Z,Z) \in \bbR^{M \times M}$. Then the SDE in \eqref{eq:SDE-U-vector-appendix} can be rewritten as 
\begin{align}
    dU(t) = - A \Big( A^{-1} k_Z(X) \partial_2 \big(Y, k_Z(x_n)^T A^{-1} U(t)\big) + M A^{-1}  
    k(Z,Z)^{-1}
    U(t)  \Big) dt  +  \sqrt{2 A} d \beta(t), \label{eq:rewritten-sde}
\end{align}
Further, we define the potential $V : \bbR^M \to \bbR$ as 
\begin{align}
    V(u) = \sum_{n=1}^N c(y_n, k_Z(x_n)^T A^{-1}u) + \frac{M}{2} u^T A^{-2} u
\end{align}
and calculate the gradient $\nabla V$ as 
\begin{align}
    \nabla V (u) =A^{-1} k(Z,X)  (\partial_2 c)(Y, k(X,Z) A^{-1} u ) + M A^{-2} u.
\end{align}
The SDE \eqref{eq:SDE-U-vector-appendix} can therefore be rewritten as 
\begin{align}
    dU(t) =  - A \nabla V \big( U(t) \big) dt + \sqrt{2A} d\beta(t)
\end{align}
which is the preconditioned Langevin diffusion with potential $V$. It is known \citep{bhattacharya2023fast} that 
\begin{align}
    U(t) \overset{\mathcal{D}}{\longrightarrow} \frac{1}{\kappa} \exp( - V(u) )
\end{align}
for $t \to \infty$ where $\kappa:= \int \exp(-V(u)) \, du $ is the normalising constant.
\end{proof}

\section{Implementation Details}\label{ap:sec:implementation-details}

\subsection{Hyperparameter selection}

For fair comparison, we shared the same hyperparameters for $r$ across PLS and SVGP. We used an ARD kernel for $k$ when constructing $r$, tuning the hyperparameters of $k$. For data sets with $N\leq2000$ observations (or $N\leq1000$ for classification), we learned the hyperparameters of $k$ by maximising the exact marginal log-likelihood of a GP with $k$ and a mean zero prior. For a data set with more than $2000$ ( or $1000$ for classification) observations we use the following heuristic taken from \citet{{lin2024sampling}}:

\begin{enumerate}
    \item Randomly select a centroid uniformely at random the training data.
    \item Select a subset of size $2000$ (or $1000$ for classification) with the smallest Euclidean distance to the centroid.
    \item Learn kernel hyperparmeters on the data through regression GP marginal likelihood with kernel $k$ and mean zero prior for this subset.
\end{enumerate}
Repeat the above procedure $10$ (or $5$ for classification) and average the learned hyperparameters

The inducing points $z_1, \dots z_M \in \calX$ used for the Nyström method in Appendix \ref{sec:ap:nystroem-method} were selected following the greedy variance selection method in \cite{burt2020gpviconv} and \cite{chen2018fast}.

\subsection{Projected Langevin Sampling Algorithm}
Let $\eta >0$ be the step-size and $T >0$ be the the end time of our simulation. Define now $U(t):= \widehat{F}^{1:M}(t)$ where $\widehat{F}^{1:M}(t)$ is the solution to the SDE \eqref{eq:approx-langevin-components:ap} and $t_i := i \eta$ for $i=0,\hdots, I$ with $I = \lfloor T/\eta \rfloor$ and further
\begin{align}
    &\widehat{U}(0) \sim Q_0 \\
    &\widehat{U}(t_{i+1}) : = \widehat{U}(t_i) 
    -  \eta \widehat{e}^{1:M}(X) \big(\partial_2 c\big)\big(Y, \widehat{e}^{1:M}(X)^T  \widehat{U}(t_i) \big) -\eta  \widehat{\Lambda}^{-1} \widehat{U}(t_i)  +  \sqrt{2\eta} \xi_i \label{eq:SDE-U-discrete},
\end{align}
where $\eta >0$ is the step size and $\{\xi_i\}_{i=1}^I$ are i.i.d. $\mathcal{N}(0, I_M)$. Here, we define 
\begin{align}
    (\partial_2 c)(Y, \widehat{Y}) &:= \Big(\partial_2 c( y_n , \widehat{y}_n) \Big)_{n=1}^N \in \bbR^N, && \widehat{e}^{1:M}(X):= \Big( \widehat{e}^m(x_n) \Big)_{m,n=1}^{M,N} \in \bbR^{M \times N}
\end{align}
for all $Y,\widehat{Y} \in \bbR^N$ and $X=(x_1,\hdots,x_N) \in \calX^N$.

Notice that \eqref{eq:SDE-U-discrete} is the Euler-Maruyama discretisation of the SDE \eqref{eq:approx-langevin-components:ap} and therefore $\widehat{U}(t_i) \approx U(t_i)=\widehat{F}^{1:M}(t_i)$ for small enough $\eta$.

\subsection{Likelihood functions}\label{likelihood-functions}

\paragraph{Regression} As discussed in Section \ref{sec:BKR-BKC} the Gaussian likelihood $p(y|f) = \mathcal{N}(f(X), \sigma^2 I_N)$ corresponds to the choice $c: \bbR \times \bbR \to \bbR$ with $c(y,\widehat{y}) = \frac{1}{2\sigma^2} (y-\widehat{y})^2$ and consequently
\begin{align}
    \partial_2 c(y, \widehat{y}) = \frac{1}{\sigma^2}( \widehat{y}-y).
\end{align}

\paragraph{Classification} In binary classification, we assume a Bernoulli data, i.e.
\begin{align}
    Y_n | F= f \sim \text{Bernoulli}\big(\phi(f(x_n) \big).
\end{align}
This gives rise to the cost $c: \{0,1\} \times \bbR \to \bbR$ with
\begin{align}
    c(y, \widehat{y}) = - \log p(y|f) = - y \log \phi(\widehat{y}) - (1-y) \log\big(1- \phi( \widehat{y})\big),
\end{align}
where $\widehat y = f(x)$. For our experiments we use the logistic function $\phi(\widehat{y}) = \big(1+ \exp(-\widehat{y}))^{-1} $. Due to the well-known property $\phi'(\widehat{y}) = \phi(\widehat{y}) \big( 1- \phi(\widehat{y}) \big)$ we obtain
\begin{align}
    \partial_2 c(y, \widehat{y}) =  - y \big(1- \phi(\widehat{y})\big) + (1-y) \phi(\widehat{y})
\end{align}
for $y\in\{0,1\}$, $\widehat{y} \in \bbR$.

\paragraph{Poisson Model} 
We choose a Poisson regression model where we assume 
\begin{align}
   Y_n | F=f \sim \text{Poisson}\big(( f(x_n)^2 \big).
\end{align}
This gives rise to the cost 
\begin{align}
    c(y_n, f(x_n) ) &= - 2 y_n \log |f(x_n) | + \big( f(x_n) \big)2 \\
    \partial_2 c(y_n, f(x_n))&= - 2 y_n \frac{\text{sign}(f(x_n))}{| f(x_n)|} + 2 f(x_n) \\
    &=- \frac{2y_n}{f(x_n)} + 2 f(x_n)
\end{align}
for $y_n \in \bbN_0$ and $f(x_n) \in \bbR$.

\paragraph{Multimodal Regression with unknown shift}

Assume that the data $Y$ is generated as 
$$
Y_n | F= f  \sim f(x_n) + Z \cdot c + \sigma \epsilon
$$
where $c \in \mathbb{R}$, $Z \sim B(\alpha)$, $\epsilon \sim \mathcal{N}(0,1)$ and $\sigma > 0$. Essentially, there is a latent variable $Z$ which is unbobserved that cases an offset $c$. This leads to a mixture likelihood for $Y$ given as
$$
p(y_n | f(x_n) )= \alpha \mathcal{N}(y_n | f(x_n)+c, \sigma^2) + (1-\alpha) \mathcal{N}(y_n | f(x_n), \sigma^2)
$$
Consequently, we obtain the cost
$$
c(y_n,f(x_n)) = - \log p(y_n|f(x_n)) = - \log  \Big( \alpha \mathcal{N}(y_n | f(x_n)+c, \sigma^2) + (1-\alpha) \mathcal{N}(y_n | f(x_n), \sigma^2) \Big).
$$
and the partial derrivative with respect to second component is
\begin{align*}
\partial_2 c(y_n, f(x_n)) = & - \frac{1}{\sqrt{2\pi} \sigma^3 p(y_n|f(x_n))} \Big( 
    \alpha (y_n - f(x_n) - c) \exp\Big(- \frac{(y_n - f(x_n) - c)^2}{2 \sigma^2}\Big) \\
    & + (1-\alpha) (y_n - f(x_n)) \exp\Big(- \frac{(y_n - f(x_n))^2}{2 \sigma^2}\Big)
\Big).
\end{align*}

\SetKwInput{Initialise}{Initialise} 
\SetKwInput{Input}{Input}

\begin{algorithm}
  \caption{Projected Langevin Sampling}
  \Input{input data $x_{1:N}$, targets $y_{1:N}$, 
  kernel $k$, 
  inducing points $z_{1:M}$,
  step size $\eta >0$,
  time horizon $T$,
  initialisation prob. measure $Q_0 \in \mathcal{P}(\calX)$,
  number of samples $J$,
  new points $x_{1:N_*}$
  }
  \KwResult{Samples $F_1(x_{1:N_*}), \hdots, F_J(x_{1:N_*}) \approx F(x_{1:N_*})|y_{1:N}$}
  \For{$j=1,\hdots,J$}{
    Initialise $\widehat{U}_j(t_0) \sim Q_0$ \\
    \For{$i=0,\hdots,T/\eta$}{
     Generate $\widehat{U}_j(t_{i+1}) $ from $\widehat{U}_j(t_i)$ according to the update rule in \eqref{eq:SDE-U-discrete}
    }
    Sample $\big( G_j(X^*), \langle G_j,h \rangle\big) \sim \mathcal{N}(0, R_{N_*,M})$ with 
    \begin{align} \label{eq:R-matrix}
    R_{N_*, M} :=
        \begin{bmatrix}
            r(x_{1:N_*}, x_{1:N_*}) &  \widehat{e}^{1:M}(x_{1:N_*})^T \widehat{\Lambda}_M  \\
            \widehat{\Lambda}_M \widehat{e}^{1:M}(x_{1:N_*}) &  \widehat{\Lambda}_M
        \end{bmatrix}   \in \bbR^{(N_*+M) \times (N_*+M)}
    \end{align}
    Calculate
     \begin{IEEEeqnarray}{rCl}
        F_j(x^*_{1:N_*}) = G_j(x^*_{1:N_*}) + \widehat{e}^{1:M}(x^*_{1:N_*})^{\top} \Big( \widehat{F}^{1:M}_j(T) - \langle G_j, \widehat{e}^{1:M} \; \rangle \Big)
        \nonumber
    \end{IEEEeqnarray}
    Here,  $\widehat{e}^{1:M}(x^*_{1:N_*}) \in \bbR^{M \times N_*}$  is the matrix whose entry at  $(m,n)$ is  $\widehat{e}_m(x^*_n)$ and $\widehat{\Lambda}_M := \text{diag}( \widehat{\lambda}_1, \hdots, \widehat{\lambda}_M) \in \bbR^{M \times M}$ is the diagonal matrix with entries $\widehat{\lambda}_1, \hdots, \widehat{\lambda}_M$ (see Appendix \ref{sec:ap:nystroem-method} for a definition).
  }
  \label{alg:particle-method}
\end{algorithm}

\subsection{Time and Space Complexity} 
\label{sec:time-and-space-complexity}

This section discusses the time and space complexity requirements of producing $J \in \bbN$ posterior samples from $F|Y$ with our method.

\paragraph{Training} 
We calculate the spectral decomposition of $\frac{1}{M} k(z_{1:M},z_{1:M})$ and store the result which can be done in $\mathcal{O}(M^3)$. The update in \eqref{eq:SDE-U-discrete} requires only matrix multiplications which are each dominated by $\mathcal{O}(NM)$.

The total costs therefore are $\mathcal{O}\big(M^3 + J NM\big)$. These costs could be reduced further by batch-approximations of the gradient. However, the data set considered in this paper serve illustration purposes only and a batch-approximation was not required.

\paragraph{Prediction} Let $x_{1:N_*} \in \calX^{N_*}$ be a set of input points for which we want to generate posterior samples, i.e. $F_j(x_{1:N_*}) \approx F(x_{1:N_*}) | y_{1:N}$. Generating one sample in \eqref{eq:R-matrix} requires jointly sampling $\big( G_j(X^*), \langle G_j, h \rangle \big) $ from a multivariate Gaussian which is $\mathcal{O}\big((N_* + M)^3\big)$  for calculating the Cholesky (or the spectral) decomposition once and then $\mathcal{O}\big((N_* + M)^2\big)$ for generating samples. The matrix multiplication is $\mathcal{O} (N_* M )$. Hence we pay $\mathcal{O}\big((N_*+M)^3\big)$ upfront for the Cholesky decomposition and then $\mathcal{O}\big( J (N_*+M)^2 )$.

For special kernels, this could be further improved by using the exploiting the ideas discussed in \citet{wilson2020efficiently}.

\paragraph{Space Complexity} Space complexity in our implementation is $\mathcal{O}(NM + M^2)$ to store the $k(z_{1:M},x_{1:N})$ and $k(z_{1:M},z_{1:M})$ matrix.

\bibliography{biblio.bib}

\end{document}